\title{Convergence radius and sample complexity of ITKM algorithms for dictionary learning}
\author{Karin Schnass
\thanks{Karin Schnass is with the Department of Mathematics, University of Innsbruck, Technikerstra\ss e 13, 6020 Innsbruck, Austria,
karin.schnass@uibk.ac.at.}
}
\newcommand\ip[2]{\langle #1, #2\rangle}
\newcommand\sparsity{S}
\newcommand\ddim{d}
\newcommand\eps{\varepsilon}
\newcommand\dico{\Phi}
\newcommand\dicoset{\mathcal{D}}
\newcommand\atom{\phi}
\newcommand\pdico{\Psi}
\newcommand\ppdico{\bar\Psi}
\newcommand\patom{\psi}
\newcommand\patomn{\bar\psi}
\newcommand\noise{r}
\newcommand\nsigma{\rho}
\newcommand\deltaz{{\delta_0}}
\newcommand\Sset{\mathbb{S}} 
\newcommand\factor{{\kappa}}
\newcommand\epstarget{{\tilde\eps}}
\newcommand\diag{\operatorname{diag}}
\newcommand\signop{\operatorname{sign}}
\newcommand\tr{\operatorname{tr}}
\newcommand{\R}{{\mathbb{R}}}
\newcommand{\E}{{\mathbb{E}}}
\newcommand{\I}{{\mathbb{I}}}
\renewcommand{\P}{{\mathbb{P}}}
\theoremstyle{plain}
\newtheorem{theorem}{Theorem}[section]
\newtheorem*{otheorem}{Theorem - O}
\newtheorem{corollary}[theorem]{Corollary}
\newtheorem*{ocorollary}{Corollary - O}
\newtheorem{lemma}[theorem]{Lemma}
\newtheorem{sublemma}[theorem]{Sublemma}
\newtheorem{definition}{Definition}[section]
\newtheorem{algorithm}[theorem]{Algorithm}
\theoremstyle{remark}
\newtheorem{example}{Example}[section]
\newtheorem{remark}[example]{Remark}
\begin{document}

\maketitle

\begin{abstract}In this work we show that iterative thresholding and K means (ITKM) algorithms can recover a generating dictionary with K atoms from noisy $S$ sparse signals up to an error $\epstarget$ as long as the initialisation is within a convergence radius, that is up to a $\log K$ factor inversely proportional to the dynamic range of the signals, and the sample size is proportional to $K \log K \epstarget^{-2}$. The results are valid for arbitrary target errors if the sparsity level is of the order of the square root of the signal dimension $d$ and for target errors down to $K^{-\ell}$ if $S$ scales as $S \leq d/(\ell \log K)$. 
\end{abstract}

\begin{keywords}
\noindent dictionary learning, sparse coding, sparse component analysis, sample complexity, convergence radius, alternating optimisation, thresholding, K-means
\end{keywords}

\section{Introduction}\label{sec:intro}
The goal of dictionary learning is to find a dictionary that will sparsely represent a class of signals.
That is given a set of $N$ training signals $y_n\in \R^d$, which are stored as columns in a matrix $Y=(y_1 ,\ldots ,y_N)$, one wants to find a collection of $K$ normalised vectors $\atom_k\in \R^d$, called atoms, which are stored as columns in the dictionary matrix $\dico=(\atom_1, \ldots, \atom_K) \in \R^{d\times K}$, and coefficients $x_n$, which are stored as columns in the coefficient matrix $X=(x_1, \ldots ,x_N)$ such that
\begin{align}\label{abstractdl}
 \quad Y=\dico X \quad \mbox{and}\quad X \mbox{ sparse}.
\end{align}
Research into dictionary learning comes in two flavours corresponding to the two origins of the problem, the slightly older one in the independent component analysis (ICA) and blind source separation (BSS) community, where dictionary learning is also known as sparse component analysis, and the slightly younger one in the signal processing community, where it is also known as sparse coding.
The main motivation for dictionary learning in the ICA/BSS community comes from the assumption that the signals of interest are generated as sparse mixtures - random sparse mixing coefficients $X_0$ - of several sources or independent components - the dictionary $\dico_0$ - which can be used to describe or explain a (natural) phenomenon, \cite{olsfield96, lese00,krra00, kreutz03}. For instance in the 1996 paper by Olshausen and Field, \cite{olsfield96}, which is widely regarded as the mother contribution to dictionary learning, the dictionary is learned on patches of natural images, and the resulting atoms bear a striking similarity to simple cell receptive fields in the visual cortex. A natural question in this context is, when the generating dictionary $\dico_0$ can be identified from $Y$, that is, the sources from the mixtures. Therefore the first theoretical insights into dictionary learning came from this community, \cite{gethci05}. Also the first dictionary recovery algorithms with global success guarantees, which are based on finding overlapping clusters in a graph derived from the signal correlation matrix $Y^\star Y$, take the ICA/BSS point of view, \cite{argemo13, aganne13}.\\
The main motivation for dictionary learning in the signal processing community is that sparse signals are immensely practical, as they can be easily stored, denoised, or reconstructed from incomplete information, \cite{doelte06,masazi08,mabapo12}. Thus the interest is less in the dictionary itself but in the fact that it will provide sparse representations $X$. Following the rule 'the sparser - the better' the obvious next step is to look for the dictionary that provides the sparsest representations. So given a budget of $K$ atoms and $S$ non-zero coefficients per signal, one way to concretise the abstract formulation of the dictionary learning problem in \eqref{abstractdl} is to formulate it as optimisation problem, such as
\begin{align}
(P_{2,S})\qquad \min \|Y- \dico X\|_F \quad \mbox{s.t.} \quad \|x_n\|_0\leq S \quad \mbox{and} \quad \dico\in \dicoset,
\end{align}
where $ \|\cdot\|_0$ counts the nonzero elements of a vector or matrix and $\dicoset$ is defined as $\dicoset=\{\dico=(\atom_1, \dots ,\atom_K) : \|\atom_k\|_2 =1\}$.  While $(P_{2,S})$ is for instance the starting point for the MOD or K-SVD algorithms, \cite{enaahu99, ahelbr06}, other definitions of \emph{optimally} sparse lead to other optimisation problems and algorithms, \cite{zipe01,pl07, yablda09, mabaposa10, sken10, rubrel10}. The main challenge of optimisation programmes for dictionary learning is finding the global optimum, which is hard because the constraint manifold $\dicoset$ is not convex and the objective function is invariant under sign changes and permutations of the dictionary atoms with corresponding sign changes and permutations of the coefficient rows. In other words for every local optimum there are $2^K K!-1$ equivalent local optima.\\
So while in the signal processing setting there is a priori no concept of a generating dictionary, it is often used as auxiliary assumption to get theoretical insights into the optimisation problem. Indeed without the assumption that the signals are sparse in some dictionary the optimisation formulation makes little or no sense. For instance if the signals are uniformly distributed on the sphere in $\R^d$, in asymptotics $(P_{2,S})$ becomes a covering problem and the set of optima is invariant under orthonormal transforms. \\
Based on a generating model on the other hand it is possible to gain several theoretical insights. For instance, how many training signals are necessary such that the sparse representation properties of a dictionary on the training samples (e.g. the optimiser) will extrapolate to the whole class, \cite{mapo10, vamabr11, megr12, grjebaklse13}. What are the properties of a generating dictionary and the maximum sparsity level of the coefficients and signal noise such that this dictionary is a local optimiser or near a local optimiser given enough training signals, \cite{grsc10, gewawrXX, sc14, sc14b, bagrje14}.\\
An open problem for overcomplete dictionaries with some first results for bases, \cite{spwawr12, suquwr15}, is whether there are any spurious optimisers which are not equivalent to the generating dictionary, or if any starting point of a descent algorithm will lead to a global optimum? A related question (in case there are spurious optima) is, if the generating dictionary is the global optimiser? If yes, it would justify using one of the graph clustering algorithms for recovering the optimum, \cite{argemo13, aganne13, arbhgema14, bakest14}. This is important since all dictionary learning algorithms with global success guarantees are computationally very costly, while optimisation approaches are locally very efficient and robust to noise. Knowledge of the convergence properties of a descent algorithm, such as convergence radius (basin of attraction), rate or limiting precision based on the number of training signals, therefore helps to decide when it should take over from a global algorithm for fast local refinement, \cite{aganjaneta13}. \\
In this paper we will investigate the convergence properties of two iterative thresholding and K-means algorithms. The first algorithm ITKsM, which uses signed signal means, originates from the response maximisation principle introduced in \cite{sc14b}. There it is shown that a generating $\mu$-coherent dictionary constitutes a local maximum of the response principle as long as the sparsity level of the signals scales as $S=O(\mu^{-1})$. It further contains the first results showing that the maximiser remains close to the generator for sparsity levels up to $S=O(\mu^{-2}/\log K)$. For a target recovery error $\tilde \eps$ the sample complexity $N$ is shown to scale as $N=O(SK^3 \tilde \eps^{-2})$ and the basin of attraction is conjectured to be of size $O(1/\sqrt{S})$. \\
Here we will not only improve on the conjecture by showing that in its online version the algorithm has a convergence radius of size $O(1/\sqrt{\log K})$ but also show that for the algorithm rather than the principle the sample complexity reduces to $N=O(K\log K \tilde \eps^{-2}\log(\eps^{-1}))$ (omitting $\log \log$ factors). Again recovery to arbitrary precision holds for sparsity levels $S=O(\mu^{-1})$ and stable recovery up to an error $K^{-\ell}$ for sparsity levels $S=O(\mu^{-2}/(\ell\log K))$. We also show that the computational complexity assuming an initialisation within the convergence radius scales as $O(\log(\epstarget^{-1})dKN)$ or omitting log factors $O(dK^2\epstarget^{-2})$.\\
Motivated by the desire to reduce the sample complexity for the case of exactly sparse, noiseless signals, we then introduce a second iterative thresholding and K-means algorithms ITKrM, which uses residual instead of signal means. 
It has roughly the same properties as ITKsM apart from the convergence radius which reduces to $O(1/\sqrt{S})$ and the computational complexity, which scales as $O(dN(K + S^2))$ and thus can go up to $O(d^2NK)$ for $S=O(d)$. However, if $S=O(\mu^{-1})$ and the signals follow an exactly sparse, noiseless model, we can show that the sample complexity reduces to $N=O (K \epstarget^{-1}\log(\eps^{-1}))$ (omitting $\log \log$ factors). Our results are in the same spirit as the results for the alternating minimisation algorithm in \cite{aganjaneta13} but have the advantage that they are valid for more general coefficient distributions and a lower level of sparsity (S larger) resp. higher level of coherence, that the convergence radius is larger and that the algorithms exhibit a lower computational complexity. They are also close to some very recent results about several alternating minimisation algorithms, which are like the ITKMs based on thresholding, \cite{argemamo15}. Compared to our results they are essentially the same in terms of convergence radius and sample complexity but are only valid for sparsity levels $S =O(\mu^{-1})$ and up to a limiting precision (even in the exact sparse noiseless case). More interestingly \cite{argemamo15} contains a strategy for finding initialisations within a radius $O(1/\log K)$ to the generating dictionary, which is proven to succeed for sparsity levels $S=O(\mu^{-1})$. With slight modifications and using Tropp's results on average isometry constants, \cite{tr08}, this initialisation strategy could probably be proven to work also for sparsity levels up to $S=O(\mu^{-2}/(\ell\log K))$. However, its computational complexity seems to explode as $S$ grows.\\
The rest of the paper is organised as follows. After summarising notation and conventions in the following section, in Section \ref{sec:itksm} we re-introduce the ITKsM algorithm, discuss our sparse signal model and analyse the convergence properties of ITKsM. Based on the shortcomings of ITKsM we motivate the ITKrM algorithm in Section \ref{sec:itkrm}, and again analyse its convergence properties. In Section \ref{sec:sim} we provide numerical simulations indicating that the convergence radius of both ITKM algorithms is generically much larger and that sometimes ITKrM even converges globally from random initialisations. Finally in Section \ref{sec:discussion} we compare our results to existing work and point out future directions of research.
%

\section{Notations and Conventions}\label{sec:notations}
Before we join the melee, we collect some definitions and lose a few words on notations; usually subscripted letters will denote vectors with the exception of $\eps, \alpha, \omega$, where they are numbers, eg. $x_n \in \R^K$ vs. $\eps_k \in \R$, however, it should always be clear from the context what we are dealing with. \\
For a matrix $M$, we denote its (conjugate) transpose by $M^\star$ and its Moore-Penrose pseudo inverse by $M^\dagger$. We denote its operator norm by $\|M\|_{2,2}=\max_{\|x\|_2=1}\|Mx\|_2$ and its Frobenius norm by $\|M\|_F= \tr(M^\star M)^{1/2}$, remember that we have $\|M\|_{2,2}\leq \|M\|_F$.\\
We consider a {\bf dictionary} $\dico$ a collection of $K$ unit norm vectors $\atom_k\in \R^d$, $\|\atom_k\|_2=1$. By abuse of notation we will also refer to the $d \times K$ matrix collecting the atoms as its columns as the dictionary, i.e. $\dico=(\atom_1, \ldots \atom_K)$. The maximal absolute inner product between two different atoms is called the {\bf coherence} $\mu$ of a dictionary, $\mu=\max_{k \neq j}|\ip{\atom_k}{\atom_j}|$.\\
By $\dico_I$ we denote the restriction of the dictionary to the atoms indexed by $I$, i.e. $\dico_I=(\atom_{i_1},\ldots, \atom_{i_\sparsity} )$, $i_j\in I$, and by $P(\dico_I)$ the orthogonal projection onto the span of the atoms indexed by $I$, i.e. $P(\dico_I)=\dico_I \dico_I^\dagger$. Note that in case the atoms indexed by $I$ are linearly independent we have $\dico_I^\dagger = (\dico_I^\star \dico_I)^{-1} \dico_I^\star$. We also define $Q(\dico_I)$ the orthogonal projection onto the orthogonal complement of the span on $\dico_I$, that is $Q(\dico_I) = \I_d - P(\dico_I)$, where $\I_d$ is the identity operator (matrix) in $\R^d$.\\
(Ab)using the language of compressed sensing we define $\delta_I(\dico)$ as the smallest number such that all eigenvalues of $\dico^\star_I\dico_I$ are included in $[1-\delta_I(\dico), 1+\delta_I(\dico)]$ and the {\bf isometry constant} $\delta_S(\dico)$ of the dictionary as $\delta_S(\dico):=\max_{|I|\leq S} \delta_I(\dico)$. 
When clear from the context we will usually omit the reference to the dictionary. For more details on isometry constants, see for instance \cite{carota06}.\\
To keep the sub(sub)scripts under control we denote the {\bf indicator function of a set} $\mathcal V$ by $\chi(\mathcal V,\cdot)$, that is $\chi(\mathcal V, v)$ is one if $v \in \mathcal V$ and zero else. The set of the first $S$ integers we abbreviate by $\mathbb{S} = \{1,\ldots, S\}$.\\
We define the {\bf distance} of a dictionary $\pdico$ to a dictionary $\dico$ as
\begin{align}
d(\dico,\pdico):=\max_k \min_\ell \|\atom_k \pm \patom_\ell\|_2 = \max_k \min_\ell \sqrt{2-2|\ip{\atom_k}{\patom_\ell}|}.
\end{align}
Note that this distance is not a metric, since it is not symmetric. For example if $\dico$ is the canonical basis and $\pdico$ is defined by $\patom_i=\atom_i$ for $i\geq 3$, $\patom_1=(e_1 + e_2)/\sqrt{2}$, and $\patom_2=\sum_i \atom_1/\sqrt{d}$ then we have $d(\dico,\pdico)= 1/\sqrt{2}$ while $d(\pdico,\dico)=\sqrt{2-2/\sqrt{d}}$. 
A {\bf symmetric distance} between two dictionaries $\dico,\pdico$ could be defined as the maximal distance between two corresponding atoms, i.e.
\begin{align}
d_s(\dico,\pdico):=\min_{p \in \mathcal P} \max_k \|\atom_k\pm \patom_{p(k)}\|_2,
\end{align}
where $\mathcal P$ is the set of permutations of $\{1,\ldots, S\}$. Since locally the distances are equivalent we will state our results in terms of the easier to calculate asymmetric distance and assume that $\pdico$ is already signed and rearranged in a way that $d(\dico,\pdico)=\max_k \|\atom_k-\patom_k\|_2$. \\
We will make heavy use of the following decomposition of a dictionary $\pdico$ into a given dictionary $\dico$ and a perturbation dictionary $Z$. If $d(\pdico,\dico)=\eps$ we set $\|\patom_k - \atom_k\|_2=\eps_k$, where by definition $\max_k \eps_k = \eps$. We can then find unit vectors $z_k$ with $\langle \atom_k,z_k\rangle = 0$ such that 
\begin{align}\label{atomdecomp}
\patom_k = \alpha_k \atom_k + \omega_k z_k, \quad \mbox{for}, \quad\alpha_k:= 1-\eps^2_k/2  \quad\mbox {and} \quad\omega_k := (\eps_k^2 - \eps_k^4/4)^{\frac{1}{2}}.
\end{align}
The dictionary $Z$ collects the perturbation vectors on its columns, that is $Z=(z_1, \ldots z_K)$ and we define the diagonal matrices $A_I, W_I$ implicitly via 
\begin{align}\label{dicodecomp}
\pdico_I = \dico_I A_I + Z_I W_I,
\end{align} or in MATLAB notation $A_I= \diag (\alpha_I)$ with $\alpha_I=(\alpha_k)_{k\in I}$ and analogue for $W_I$.
Based on this decomposition we further introduce the short hand $b_k= \frac{\omega_k}{\alpha_k} z_k$ and $B_I= Z_I W_I A_I^{-1}$.\\
We consider a {\bf frame} $F$ a collection of $K\geq d$ vectors $f_k\in\R^\ddim$ for which there exist two positive constants $A,B$ such that for all $v \in \R^\ddim$ we have
\begin{align}\label{framebound}
A \|v\|^2_2 \leq \sum_{k=1}^K |\ip{f_k}{v}|^2 \leq B \|v\|^2_2.
\end{align}
If $B$ can be chosen equal to $A$, i.e. $B=A$, the frame is called tight and if all elements of a tight frame have unit norm we have $B=A=K/\ddim$. The operator $FF^\star$ is called frame operator and by \eqref{framebound} its spectrum is bounded by $A, B$. For more details on frames, see e.g. \cite{ch03}.\\
Finally we introduce the Landau symbols $O,o$ to characterise the growth of a function. We write 
\begin{align}
 f(t)=O(g(t)) &\quad\mbox{ if }\quad \lim_{t \rightarrow 0/\infty} f(t)/g(t)= C<\infty \notag \\
\mbox{and}\quad f(t)=o(g(t))&\quad \mbox{ if }\quad \lim_{t \rightarrow 0/\infty} f(\eps)/g(\eps)=0.\notag
\end{align}

\section{Dictionary Learning via ITKsM}\label{sec:itksm}
Iterative thresholding and K signal means (ITKsM) for dictionary learning was introduced 
as algorithm to maximise the $S$-response criterion
\begin{align}\label{thecriterion}
(P_{R1}) \hspace{2cm} \max_{\pdico \in \dicoset} \sum_n \max_{|I| = S} \| \pdico_I^\star y_n\|_1,
\end{align}
which for $S=1$ reduces to the K-means criterion, \cite{sc14b}. It belongs to the class of alternating optimisation algorithms for dictionary learning, which alternate between updating the sparse coefficients based on the current version of the dictionary and updating the dictionary based on the current version of the coefficients, \cite{enaahu99, ahelbr06, aganjaneta13}. As its name suggests, the update of the sparse coefficients is based on thresholding while the update of the dictionary is based on K signal means.
\begin{algorithm}[ITKsM one iteration] 
Given an input dictionary $\pdico$ and $N$ training signals $y_n$ do:
\begin{itemize}
\item For all $n$ find $I_{\pdico,n}^t= \arg\max_{I: | I |=S} \| \pdico_I^\star y_n\|_1$.
\item For all $k$ calculate
\begin{align}\label{itksm_batchsum}
\patomn_k=\frac{1}{N} \sum_n y_n \cdot \signop(\ip{\patom_k}{y_n}) \cdot  \chi(I_{\pdico,n}^t, k).
\end{align}
\item Output $\ppdico=(\patomn_1/\|\patomn_1\|_2, \ldots, \patomn_K/\|\patomn_K\|_2)$.
\end{itemize}
\end{algorithm}
The algorithm can be stopped after a fixed number of iterations or once a stopping criterion, such as improvement $d(\ppdico,\pdico)\leq \theta$ for some threshold $\theta$, is reached. Its advantages over most other dictionary learning algorithms are threefold. First it has very low computational complexity. In each step the most costly operation is the calculation of the $N$ matrix vector products $\pdico^\star y_n$, that is the matrix product $\pdico^\star Y$, of order $O(dKN)$. In comparison the globally successful graph clustering algorithms need to calculate the signal correlation matrix $Y^\star Y$, cost $O(dN^2)$.\\ 
Second due to its structure only one signal has to be processed at a time. Instead of calculating $I_{n}^t$ for all $n$ and calculating the sum, one simply calculates $I_{\pdico,n}^t$ for the signal at hand, updates all atoms $\patomn_k$ for which $k\in I_{\pdico,n}^t$ as $\patomn_k \rightarrow \patomn_k + y_n \cdot \signop(\ip{\patom_k}{y_n})$ and turns to the next signal. Once $N$ signals have been processed one does the normalisation step and outputs $\ppdico$. Further in this online version only $(2K+1)d$ values corresponding to the input dictionary, the current version of the updated dictionary and the signal at hand, need to be stored rather than the $N\times d$ signal matrix. Parallelisation can be achieved in a similar way. Again for comparison, the graph clustering algorithms, K-SVD, \cite{ahelbr06}, and the alternating minimisation algorithm in \cite{aganjaneta13} need to store the whole signal resp. residual matrix as well as the dictionary.\\
The third advantage is that with high probability the algorithm converges locally to a generating dictionary $\dico$ assuming that we have enough training signals and that these follow a sparse random model in $\dico$. In order to prove the corresponding result we next introduce our sparse signal model. 

\subsection{Signal Model}\label{sec:signalmodel}
We employ the same signal model, which has already been used for the analyses of the S-response and K-SVD principles, \cite{sc14, sc14b}. 
Given a $d\times K$ dictionary $\dico$, we assume that the signals are generated as, 
\begin{align}\label{noisymodel1}
y=\frac{ \dico x +\noise}{\sqrt{1+\|\noise \|_2^2}},
\end{align}
where $x$ is drawn from a sign and permutation invariant probability distribution $\nu$ on the unit sphere $S^{K-1} \subset \R^K$ and $\noise=(\noise(1) \ldots \noise(d))$ is a centred random subgaussian vector with parameter $\nsigma$, that is $\E(\noise) = 0$ and for all vectors $v$ the marginals $\ip{v}{\noise}$ are subgaussian with parameter $\nsigma$, meaning they satisfy $\E (e^{t \ip{v}{\noise}}) \leq e^{t^2 \nsigma^2/2}$ for all $t>0$.
We recall that a probability measure $\nu$ on the unit sphere is sign and permutation invariant, if for all measurable sets $\mathcal{X}\subseteq S^{K-1}$, for all sign sequences $\sigma \in \{-1,1\}^d$ and all permutations $p$ we have
\begin{align}
\nu( \sigma \mathcal X)=\nu(\mathcal X), \quad &\mbox{where} \quad \sigma \mathcal X := \{ (\sigma(1) x(1), \ldots, \sigma(K) x(d) ) : x \in \mathcal{X} \}\\
\nu( p( \mathcal X))=\nu(\mathcal X), \quad &\mbox{where} \quad p(\mathcal X ) := \{ ( x(p(1)), \ldots, x(p(K)) ) : x \in \mathcal{X} \}.
\end{align}
We can get a simple example of such a measure by taking a positive, non increasing sequence $c$, that is $c(1) \geq c(2) \geq \ldots \geq c(K) \geq 0$, choosing a sign sequence $\sigma$ and a permutation $p$ uniformly at random and setting $x=x_{p,\sigma}$ with $x_{p, \sigma}(k)=\sigma(k) c(p(k))$. Conversely we can factorise any sign and permutation invariant measure into a random draw of signs and permutations and a measure on the space of non-increasing sequences. \\
By abuse of notation let $c$ now denote the mapping that assigns to each $x \in S^{K-1}$ the non increasing rearrangement of the absolute values of its components, i.e. $c: x \rightarrow c_x$ with $c_x(k): = |x({p(k)})| $ for a permutation $p$ such that $|x({p(1)})| \geq |x({p(2)})| \geq \ldots \geq |x({p(K)})|  \geq 0$. Then the mapping $c$ together with the probability measure $\nu$ on $x \in S^{K-1}$ induces a probability measure $\nu_c$ on $c(S^{K-1})=S^{K-1}\cap [0,1]^K$ via the preimage $c^{-1}$, that is $\nu_c(\Omega) := \nu(c^{-1}(\Omega))$ for any measurable set $\Omega \subseteq c(S^{K-1})$. \\
Using this new measure we can rewrite our signal model as
\begin{align}\label{noisymodel2}
y=\frac{ \dico x_{c, p,\sigma} +\noise}{\sqrt{1+\|\noise \|_2^2}},
\end{align}
where we define $x_{c, p, \sigma}(k)= \sigma(k) c(p(k))$ for a positive, non-increasing sequence $c$ distributed according to $\nu_c$, a sign sequence $\sigma$ and a permutation $p$ distributed uniformly at random and $\noise$ again a centred random subgaussian vector with parameter $\nsigma$. Note that we have $\E(\|r\|_2^2)\leq d\nsigma^2$, with equality for instance in the case of Gaussian noise.
To incorporate sparsity into our signal model we make the following definitions.
\begin{definition}
A sign and permutation invariant coefficient distribution $\nu$ on the unit sphere $S^{K-1} \subset \R^K$ is called $S$-sparse with absolute gap $\beta_S> 0$ and relative gap $\Delta_S>\beta_S$, if 
\begin{align}
\nu\left( c_x(S)-c_x(S+1) < \beta_S\right)=0 \qquad \mbox{and} \qquad \nu\left( \frac{c_x(S)-c_x(S+1)}{c_x(1)} < \Delta_S \right)=0,
\end{align}
or equivalently
\begin{align}
\nu_c\left( c(S)-c(S+1) < \beta_S\right)=0 \qquad \mbox{and} \qquad \nu_c\left( \frac{c(S)-c(S+1)}{c(1)} < \Delta_S \right)=0.
\end{align}
The coefficient distribution is called strongly $S$-sparse if $\Delta_S \geq 2\mu S$.
\end{definition}
For exactly sparse signals $\beta_S$ is simply the smallest non-zero coefficient and $\Delta_S$ is the inverse dynamic range of the non-zero coefficients. We have the bounds $\beta_S\leq \frac{1}{\sqrt{S}}$
and $\Delta_S \leq 1$. Since equality holds for the 'flat' distribution generated from $c(k) =\frac{1}{\sqrt{S}}$ for $k\leq S$ and zero else, we will usually think of $\beta_S$ being of the order $O(\frac{1}{\sqrt{S}})$ and $\Delta_S$ being of the order $O(1)$.
We can also see that coefficient distributions can only be strongly $S$-sparse as long as $S$ is smaller than $\frac{\Delta_S}{2\mu}$, that is $S = O(\mu^{-1}) = O(\sqrt{d})$.\\ 
For the statement of our results we will use three other signal statistics, 
\begin{align}
\gamma_{1,S} :=  \E_c\left(c(1)+ \ldots + c(S)\right) \qquad \gamma_{2,S} :=  \E_c\left(c^2(1)+ \ldots + c^2(S)\right) \qquad C_r := \E_r\left( \frac{1}{\sqrt{1+\|r\|_2^2}}\right).
\end{align}
The constants $\gamma_{1,S}$ and $C^2_r$ will help characterise the expected size of $\patomn_k$. 
We have $S\beta_S \leq \gamma_{1,S} \leq \sqrt{S}$ and 
\begin{align}
C_r \geq \frac{1- e^{-d}}{\sqrt{1+5d\nsigma^2}},
\end{align}
compare \cite{sc14b}.
From the above inequality we can see that $C_r$ captures the expected signal to noise ratio, that is for large $\nsigma$ we have
\begin{align}
C^2_r \approx \frac{1}{d\nsigma^2} \approx \frac{\E(\| \dico x \|_2^2 ) }{\E(\|r\|_2^2) }.
\end{align}
Similarly the constant $\gamma_{2,S}$ can be interpreted as the expected energy of the signal approximation using the largest $S$ generating coefficients and the generating dictionary, or in other words $1-\gamma_{2,S}$ is a bound for the expected energy of the approximation error. \\
For noiseless signals generated from the flat distribution described above we have $\gamma_{1,S}=\sqrt{S}$, $C_r=1$ and $\gamma_{2,S}=1$, so we will usually think of these constants having the orders $\gamma_{1,S}=O(\sqrt{S})$, $C_r=O(1)$ and $\gamma_{2,S}=O(1)$.\\
From the discussion we see that, while being relatively simple, our signal model allows us to capture both approximation error and noise. Our results  have quite straightforward extensions to more complicated (realistic) signal models, which for instance include outliers (normalised but not sign or permutation invariant coefficients) or a small portion of coefficients without gap. With somewhat more effort it is also possible to relax the assumption of sign and permutation invariance in our coefficient model, potentially at the cost of decreasing the admissible sparsity level, the convergence radius and the recovery accuracy and increasing the sample complexity. Indeed we will see that the main reason for assuming sign invariance is to ensure that when thresholding with the generating dictionary always succeeds in recovering the generating support with a large margin and therefore also succeeds with a perturbed dictionary. To a lesser degree, especially in the case of ITKrM, the sign invariance also supports the permutation invariance in ensuring a richness of signals such that the averaging procedures contract towards the generating atoms. In particular the permutation invariance prevents the situation that two atoms are always used together and could therefore be replaced by two of their linear combinations.\\
However, we will sacrifice generality for comprehensibility and therefore just give pointers in the respective proofs.

\subsection{Convergence analysis of ITKsM}
We first look at the more general case of noisy, non exactly S-sparse signals and specialise to noiseless, strongly S-sparse signals later.
\begin{theorem} \label{th:itksm}
Let $\dico$ be a unit norm frame with frame constants $A\leq B$ and coherence $\mu$ and assume that the training signals $y_n$ are generated according to the signal model in \eqref{noisymodel2} with coefficients that are $S$-sparse with absolute gap $\beta_S$ and relative gap $\Delta_S$. \\
Fix a target error $\epstarget\geq 4 \eps_{\mu, \nsigma}$, where 
\begin{align}\label{epsmin}
\eps_{\mu, \nsigma} := \frac{8K^2\sqrt{B+1}}{C_r \gamma_{1,S}} \exp\left(\frac{-\beta_S^2}{98\max\{ \mu^2,\nsigma^2\}}\right).
\end{align}
Given an input dictionary $\pdico$ such that 
\begin{align}
d(\pdico, \dico)\leq \frac{\Delta_S}{\sqrt{98B}\left(\frac{1}{4} +\sqrt{\log\left(\frac{1060K^2(B+1)}{\Delta_S C_r \gamma_{1,S}}\right)}\right)},
\end{align}
then after $6 \lceil \log( \epstarget^{-1})\rceil$ iterations of ITKsM each on a fresh batch of $N$ training signals the output dictionary $\tilde \pdico$ satisfies 
\begin{align}
d( \tilde \pdico, \dico) &\leq \epstarget
\end{align}
except with probability 
\begin{align}
18 \lceil \log( \epstarget^{-1})\rceil K\exp\left(\frac{-C^2_r\gamma_{1,S}^2N\epstarget^2}{200 SK}\right).
\notag
\end{align} 
\end{theorem}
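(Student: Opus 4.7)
My overall strategy is the standard two-scale analysis for contractive fixed-point iterations: first show that one ITKsM iteration is a contraction in expectation with a small residual error (proportional to the noise/coherence floor $\eps_{\mu,\nsigma}$), and then iterate the contraction while sampling fresh batches at every step, using vector concentration and a union bound to control the empirical deviation at each round.

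\textbf{Step 1: One-step decomposition.} Fix an input dictionary $\pdico$ with $d(\pdico,\dico)=\eps$, use the decomposition \eqref{atomdecomp}--\eqref{dicodecomp}, and write the unnormalised update as
\begin{align*}
\patomn_k &= \frac{1}{N}\sum_n y_n\cdot \signop(\ip{\patom_k}{y_n})\cdot \chi(I_{\pdico,n}^t,k)\\
&= \E\bigl[y\cdot \signop(\ip{\patom_k}{y})\cdot \chi(I_{\pdico}^t,k)\bigr] + \bigl(\patomn_k-\E\patomn_k\bigr).
\end{align*}
The first step is to estimate the expectation. Split according to whether thresholding with $\pdico$ recovers the generating support $I$ of the signal: on the "good" event $\{I_{\pdico}^t=I\}\cap \{\signop(\ip{\patom_k}{y})=\signop(x(k))\}$, a direct calculation using sign-- and permutation--invariance of $\nu$ gives
\[
\E_{\text{good}}\bigl[y\cdot \signop(\ip{\patom_k}{y})\cdot \chi(I_{\pdico}^t,k)\bigr] = \frac{C_r \gamma_{1,S}}{K}\,\atom_k + (\text{terms of order }\eps^2+\eps_{\mu,\nsigma}),
\]
while the "bad" event is absorbed into the residual error.

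\textbf{Step 2: Probability that thresholding succeeds.} The key combinatorial/probabilistic ingredient is bounding $\P(I_{\pdico}^t\neq I)$. For the true dictionary thresholding succeeds with a margin $\beta_S$ (in the noiseless, coherent case) by the gap hypothesis; a perturbation $\pdico=\dico A+ZW$ produces extra inner-product terms of size roughly $\eps\cdot c(1)\cdot \|Z_I^\star z_k\|$, controlled via the isometry/frame constant $B$. The condition
\[
d(\pdico,\dico)\leq \frac{\Delta_S}{\sqrt{98B}\bigl(\tfrac14+\sqrt{\log(\cdot)}\bigr)}
\]
is precisely what is needed so that, combined with a subgaussian tail on the noise contribution $\ip{\patom_k}{r}$ and a Hoeffding-type bound on the random sign sums $\sum_{i\in I}\sigma(i)c(i)\ip{\patom_k}{\atom_i}$, the failure probability is driven down to $\exp(-\beta_S^2/(C\max\{\mu^2,\nsigma^2\}))$, matching the definition \eqref{epsmin} of $\eps_{\mu,\nsigma}$. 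This is where most of the technical work lies and is the main obstacle, since one has to track simultaneously the coherence (through $\mu S$-type terms), the noise (through subgaussian tails), and the perturbation $Z$.

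\textbf{Step 3: Concentration.} Each summand $y_n\cdot \signop(\ip{\patom_k}{y_n})\cdot \chi(I_{\pdico,n}^t,k)$ has norm essentially bounded by a constant and variance of order $S/K$ (since atom $k$ is selected with probability $\approx S/K$). A vector Bernstein inequality then yields
\[
\|\patomn_k-\E\patomn_k\|_2 \leq t,
\]
except with probability $\leq 2(d+1)\exp(-c\,Nt^2 K/S)$. Picking $t$ proportional to $C_r\gamma_{1,S}\,\epstarget/K$ produces the exponent in the theorem's failure bound.

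\textbf{Step 4: Contraction after normalisation.} Combining Steps~1--3, $\patomn_k$ equals $(C_r\gamma_{1,S}/K)\atom_k$ plus an error vector of norm at most $(C_r\gamma_{1,S}/K)\cdot\rho$ with $\rho\leq \kappa\eps + \tfrac14\epstarget$ for some $\kappa<\tfrac12$. A standard lemma on normalising a perturbed unit vector (used throughout the earlier ITKsM/response-principle analyses) then gives $\|\ppatom_k-\atom_k\|_2\leq 2\rho$, so one iteration contracts
\[
d(\ppdico,\dico) \leq \tfrac12\, d(\pdico,\dico) + \tfrac12 \epstarget.
\]

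\textbf{Step 5: Iteration and union bound.} Since each iteration operates on a fresh independent batch, the events at different iterations are independent; applying Step~4 for $L:=6\lceil\log(\epstarget^{-1})\rceil$ rounds gives
\[
d(\pdico^{(L)},\dico)\leq 2^{-L} d(\pdico^{(0)},\dico) + \epstarget \leq \epstarget,
\]
since the starting radius is at most $1$. A union bound over $L$ iterations and the $K$ atoms at each iteration produces the factor $18 L K$ in front of the concentration tail, yielding the stated failure probability. The hardest ingredient is Step~2; the rest is bookkeeping, packaging of the earlier results of \cite{sc14b}, and careful choice of constants.
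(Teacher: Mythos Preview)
Your plan is correct and follows the same two-step scheme as the paper (one-iteration contraction, then iterate with a union bound over fresh batches). The only notable difference is organisational, in your Steps~1 and~3. You decompose $\patomn_k$ as its expectation plus a fluctuation and then compute $\E[\patomn_k]$ by conditioning on the good event that thresholding succeeds. The paper instead inserts an \emph{oracle sum} $\frac1N\sum_n y_n\,\sigma_n(k)\,\chi(I_n,k)$, built from the true support $I_n$ and true sign $\sigma_n(k)$, as an intermediary: it bounds $\patomn_k - \frac{C_r\gamma_{1,S}}{K}\atom_k$ by (i) $\frac{2\sqrt{B+1}}{N}\cdot\sharp\{n:\text{thresholding or sign recovery fails}\}$, where the count is split into the two events $\mathcal E_n$ (failure with the true dictionary) and $\mathcal F_n$ (failure due to the perturbation), each concentrated via scalar Bernstein, plus (ii) the deviation of the oracle sum from its expectation via vector Bernstein. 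This organisation buys cleaner bookkeeping: the oracle summands do not depend on $\pdico$, so their expectation is \emph{exactly} $\frac{C_r\gamma_{1,S}}{K}\atom_k$ with no $\eps$-contamination (your ``terms of order $\eps^2$'' in Step~1 do not in fact arise), and their second moment is exactly $S/K$ rather than the $(B{+}1)S/K$ your direct route would produce. It also makes the contraction mechanism transparent: the $\kappa\eps$ term that drives the contraction comes from $\P(\mathcal F_n)\lesssim \tau\eps$, while the floor $\eps_{\mu,\nsigma}$ comes from $\P(\mathcal E_n)$, and these two are separated from the outset. One small correction to your Step~4: the normalisation does not cost a factor of~$2$; the rescaling lemma gives $\|\patomn_k/\|\patomn_k\|_2-\atom_k\|_2$ essentially equal to $t/s$ rather than $2t/s$, which is why the paper obtains a pre-normalisation coefficient $0.75$ turning into $0.83$ after normalisation.
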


Before providing the proof let us discuss the result above. We first see that ITKsM will succeed if the input dictionary is within a radius $O(\Delta_S/\sqrt{\log K})$ to the generating dictionary $\dico$. In case of exactly sparse signals this means that the convergence radius is up to a log factor inversely proportional to the dynamic range of the coefficients.
This should not be come as a big surprise, considering that the average success of thresholding for sparse recovery with a ground truth dictionary depends on the dynamic range, \cite{scva07}. It also means that in the best case the convergence radius is actually of size $O(1/\sqrt{\log K})$, since for the flat distribution $\Delta_S=1$.\\
Next note that in the theorem above we have restricted the target error to be larger than $4\eps_{\mu, \nsigma}$. However at the cost of unattractively large constants in the probability bound, we can actually reach any target error larger than $\eps_{\mu, \nsigma}$. \\
To highlight the relation between the sparsity level and the minimally achievable error, we specialise the result to coefficients drawn from the flat distribution, meaning $\beta_S = 1/\sqrt{S}$. We further assume white Gaussian noise with variance $\nsigma^2 = 1/d$, corresponding to an expected signal to noise ratio of 1, and an incoherent dictionary with $\mu \leq 1/\sqrt{d}$. If $S\leq \frac{d}{98 \ell \log K}$ for some $\ell \geq 2$ then the minimally achievable error $\eps_{\mu, \nsigma}$ can be as small as $O(K^{2- \ell})$. \\
Last we want to get a feeling for the total number of training signals we need to have a good success probability. For exactly S-sparse signals with dynamic coefficient range 1 we have $\gamma_{1,S}=\sqrt{S}$. Omitting loglog factors each iteration is therefore likely to be successful when using a batch of $N=O(K\log K \epstarget^{-2})$ training signals, meaning that ITKsM is successful with high probability as soon as the total number of training signals used in the algorithms scales as $O(K\log K \epstarget^{-2} \log( \epstarget^{-1}))$. Note that in case of noise due to information theoretic arguments the factor $\epstarget^{-2}$ seems unavoidable, \cite{juelgo14}. \\
To summarise the discussion we provide an O-notation version of the theorem, which is less plug and play but free of messy constants and as such better suited to convey the quality of the result. Compare also Subsection~\ref{sec:signalmodel} for the O notation conventions.
\begin{otheorem}[\bf \ref{th:itksm}]Assume that in each iteration the number of training signals scales as $N=O(K\log K \epstarget^{-2})$. If $S\leq O(\frac{1}{\ell \mu^2 \log K})$ then with high probability for any starting dictionary $\pdico$ within distance $\eps\leq O(1/\sqrt{\log K})$ to the generating dictionary after $O( \log( \epstarget^{-1}))$ iterations of ITKsM, each on a fresh batch of training signals, the distance of the output dictionary $\tilde \pdico$ to the generating dictionary will be smaller than 
\begin{align}
\max \left\{\epstarget, O\left(K^{2-\ell}\right)\right\}.
\end{align}
\end{otheorem}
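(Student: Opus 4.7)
The proof naturally splits into a one-iteration contraction and a book-keeping iteration. Concretely, I would show that if $\eps := d(\pdico,\dico)$ lies within the radius from the statement, then a single pass of ITKsM on a fresh batch of $N$ signals produces $\ppdico$ with
\begin{align}
d(\ppdico,\dico) \leq \max\{\eps/2,\,\epstarget\}\notag
\end{align}
with failure probability at most $3K\exp(-C_r^2\gamma_{1,S}^2 N\epstarget^2/(200SK))$. Since the radius is closed under this update and $\epstarget\geq 4\eps_{\mu,\nsigma}$ leaves a comfortable margin, iterating $6\lceil\log(\epstarget^{-1})\rceil$ times on independent batches and union bounding over iterations then yields the theorem; the factor $6$ simply absorbs the gap between the per-iteration halving target and the actual contraction factor one obtains in the estimate.

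\textbf{Analysing one update.} For a fixed atom index $k$, use \eqref{dicodecomp} to write $\patom_k = \alpha_k\atom_k+\omega_k z_k$ and substitute the signal model \eqref{noisymodel2} into \eqref{itksm_batchsum}. Condition on the good event
\begin{align}
\mathcal{G}_n := \bigl\{I^t_{\pdico,n} = I_n,\ \signop(\ip{\patom_k}{y_n}) = \signop(x_n(k))\bigr\},\notag
\end{align}
that thresholding and sign selection with $\pdico$ agree with the oracle. The sign and permutation invariance of $\nu$ together with the centredness of $r_n$ then deliver, essentially as in \cite{sc14b},
\begin{align}
\E\bigl[\patomn_k\,\big|\,\mathcal{G}_n\bigr] = \frac{C_r\gamma_{1,S}}{K}\atom_k + E_k,\notag
\end{align}
with $\|E_k\|_2$ bounded by a constant times $(C_r\gamma_{1,S}/K)\cdot\eps\sqrt{B}$ from the $\omega_k z_k$ perturbation. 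The contribution of $\mathcal{G}_n^c$ to $\E\patomn_k$ is at most $\P(\mathcal{G}_n^c)$ in norm, so the task reduces to proving the uniform bound $\P(\mathcal{G}_n^c)\leq \eps_{\mu,\nsigma}/K$. This is where the main technical obstacle lies: for $k\in I_n$ and $j\notin I_n$ the margin $|\ip{\patom_k}{y_n}|-|\ip{\patom_j}{y_n}|$ expands as $\beta_S$ minus a perturbation $\eta_{k,j}$ that is Rademacher in the signs of $x_n$ (coefficients controlled by $\mu$ and $\eps$) and subgaussian in $r_n$ (parameter $\nsigma$), with total subgaussian parameter of order $7\max\{\mu,\nsigma\}$. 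A Hoeffding bound followed by a union bound over $j$ delivers the exponent $-\beta_S^2/(98\max\{\mu^2,\nsigma^2\})$ in \eqref{epsmin}; the $\sqrt{\log K}$ inside the denominator of the convergence radius is precisely the budget required to absorb the $K$-term union bound, and calibrating the constants so that $\Delta_S$ appears exactly as in the statement is the most delicate step.

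\textbf{Concentration and assembly.} The summands in \eqref{itksm_batchsum} are bounded in norm by $\|y_n\|_2\leq 1$, so a vector Bernstein/Hoeffding inequality yields
\begin{align}
\|\patomn_k-\E\patomn_k\|_2 \leq \frac{C_r\gamma_{1,S}}{K}\cdot\frac{\epstarget}{c_0}\notag
\end{align}
with per-atom failure probability $\exp(-C_r^2\gamma_{1,S}^2 N\epstarget^2/(200SK))$, for a small absolute constant $c_0$. Since $\E\patomn_k$ has length close to $C_r\gamma_{1,S}/K$ and direction close to $\atom_k$ by the previous paragraph, an elementary perturbation-of-normalisation lemma converts this additive error into $\|\ppatom_k-\atom_k\|_2\leq \max\{\eps/2,\epstarget\}$. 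A union bound over the $K$ atoms closes the one-step contraction, and the iteration argument from the opening paragraph completes the proof.
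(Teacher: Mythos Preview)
Your two-step plan (one-iteration contraction, then iterate with a union bound) matches the paper exactly. The one-step analysis, however, is organised differently from the paper's and contains two genuine slips.

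The paper does \emph{not} condition on a good event. It inserts the \emph{oracle} summand $y_n\,\sigma_n(k)\,\chi(I_n,k)$, built from the generating support $I_n$ and sign $\sigma_n(k)$, and splits
\begin{align}
\bar\patom_k = \underbrace{[\text{thresh sum} - \text{oracle sum}]}_{\text{bounded by }\frac{2\sqrt{B+1}}{N}\sharp\{\text{bad }n\}} \;+\; \underbrace{[\text{oracle sum} - \E(\text{oracle sum})]}_{\text{vector Bernstein, Lemma~\ref{lemma2}}} \;+\; \underbrace{\E(\text{oracle sum})}_{=\,\frac{C_r\gamma_{1,S}}{K}\atom_k\text{ exactly}}.\notag
\end{align}
The key point is that the oracle expectation is \emph{exactly} a multiple of $\atom_k$, so there is no bias term $E_k$ to control. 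The empirical count of bad signals is then split into two pieces: $\mathcal{E}_n$, the event that thresholding with the \emph{generating} dictionary would already fail (Lemma~\ref{lemma1a}, giving the $\eps_{\mu,\nsigma}$ floor), and $\mathcal{F}_n$, the extra failures caused by the perturbation (Lemma~\ref{lemma1b}, giving a term $\tau\eps$ proportional to the current error). Both counts concentrate via scalar Bernstein on indicators.

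Your route runs into two problems. First, on your good event $\mathcal{G}_n$ the thresholded summand \emph{coincides} with the oracle summand $y_n\,\sigma_n(k)\,\chi(I_n,k)$; the decomposition $\patom_k=\alpha_k\atom_k+\omega_k z_k$ never enters the summand once sign and support are correct, so there is no $E_k$ of size $\eps\sqrt{B}$ coming from ``the $\omega_k z_k$ perturbation''. The $\eps$-proportional bias lives entirely in $\P(\mathcal{G}_n^c)$. Second, your target bound $\P(\mathcal{G}_n^c)\leq \eps_{\mu,\nsigma}/K$ with subgaussian parameter $7\max\{\mu,\nsigma\}$ is not attainable at the stated radius: the perturbation contributes Rademacher terms with coefficients $\omega_k|\ip{\atom_j}{z_k}|$, so the effective parameter is of order $\max\{\mu,\nsigma,\eps\sqrt{B}\}$, and for $\eps\approx 1/\sqrt{\log K}\gg\mu$ the Hoeffding exponent degrades to $-\Delta_S^2/(c\,\eps^2 B)$ rather than $-\beta_S^2/(c\,\mu^2)$. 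This is precisely why the paper separates $\mathcal{E}_n$ from $\mathcal{F}_n$: only the former yields the $\eps$-independent floor, the latter yields the contracting $\tau\eps$ term, and the convergence-radius condition is exactly what forces $\P(\mathcal{F}_n)\lesssim\tau\eps\cdot\frac{C_r\gamma_{1,S}}{K\sqrt{B+1}}$. (Minor: $\|y_n\|_2\leq\sqrt{B+1}$, not $1$.) Once you make this $\mathcal{E}/\mathcal{F}$ split explicit, your outline becomes the paper's proof.
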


\begin{proof}
The proof consists of two steps. First we show that with high probability one iteration of ITKsM reduces the error by at least a factor $\factor<1$. Then we iteratively apply the results for one iteration.\\
\emph{ Step 1:}
For the first step we use the following ideas, compare also \cite{sc14b}: For most sign sequences $\sigma_n$ and therefore most signals \begin{align}
y_n=\frac{ \dico x_{c_n, p_n,\sigma_n} +\noise_n}{\sqrt{1+\|\noise_n \|_2^2}} \notag
\end{align} thresholding with a perturbation of the original dictionary will still recover the generating support $I_n := p_n^{-1}(\Sset)$, that is $I_{\pdico, n}^t=I_n$. Assuming that the generating support is recovered, for each $k$ the expected difference of the sum in~\eqref{itksm_batchsum} between using the original $\dico$ and the perturbation $\pdico$ is small, that is smaller than $d(\dico, \pdico)=\eps$, and due to concentration of measure also the difference on a finite number of samples will be small. Finally for each $k$ the sum in \eqref{itksm_batchsum} will again concentrate around its expectation, a scaled version of the atom $\atom_k$. \\
Formally we write,
\begin{align}
\bar \patom _k=\frac{1}{N} \sum_n &y_n \,\signop(\ip{\patom_k}{y_n}) \,  \chi(I_{\pdico,n}^t, k) - \frac{1}{N} \sum_n y_n \, \sigma_n(k) \, \chi(I_n,k)\\
&+ \frac{1}{N} \sum_n y_n \, \sigma_n(k) \, \chi(I_n,k) - \E\left( \frac{1}{N} \sum_n y_n \, \sigma_n(k) \,  \chi(I_n,k)\right) +  \E\left( \frac{1}{N} \sum_n y_n \, \sigma_n(k) \, \chi(I_n,k)\right).
\end{align}
Since $\E\left( \frac{1}{N} \sum_n y_n \, \sigma_n(k) \,  \chi(I_n,k)\right)= \frac{C_r\gamma_{1,S}}{K}\atom_k$, see the proof of Lemma~\ref{lemma2} in the appendix, using the triangle inequality and the bound $\|y_n\|_2\leq\sqrt{B+1}$ we get,
\begin{align}
\left\| \bar \patom_k - \frac{C_r\gamma_{1,S}}{K}\atom_k\right\|_2 &\leq  \left\| \frac{1}{N}\sum_n y_n \, [\signop(\ip{\patom_k}{y_n}) \, \chi(I_{\pdico,n}^t, k) -  \sigma_n(k) \, \chi(I_n,k)] \right\|_2\notag\\
&\hspace{5cm} + \left\|\frac{1}{N} \sum_n y_n \, \sigma_n(k) \, \chi(I_n,k) -  \frac{C_r\gamma_{1,S}}{K}\atom_k\right\|_2\notag\\
&\leq \frac{2 \sqrt{B+1}}{N}\, \sharp\{n: \signop(\ip{\patom_k}{y_n}) \, \chi(I_{\pdico,n}^t, k) \neq  \sigma_n(k) \, \chi(I_n,k)\}\notag\\
&\hspace{5cm} + \left\|\frac{1}{N} \sum_n y_n \, \sigma_n(k) \, \chi(I_n,k) -  \frac{C_r\gamma_{1,S}}{K}\atom_k\right\|_2. \label{itksmsplit}
\end{align}
Next note that for the draw of $y_n$ the event that for a given index $k$ the signal coefficient using thresholding with $\pdico$ is different from the oracle signal is contained in the event that thresholding does not recover the entire generating support $I_{\pdico,n}^t \neq I_n$ or that on the generating support the empirical sign pattern using $\pdico$ is different from the generating pattern, $\signop(\ip{\patom_k}{y_n} )\neq \sigma_n(k)$ for a $k\in I_n$,
\begin{align}
\{y_n :  \signop(\ip{\patom_k}{y_n}) \, \chi(I_{\pdico,n}^t, k) \neq  \sigma_n(k) \, \chi(I_n,k)\} \subseteq \{y_n : I_{\pdico,n}^t \neq I_n\} \cup \{y_n : \signop(\pdico_{I_n}^\star y_n) \neq \sigma_n(I_n)\}. \label{threshfails}
\end{align}
From \cite{sc14b}, e.g. proof of Proposition 7, we know that the right hand side in \eqref{threshfails} is in turn contained in the event $\mathcal E_n \cup \mathcal F_n$, where
\begin{align}
\qquad \mathcal E_n&:=\Big\{ y_n: \exists k \mbox{ s.t. } \Big| \sum_{j \neq k} \sigma_n(j) c_n\big(p_n(j)\big) \ip{\atom_j}{\atom_k}\Big| \geq u_1 \mbox{ or } |\ip{r_n}{\atom_k}| \geq  u_2 \Big\} \label{eventEdef}\\
\mathcal F_n&:= \Big\{y_n:  \exists k \mbox{ s.t. } \omega_k\Big|\sum_j  \sigma_n(j) c_n\big(p_n(j)\big) \ip{\atom_j}{ z_k}\Big| \geq u_3 \mbox{ or } \omega_k |\ip{r_n}{z_k}| \geq u_4 \Big\}\label{eventFdef}\\
&\mbox{for}\quad 2(u_1 + u_2 + u_3 + u_4) \leq c_n(S)\left(1-\frac{\eps^2}{2}\right)-c_n(S+1).
\end{align}
In particular if we choose $u_1=u_2=(c_n(S)-c_n(S+1))/7$, $u_3=u_1-\frac{\eps^2c_n(S)}{6}$ and $u_4=u_3/2$ we get that $\mathcal E_n $, which contains the event that thresholding using the generating dictionary $\dico$ fails, is independent of $\pdico$. 
To estimate the number of signals for which the thresholding summand is different from the oracle summand, it suffices to count how often $y_n \in \mathcal E_n$ or $y_n \in \mathcal F_n$,
\begin{align}
\sharp \{n : \signop(\ip{\patom_k}{y_n}) \, \chi(I_{\pdico,n}^t, k)\neq  \sigma_n(k) \, \chi(I_n,k) \} \leq \sharp \{ n: y_n \in  \mathcal E_n\} +\sharp \{ n: y_n \in \mathcal F_n\}.
\end{align} 
Substituting these bounds into \eqref{itksmsplit} we get,
\begin{align}
\left\| \bar \patom_k - \frac{C_r\gamma_{1,S}}{K}\atom_k\right\|_2 &\leq \frac{2 \sqrt{B+1}}{N}\sharp \{ n: y_n \in  \mathcal E_n\} + \frac{2 \sqrt{B+1}}{N} \sharp \{ n: y_n \in \mathcal F_n\}\notag\\
& \hspace{5cm}+ \left\|\frac{1}{N} \sum_n y_n \, \sigma_n(k) \, \chi(I_n,k) -  \frac{C_r\gamma_{1,S}}{K}\atom_k\right\|_2. \label{itksmsplit2}
\end{align}
If we want the error between $\bar\patom_k/\|\bar\patom_k\|_2$ and $\atom_k$ to be of the order $\factor \eps$, we need to ensure that the right hand side of \eqref{itksmsplit2} is less than $\factor \eps \cdot \frac{C_r\gamma_{1,S}}{K} $. \\
From Lemma~\ref{lemma1a} in the appendix we know that
\begin{align}
\P\left(\sharp \{ n: y_n \in  \mathcal E_n\} \geq  \frac{C_r\gamma_{1,S}N}{ 2K\sqrt{B+1}}\cdot(\eps_{\mu,\nsigma} + t_1) \right)\leq \exp\left( \frac{-t_1^2C_r\gamma_{1,S} N}{2K\sqrt{B+1}\,(2\eps_{\mu, \nsigma} + t_1) }\right).
\end{align}
Next Lemma~\ref{lemma1b} tells us that 
\begin{align}
&\P\left(\sharp \{ n: y_n \in  \mathcal F_n\} \geq  \frac{C_r\gamma_{1,S}N}{ 2K\sqrt{B+1}}\cdot(\tau \eps + t_2) \right)\leq \exp\left( \frac{-t_2^2C_r\gamma_{1,S} N}{2K\sqrt{B+1}\,(2\tau \eps + t_2) }\right),
\end{align}
whenever 
\begin{align}
\eps \leq \frac{\Delta_S}{\sqrt{98B}\left(\frac{1}{4} +\sqrt{\log\left(\frac{106K^2(B+1)}{\Delta_S C_r \gamma_{1,S}\tau}\right)}\right)}. \label{epsmax}
\end{align}
Finally by Lemma~\ref{lemma2} we have 
\begin{align}
\P\left( \left\| \frac{1}{N}\sum_n \frac{ \dico x_{c_n, p_n,\sigma_n} +\noise_n}{\sqrt{1+\|\noise_n \|_2^2}}\cdot \sigma_n(k) \cdot \chi(I_n,k) - \frac{C_r \gamma_{1,S}}{K} \atom_k \right\|_2 \geq t_3  \frac{C_r\gamma_{1,S}}{K} \right)\leq \exp\left(\frac{-t_3^2C^2_r\gamma_{1,S}^2N}{8 SK}+\frac{1}{4}\right),
\end{align}
whenever $0\leq t_3 \leq \frac{\sqrt{S}}{\sqrt{B}+2}$. 
Thus with high probability we have,
\begin{align}
\left\| \bar \patom_k - \frac{C_r\gamma_{1,S}}{K}\atom_k\right\|_2 &\leq \frac{C_r\gamma_{1,S}}{K} \left(\eps_{\mu,\nsigma} + t_1 + \tau \eps + t_2 + t_3\right).
\end{align}
To be more precise if we choose a target error $\epstarget \geq 4\eps_{\mu,\nsigma}$ and set $t_1=\epstarget/10$, $t_2= \max\{\epstarget, \eps\}/10$, $\tau=1/10$ and $t_3=\epstarget/5$, then except with probability
\begin{align}\label{prob_itksm_1step}
\exp\left( \frac{-C_r\gamma_{1,S} N\epstarget}{120 K\sqrt{B+1} }\right) + \exp\left( \frac{-C_r\gamma_{1,S} N \max\{\epstarget, \eps\} }{60 K\sqrt{B+1} }\right) + 2K\exp\left(\frac{-C^2_r\gamma_{1,S}^2N\epstarget^2}{200 SK}\right)
\end{align}
we have 
\begin{align}
\max_k \left\| \bar \patom_k - \frac{C_r\gamma_{1,S}}{K}\atom_k\right\|_2 &\leq \frac{C_r\gamma_{1,S}}{K} \cdot \frac{3}{4} \cdot\max\{\epstarget,\eps\}.
\end{align}
By Lemma~\ref{lemma_rescale} this further implies that
\begin{align}
d(\bar \pdico, \dico)=\max_k \left\| \frac{\bar \patom_k}{\|\bar \patom_k\|_2} - \atom_k\right\|_2 &\leq 0.83 \max\{\epstarget,\eps\}.
\end{align}
Note that in case of outliers we first have to split the sum in \eqref{itksm_batchsum} into the outliers, whose number concentrates around $N$ the probability of being an outlier, and the inliers for which we can use the same procedure as above, see \cite{bagrje14} for more details. Similarly the small portion of coefficients without (sufficiently) large gap can be included in the small number of signals for which thresholding fails.\\
\emph{Step 2:}
From Step 1 we know that in each iteration the error will either be decreased by at least a factor $0.83$ or if its already below $\epstarget$ will stay below $\epstarget$. So after $L$ iterations each using a new batch of $N$ signals, $d(\tilde \pdico,\dico) \leq \max\{\epstarget, 0.83^L d(\pdico,\dico)\} \leq \max\{\epstarget, 0.83^L\}$, except
with probability 
\begin{align}
L\left( \exp\left( \frac{-C_r\gamma_{1,S} N\epstarget}{120 K\sqrt{B+1}}\right) + \exp\left( \frac{-C_r\gamma_{1,S} N \max\{\epstarget, \eps\} }{60 K\sqrt{B+1} }\right) + 2K\exp\left(\frac{-C^2_r\gamma_{1,S}^2N\epstarget^2}{200 SK}\right)\right)
\end{align}
Setting $L = 6 \lceil \log( \epstarget^{-1})\rceil$ and taking into account that the failure probability of each iteration is bounded by $3K\exp\left(\frac{-C^2_r\gamma_{1,S}^2N\epstarget^2}{200 SK}\right)$ leads to the final estimate.\\
One
\end{proof}

For most desired precisions Theorem~\ref{th:itksm}, which is valid for a quite large hyper-cube of input dictionaries and a wide range of sparsity levels, will actually be sufficient. However, for completeness we specialise the theorem above to the case of strongly S-sparse, noiseless signals and show that in this case ITKsM can achieve arbitrarily small errors, provided enough samples.
\begin{corollary} \label{th:itksm_exact}
Let $\dico$ be a unit norm frame with frame constants $A\leq B$ and coherence $\mu$ and assume that the training signals $y_n$ are generated according to the signal model in \eqref{noisymodel2} with $r=0$ and 
coefficients that are strongly $S$-sparse with relative gap $\Delta_S > 2\mu S$. Fix a target error $\epstarget\geq 0$.
If for the input dictionary $\pdico$ we have
\begin{align}
d(\pdico, \dico)\leq \frac{\Delta _S- 2\mu S}{\sqrt{98B}\left(\frac{1}{4} +\sqrt{\log\left(\frac{1060K^2B}{(\Delta_S - 2\mu S)\gamma_{1,S}}\right)}\right)},
\end{align}
then after $6 \lceil \log( \epstarget^{-1})\rceil$ iterations of ITKsM, each on a fresh batch of $N$ training signals, the output dictionary $\tilde \pdico$ satisfies 
\begin{align}
d( \tilde \pdico, \dico) &\leq \epstarget 
\end{align}
except with probability 
\begin{align}
18 \log( \epstarget^{-1})K\exp\left(\frac{-\gamma_{1,S}^2N\epstarget^2}{200 SK}\right).
\notag
\end{align} 
\end{corollary}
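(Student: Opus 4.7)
The plan is to mimic the proof of Theorem~\ref{th:itksm} almost verbatim, exploiting two simplifications available under the assumptions $\noise=0$ and $\Delta_S > 2\mu S$. First, noiselessness gives $C_r = 1$ and allows the noise-dependent thresholds $u_2, u_4$ in the definitions of $\mathcal{E}_n$ and $\mathcal{F}_n$ from \eqref{eventEdef} and \eqref{eventFdef} to be set to zero. Second, and more importantly, strong $S$-sparsity combined with exact sparsity makes the event $\mathcal{E}_n$, where thresholding using the generating dictionary $\dico$ fails, deterministically empty: for any signal $y_n = \dico x_{c_n, p_n, \sigma_n}$ and any $k$, the sum in \eqref{eventEdef} is supported only on $j \in I_n$ (since $c_n(p_n(j)) = 0$ for $j \notin I_n$) and hence bounded in absolute value by $(S-1)\mu c_n(1) \leq S\mu c_n(1)$, while $c_n(S) - c_n(S+1) = c_n(S) \geq \Delta_S c_n(1) > 2\mu S c_n(1)$.

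I therefore set $u_1 := \mu S c_n(1)$ and $u_2 = u_4 = 0$, which eliminates the probabilistic term $\eps_{\mu,\nsigma}$ entirely (no call to Lemma~\ref{lemma1a} is required). The remaining budget in the gap inequality $2(u_1 + u_3) \leq c_n(S)(1-\eps^2/2)$ becomes
\begin{align*}
2u_3 \leq c_n(S)\bigl(1 - \tfrac{\eps^2}{2}\bigr) - 2\mu S c_n(1) = c_n(1)\bigl[(\Delta_S - 2\mu S) - \tfrac{\Delta_S \eps^2}{2}\bigr],
\end{align*}
so that the role previously played by $\Delta_S$ in the bound of Lemma~\ref{lemma1b} on $\sharp\{n : y_n \in \mathcal{F}_n\}$ is now played by the reduced gap $\Delta_S - 2\mu S$. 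Rerunning that tail estimate with this substitution, together with $B+1 \to B$ (since $\|y_n\|_2 \leq \sqrt{B}$ in the noiseless case) and $C_r \to 1$, gives the sharpened radius condition
\begin{align*}
\eps \leq \frac{\Delta_S - 2\mu S}{\sqrt{98B}\left(\tfrac14 + \sqrt{\log\bigl(\tfrac{106 K^2 B}{(\Delta_S - 2\mu S)\gamma_{1,S}\tau}\bigr)}\right)},
\end{align*}
which matches the statement after fixing $\tau = 1/10$.

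With $\mathcal{E}_n$ gone, the one-iteration error bound from Step~1 of the original proof reduces to $\max_k\| \bar\patom_k - (\gamma_{1,S}/K)\atom_k\|_2 \leq (\gamma_{1,S}/K)(\tau \eps + t_2 + t_3)$, and the same choices $\tau = 1/10$, $t_2 = \max\{\epstarget, \eps\}/10$, $t_3 = \epstarget/5$ deliver a contraction by a factor $0.83$ via Lemma~\ref{lemma_rescale}. The failure probability of a single iteration is the sum of one exponential coming from Lemma~\ref{lemma1b} and the concentration of the signal means from Lemma~\ref{lemma2}, both of which can be absorbed into $3 K \exp\bigl(-\gamma_{1,S}^2 N \epstarget^2/(200 S K)\bigr)$ in the noiseless, $C_r = 1$ regime. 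Iterating $L = 6\lceil \log(\epstarget^{-1})\rceil$ times and applying a union bound produces the claimed probability estimate.

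The main obstacle, which is mechanical but tedious, is verifying that the rederivation of Lemma~\ref{lemma1b} under the substitutions $\Delta_S \to \Delta_S - 2\mu S$, $B+1 \to B$ and $C_r \to 1$ produces exactly the numerical constants appearing in the corollary; nothing in the iteration and normalisation arguments needs to change.
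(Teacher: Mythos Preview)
Your proposal is correct and follows essentially the same route as the paper. The paper also observes that $C_r=1$ and $B+1\to B$ in the noiseless case, and that strong $S$-sparsity makes thresholding with the generating dictionary succeed deterministically; it packages the remaining failure event as a single set $\mathcal F^s_n$ with threshold $(u_s-\eps^2 c_n(S)/2)/2$ for $u_s=(\Delta_S-2\mu S)c_n(1)$, which is exactly your $\mathcal F_n$ after your reallocation of the budget to $u_1=\mu S c_n(1)$, $u_2=u_4=0$, and then proceeds ``as before''.
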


\noindent The proof is analogue to the one of Theorem~\ref{th:itksm} and can be found in Appendix~\ref{proof_itksm_exact}.\\

Let us again discuss the result. The main difference to Theorem~\ref{th:itksm} is that the condition $\Delta_S \geq 2\mu S$ can only hold for much lower sparsity levels, that is $S = O(\mu^{-1})$ and thus for incoherent dictionaries up to the square root of the ambient dimension $O(\sqrt{d}) \ll O(d/\log K)$. It is also no surprise that once the input dictionary is up to a log factor within this radius, ITKsM can achieve arbitrarily small errors. Indeed once $\Delta_S \geq 2\mu S$ thresholding is always guaranteed to recover the sparse support of a signal given the ground truth dictionary or a slight perturbation of it, \cite{scva07}.\\
To again turn the corollary into something less technical and more interesting we combine it with the corresponding theorem.
If the coefficients are strongly $S$-sparse the minimally achievable error using Theorem~\ref{th:itksm} will be smaller than 
the error we need for Corollary~\ref{th:itksm_exact} to take over and so we get the following O notation result.

\begin{ocorollary}[\bf \ref{th:itksm_exact}]Assume that in each iteration the number of noiseless, exactly S-sparse training signals scales as $O(K\log K \epstarget^{-2})$. If $S\leq O(\mu^{-1})$ then with high probability for any starting dictionary $\pdico$ within distance $\eps\leq O(1/\sqrt{\log K})$ to the generating dictionary after $O( \log( \epstarget^{-1}))$ iterations of ITKsM, each on a fresh batch of training signals, the distance of the output dictionary $\tilde \pdico$ to the generating dictionary will be smaller than $\epstarget$.
\end{ocorollary}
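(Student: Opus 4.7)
The approach is to invoke Corollary~\ref{th:itksm_exact} directly. First I would observe that the hypothesis $S \leq O(\mu^{-1})$, interpreted with a sufficiently small hidden constant, forces $2\mu S < \Delta_S$ with a positive constant margin, because $\Delta_S$ is of order $\Theta(1)$ for the coefficient distributions we have in mind (as the authors note, the flat distribution gives $\Delta_S = 1$, and this is the generic case). This ensures that the distribution is strongly $S$-sparse and that the convergence radius appearing in Corollary~\ref{th:itksm_exact}, whose numerator is $\Delta_S - 2\mu S = \Theta(1)$ and whose denominator is $O(\sqrt{\log K})$, is of order $\Theta(1/\sqrt{\log K})$, matching the assumed distance $\eps \leq O(1/\sqrt{\log K})$ between $\pdico$ and $\dico$.

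Next I would unpack the probability bound in the corollary. Since the coefficients are strongly $S$-sparse one has $\gamma_{1,S} = \Theta(\sqrt{S})$, using $S\beta_S \leq \gamma_{1,S}\leq\sqrt{S}$ together with $\beta_S = \Theta(1/\sqrt{S})$. Hence $\gamma_{1,S}^2/(SK) = \Theta(1/K)$ and each iteration's failure probability in the corollary simplifies to $18 K\exp(-\Omega(N\epstarget^2/K))$. Plugging in $N = C K\log K\,\epstarget^{-2}$ with a sufficiently large constant $C$ makes each iteration's failure of order $K^{-c}$ for any prescribed $c$, and a union bound over the $O(\log(\epstarget^{-1}))$ iterations provided by the corollary delivers the announced overall failure probability.

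I do not anticipate a real obstacle: the entire argument is a substitution of $O$-scale quantities into Corollary~\ref{th:itksm_exact}. The one step that requires some care is verifying that the hidden constant in $S = O(\mu^{-1})$ can be chosen small enough to leave $\Delta_S - 2\mu S$ bounded below by a positive constant, so that the convergence radius is indeed $\Theta(1/\sqrt{\log K})$ and the sample complexity does not blow up through the constants in the corollary. If one wished to admit a looser interpretation of the hypothesis, the remark preceding the statement suggests first running Theorem~\ref{th:itksm} for $O(\log\log K)$ iterations to drop the error inside the corollary's basin and then continuing with Corollary~\ref{th:itksm_exact} as above; this is justified because in the noiseless regime with $S = O(\mu^{-1})$ one has $\beta_S^2/\max\{\mu^2,\nsigma^2\} = \Omega(\mu^{-1})$ and hence $\eps_{\mu,\nsigma} = O(K^2\exp(-\Omega(\mu^{-1})))$ is doubly tiny and automatically dominated by the intermediate target.
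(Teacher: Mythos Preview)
Your proposal is correct. Your primary route---invoking Corollary~\ref{th:itksm_exact} directly and absorbing the gap $\Delta_S - 2\mu S$ into the hidden constant of $S = O(\mu^{-1})$---is a slight shortcut compared to the paper, which instead explicitly \emph{combines} Theorem~\ref{th:itksm} with Corollary~\ref{th:itksm_exact}: it first uses the theorem (whose radius depends only on $\Delta_S$, not on $\Delta_S - 2\mu S$) to shrink the error from $O(1/\sqrt{\log K})$ down below the corollary's smaller basin, and then hands over to the corollary for arbitrary precision. Your fallback paragraph is exactly this argument, so you have both routes covered. The paper's two-stage version is a bit more robust because it does not force the implied constant in $S \leq O(\mu^{-1})$ to beat $\Delta_S/2$; your direct version is cleaner when that constant is available.
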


While a convergence radius of around $1/\sqrt{\log K}$, admissible sparsity levels up to $d/\log K$ and a dependence of the sample complexity on only $K\log K$ is very positive, the dependence of the sample complexity on the squared inverse target error $\epstarget^{-2}$ for noiseless exactly S-sparse signals is somewhat disappointing. Again note that in the case of noisy signals information theoretic arguments indicate that this factor is unavoidable, \cite{juelgo14}. Looking at the proof of Theorem~\ref{th:itksm} we see that the reason for this factor is the slow concentration of the sums $\frac{1}{N} \sum_n y_n \, \sigma_n(k) \, \chi(I_n,k)$ around the atom $\atom_k$. This can in turn be explained by the fact that via the summation we have to cancel out the equally sized contribution of all other atoms. Actively trying to cancel out these contributions already before the summation, that is summing residuals instead of signals, should therefore accelerate the concentration, and lead to a lower sample complexity in case of noiseless signals and better constants in case of noisy signals.
We will concretise these ideas in the next section.

\section{Dictionary Learning via ITKrM}\label{sec:itkrm}

There are several ways to remove the contribution of all atoms in the current support $I_{\pdico,n}^t$ except for $\patom_k$.
The maybe most obvious way is to consider $Q(\pdico_{I_{\pdico,n}^t\setminus k})y_n=[\I_d -P(\pdico_{I_{\pdico,n}^t\setminus k})]y_n $. Unfortunately this residual has several 
disadvantages, the most severe being that it is not clear whether for the oracle supports and oracle signs the corresponding sum of residuals concentrates around a multiple of the atom $\atom_k$,
\begin{align}
\E \left(\frac{1}{N}\sum_{n} Q(\dico_{I_{n}\setminus k})y_n \cdot \sigma_n(k) \cdot  \chi(I_{n}, k)\right)  \propto \E_{I:k\in I}\left( Q(\dico_{I\setminus k})\, \atom_k \right) \stackrel{ ?}{ \propto}  \atom_k.
\end{align}
We suspect that equality can only hold for tight dictionaries and that an additional constraint such as minimal incoherence is needed. We therefore choose a perhaps less obvious but more stable residual $a_{n,k} (\pdico) = y_n - P(\pdico_{I_{\pdico,n}^t}) y_n + P(\patom_k) y_n$, which captures the contribution of the current atom $\atom_k$ as well as the approximation error in $\pdico$, that is $y_n - P(\pdico_{I_{\pdico,n}^t}) y_n $. Replacing the signal means in ITKsM with residual means we arrive at the new algorithm, iterative thresholding and K residual means (ITKrM).

\begin{algorithm}[ITKrM one iteration] 
Given an input dictionary $\pdico$ and $N$ training signals $y_n$ do:
\begin{itemize}
\item For all $n$ find $I_{\pdico,n}^t= \arg\max_{I: | I |=S} \| \pdico_I^\star y_n\|_1$.
\item For all $k$ calculate
\begin{align}
\patomn_k= \frac{1}{N} \sum_{n} \big[y_n - P&(\pdico_{I_{\pdico,n}^t}) y_n + P(\patom_k) y_n\big] \cdot \signop(\ip{\patom_k}{y_n}) \cdot  \chi(I_{\pdico,n}^t, k).
\end{align}
\item Output $\ppdico=(\patomn_1/\|\patomn_1\|_2, \ldots, \patomn_K/\|\patomn_K\|_2)$.
\end{itemize}
\end{algorithm}
Again 
ITKrM inherits most computational properties of ITKsM. As such it can again be stopped after a fixed number of iterations or once a stopping criterion, such as the improvement below some threshold, is reached. Only one signal has to be processed at a time, making it suitable for an online version and parallelisation. Its computational complexity is slightly larger than for ITKsM because of the projections $P(\pdico_{I_{\pdico,n}^t}) y_n$.
If computed with maximal numerical stability, these have an overall cost of $O(S^2 dN)$, which corresponds to the QR decompositions of $\pdico_{I^s_n}$. However, since the achievable precision in the learning is usually limited by the number of available training signals rather than the numerical precision, it is computationally more efficient to precompute the gram matrix $\pdico^\star \pdico$ and calculate the projections via the eigenvalue decompositions of $\pdico_{I^s_n}^\star \pdico_{I^s_n}$, which is less stable but reduces the overall cost to $O(S^3N)$. Still for $S\geq d^{2/3}$ these computations become the determining factor; we will see that $S$ can again be of the order $O(\mu^{-2}/\log K)\approx O(d/\log K)$.
In the next subsection we will analyse which convergence properties of ITKsM translate to ITKrM.

\subsection{Convergence Analysis of ITKrM}
As for ITKsM we focus on the more realistic case of non exactly S-sparse
and/or relatively noisy signals and specialise our results to exactly S-sparse, noiseless signals and moreover the case where $S\leq O(\mu^{-1})$ later.

\begin{theorem} \label{th:itkrm}
Let $\dico$ be a unit norm frame with frame constants $A\leq B$ and coherence $\mu$ and assume that the training signals $y_n$ are generated according to the signal model in \eqref{noisymodel2} with coefficients that are $S$-sparse with absolute gap $\beta_S$ and relative gap $\Delta_S$. Assume further that $S\leq \frac{K}{98B}$ and $\eps_\delta:=K  \exp\left(-\frac{1}{4741\mu^2 S} \right)\leq \frac{1}{48(B+1)}$.\\
Fix a target error $\epstarget \geq 8 \eps_{\mu, \nsigma}$, with 
\begin{align}
\eps_{\mu, \nsigma} = \frac{8K^2\sqrt{B+1}}{C_r \gamma_{1,S}} \exp\left(\frac{-\beta_S^2}{98\max\{ \mu^2,\nsigma^2\}}\right),
\end{align}
compare~\eqref{epsmin}, and assume that $\epstarget \leq 1-\gamma_{2,S} + d\nsigma^2$.\\
If for the input dictionary $\pdico$ we have 
\begin{align} \label{epsmax_r}
d(\pdico, \dico)\leq \frac{\Delta_S}{ \sqrt{98B}\left(\frac{1}{4} +\sqrt{\log\left(\frac{2544K^2(B+1)}{\Delta_S C_r \gamma_{1,S}}\right)}\right)} \qquad \mbox{and} \qquad d(\pdico, \dico)\leq \frac{1}{32\sqrt{S}},
\end{align}
then after $12 \lceil \log( \epstarget^{-1})\rceil$ iterations of ITKrM each on a fresh batch of $N$ training signals the output dictionary $\tilde \pdico$ satisfies 
\begin{align}
d( \tilde \pdico, \dico) &\leq \epstarget 
\end{align}
except with probability 
\begin{align} \label{itkrm_failure}
60\lceil \log( \epstarget^{-1})\rceil K\exp\left(\frac{-C^2_r\gamma_{1,S}^2N\epstarget^2}{576 K\max\{S,B+1\}\left(\epstarget+1 -\gamma_{2,S} + d\nsigma^2\right)}\right). 
\end{align}
\end{theorem}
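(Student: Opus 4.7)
The proof follows the same two-stage pattern as Theorem~\ref{th:itksm}: one iteration contracts the error by a constant factor with high probability (Step 1), applied $O(\log\epstarget^{-1})$ times with a union bound (Step 2). For Step 1 I would decompose the unnormalised update
\begin{align*}
\bar\patom_k=\frac{1}{N}\sum_n a_{n,k}(\pdico)\,\signop(\ip{\patom_k}{y_n})\,\chi(I^t_{\pdico,n},k)
\end{align*}
and compare it to the target $\frac{C_r\gamma_{1,S}}{K}\atom_k$ via four ingredients:
(i) the thresholding/sign mismatch error, counting how often the pair $(\signop(\ip{\patom_k}{y_n}),\chi(I^t_{\pdico,n},k))$ differs from the oracle $(\sigma_n(k),\chi(I_n,k))$;
(ii) the projection perturbation $a_{n,k}(\pdico)-a_{n,k}(\dico)$ on signals where the oracle support and signs are recovered;
(iii) concentration of the oracle sum $\frac{1}{N}\sum_n a_{n,k}(\dico)\sigma_n(k)\chi(I_n,k)$ around its expectation; and
(iv) the computation of that expectation.

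Ingredient (i) is essentially the ITKsM argument: the events $\mathcal E_n,\mathcal F_n$ from \eqref{eventEdef}--\eqref{eventFdef} carry over unchanged, so that Lemma~\ref{lemma1a} and Lemma~\ref{lemma1b} again bound the number of bad signals, with the only modification being that $\|y_n\|_2\leq\sqrt{B+1}$ is replaced by $\|a_{n,k}(\pdico)\|_2\leq\sqrt{2(B+1)}$. This forces the first half of \eqref{epsmax_r} but otherwise yields the same $\eps_{\mu,\nsigma}+\eps$ contribution (rescaled by $\frac{C_r\gamma_{1,S}}{K}$). Ingredient (iv) is a clean calculation: on $\{k\in I_n\}$ one has the orthogonal decomposition $a_{n,k}(\dico)=Q(\dico_{I_n})y_n+\atom_k\ip{\atom_k}{y_n}$, and sign and permutation invariance together with a centred noise model cancel the cross-terms $\ip{\atom_k}{\atom_j}$ and $\ip{\atom_k}{r_n}$, leaving $\frac{C_r\gamma_{1,S}}{K}\atom_k$ up to a deterministic bias absorbed into the standing assumption $\epstarget\geq 8\eps_{\mu,\nsigma}$.

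Ingredient (ii) is the new technical hurdle compared to ITKsM and is where the second half of \eqref{epsmax_r}, namely $\eps\leq 1/(32\sqrt S)$, and the isometry hypothesis $\eps_\delta\leq 1/(48(B+1))$ originate. On signals with recovered oracle support I write $a_{n,k}(\pdico)-a_{n,k}(\dico)=[P(\dico_{I_n})-P(\pdico_{I_n})]y_n+[P(\patom_k)-P(\atom_k)]y_n$. The second bracket is controlled directly by $\|\patom_k-\atom_k\|_2\leq\eps$. For the first bracket I would use standard projector-perturbation estimates together with Tropp's average-isometry bounds to get $\delta_{I_n}(\dico)\leq 3\eps_\delta$ and hence $\delta_{I_n}(\pdico)$ also small; writing $\pdico_I=\dico_I A_I+Z_IW_I$ as in \eqref{dicodecomp} and expanding $\pdico_I^\dagger-\dico_I^\dagger$ yields a bound of order $\eps\sqrt S$ in operator norm, which is why the convergence radius must shrink by exactly the factor $\sqrt S$. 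This is what I expect to be the main obstacle, and the explicit reason the radius drops from $O(1/\sqrt{\log K})$ for ITKsM to $O(1/\sqrt S)$ for ITKrM.

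Ingredient (iii) is a vector Bernstein inequality applied to i.i.d.~summands with operator-norm bound $\sqrt{2(B+1)}$ and per-summand second moment dominated by $\tfrac{S}{K}(1-\gamma_{2,S}+d\nsigma^2)+\tfrac{\gamma_{2,S}}{K}$: indeed on $\{k\in I_n\}$ (probability $S/K$) the two orthogonal pieces $\|Q(\dico_{I_n})y_n\|_2^2$ and $|\ip{\atom_k}{y_n}|^2$ have expectations bounded respectively by $1-\gamma_{2,S}+d\nsigma^2$ and $\gamma_{2,S}/S$. This is precisely what produces the denominator $K\max\{S,B+1\}(\epstarget+1-\gamma_{2,S}+d\nsigma^2)$ in \eqref{itkrm_failure}, replacing the $SK$ denominator of Theorem~\ref{th:itksm} and thereby realising the advertised sample-complexity gain, once one imposes $\epstarget\leq 1-\gamma_{2,S}+d\nsigma^2$ so the linear-in-$\epstarget$ term from Bernstein's denominator does not dominate the variance term. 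Combining (i)--(iv) and choosing free parameters $t_1,t_2,t_3,\tau$ analogously to the ITKsM proof yields the one-step contraction $d(\bar\pdico,\dico)\leq\max\{\epstarget,\,\factor\,\eps\}$ with $\factor<1$ after renormalisation via Lemma~\ref{lemma_rescale}. For Step 2, the slightly weaker contraction (due to (ii)) requires $12\lceil\log\epstarget^{-1}\rceil$ rather than $6\lceil\log\epstarget^{-1}\rceil$ iterations; a union bound over iterations, each run on a fresh batch, gives the final probability \eqref{itkrm_failure}.
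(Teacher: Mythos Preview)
Your overall two-step skeleton and your identification of four ingredients matches the paper. Ingredients (i), (iii), (iv) are essentially what the paper does, with one bookkeeping difference: the paper does not compare $\bar\patom_k$ directly to $\frac{C_r\gamma_{1,S}}{K}\atom_k$ but rather splits the oracle residual as $R^o(\dico,y_n,k)=[y_n-P(\dico_{I_n})y_n]\sigma_n(k)\chi(I_n,k)+s_k\atom_k$ with $s_k=\frac{1}{N}\sum_n\ip{y_n}{\atom_k}\sigma_n(k)\chi(I_n,k)$, bounds $\|\bar\patom_k-s_k\atom_k\|_2$, and separately shows $s_k\geq(1-t_0)\frac{C_r\gamma_{1,S}}{K}$ via a scalar Bernstein (Lemma~\ref{lemma3b}). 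The first piece has mean zero (Lemma~\ref{lemma3a}), so there is no ``deterministic bias'' in (iv); the condition $\epstarget\geq 8\eps_{\mu,\nsigma}$ belongs entirely to ingredient (i). Also, since $\I_d-P(\pdico_{I})+P(\patom_k)$ is an orthogonal projection when $k\in I$, the per-signal bound is $\sqrt{B+1}$, not $\sqrt{2(B+1)}$.

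The genuine gap is ingredient (ii). A deterministic operator-norm bound $\|P(\dico_{I_n})-P(\pdico_{I_n})\|_{2,2}=O(\eps\sqrt S)$ is \emph{not} enough: summing $\|T(I_n,k)y_n\|_2\chi(I_n,k)$ crudely gives a contribution of order $\eps\sqrt{S(B+1)}\cdot\frac{S}{K}$, which is far larger than the target scale $\frac{C_r\gamma_{1,S}}{K}\eps\approx\frac{\sqrt S}{K}\eps$ and would force $S=O(1)$. The paper instead treats (ii) as its own concentration problem (Lemma~\ref{lemma4}). The crucial step is the \emph{expectation} calculation: sign invariance collapses $\E[T(I,k)y\,\sigma(k)\chi(I,k)]$ to a constant times $\E_{I:k\in I}[P(\patom_k)-P(\pdico_I)]\atom_k$, and Sublemma~\ref{sublemma4} expands this via the $B_I$-perturbation series to show it equals $P(\dico_I)b_k$ plus remainders, which after averaging over $I$ becomes $\frac{S-1}{K-1}\dico\dico^\star b_k$ plus small terms, so that $\|\E(v)\|_2\leq 0.381\,\eps\cdot\frac{C_r\gamma_{1,S}}{K}$. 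The condition $\eps\leq\frac{1}{32\sqrt S}$ enters precisely here, to make the $B_I$-series (with $\|B_I\|_F\lesssim\eps\sqrt S$) converge and the remainder in Sublemma~\ref{sublemma4} small --- not because the final contribution is itself of order $\eps\sqrt S$. Tropp's average-isometry bound is used inside this expectation calculation to handle the ill-conditioned supports $\delta_I(\dico)>\tfrac14$, which is where $\eps_\delta\leq\frac{1}{48(B+1)}$ is needed. A second-moment bound and vector Bernstein then give the concentration; the resulting exponent is what produces the $(\eps+1-\gamma_{2,S}+d\nsigma^2)$ factor in \eqref{itkrm_failure}, so the dominant failure probability actually comes from (ii), not (iii). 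Without this expectation computation your argument for (ii) does not close.
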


\begin{proof}The proof follows the same two step procedure as the proof of Theorem~\ref{th:itksm}, where in the first step we prove that one iteration will reduce the error by a factor $\kappa<1$ with high probability and then iterate this result. To prove the first step we again use a triangular inequality argument. So we check how often thresholding with $\pdico$ fails. Assuming thresholding recovers the generating support we show that the difference between the oracle residual (based on the generating sign and support) using
$\dico$ and the oracle residual using $\pdico$ concentrates around its expectation, which is small. Finally we show that the sum of residuals using $\dico$ converges to a scaled version of $\atom_k$. To keep the flow of the paper we do not give the full proof here but in Appendix~\ref{proof_itkrm}.
\end{proof}

Let us discuss the result. First we see that compared to the corresponding theorem for ITKsM we need somewhat more conditions. The first two extra conditions on the sparsity level $S\leq \frac{K}{98B}$ and $48(B+1)\eps_\delta<1$ are technicalities. For all but the most ideal cases they are already implied by having a limiting error $\eps_{\mu,\nsigma}$ smaller than one. Since $\beta_S\leq 1/\sqrt{S}$ the first condition is implied as soon as $\mu^2$ is larger than $B/K$, where at best we have $\mu^2=\frac{B-1}{K-1}$. The second condition is a substitute for having small isometry constant of the generating dictionary $\delta_S\leq \frac{1}{4}$ and guarantees that most support sets of size $S$ have $\delta_I(\dico)\leq \frac{1}{4}$. It is implied by $\eps_{\mu,\nsigma}\leq 1$ as soon as $\beta_S$ is smaller than $\frac{1}{7\sqrt{S}}$ or equivalently the dynamic range of the coefficients is larger than 7. \\
The target error can again be chosen closer to the limiting error at the cost of horrible constants. Also note that the condition that the target error should be smaller than the expected squared approximation error and noise is again a technicality. If both noise and approximation error are so small that a larger target error makes sense we get the same result but with a smaller failure probability. To get an idea how such a result would look like we refer the reader to the corollary below, where we assume exactly sparse noiseless signals. \\
The only extra condition that changes the quality of the result is the second condition on the convergence radius. Assuming that $\Delta_S = O(1)$ the first bound in \eqref{epsmax_r} is of the order $O(1/\sqrt{\log K})$, so as soon as $S \geq \log K$, meaning for most practically relevant cases, the second bound will be more restrictive. This decreased convergence radius of ITKrM compared to ITKsM is a little disappointing 
but seems unavoidable. The reason for this is that the expected difference between the oracle residuals using $\pdico$ and $\dico$ depends on the operator norms of the rescaled perturbation matrices $\| B_I\|_{2,2}$, compare Lemma~\ref{lemma4}. If the perturbation dictionary is quasi constant, that is before normalisation $z_k = v - P(\atom_k) v$ for some $v\neq 0$,
then $\|B_I\|_{2,2} \approx  \sqrt{S} \eps$ for all possible subsets $I$, so we need $\eps \leq 1/\sqrt{S}$.\\
The advantage over ITKrM is that for low expected noise levels and approximation errors, $1 -\gamma_{2,S} + d\nsigma^2\ll 1$, we get better constants in the sample complexity. Actually from the probability bound in \eqref{itkrm_failure} we can already guess that for exactly sparse, noiseless signals we can reduce the factor $\eps^{-2}$ in the exponent to $\eps^{-1}$. Before specialising the theorem to noiseless signals we again provide a qualitative result, which combines the theorem above with the corresponding theorem for ITKsM in order to deal with the reduced convergence radius. That is we first exploit the large convergence radius of ITKsM and run ITKsM to arrive at an error $O(1/\sqrt{S})$. Then we exploit the lower sample complexity of ITKrM to arrive at the target error.
\begin{otheorem}[\bf \ref{th:itkrm}] Assume that in each iteration the number of training samples $N$ scales as $O(K\log K \epstarget^{-2})$. If $S\leq \frac{1}{\mu^2 \ell \log K}$ then with high probability for any starting dictionary $\pdico$ within distance $\eps\leq O(1/\sqrt{\log K})$ to the generating dictionary after $O(\log(S))$ iterations of ITKsM and $O( \log( \epstarget^{-1}))$ iterations of ITKrM the distance of the output dictionary $\tilde \pdico$ to the generating dictionary will be smaller than 
\begin{align}
\max \left\{\epstarget, O\left(K^{2-\ell}\right)\right\}.
\end{align}
\end{otheorem}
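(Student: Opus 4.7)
The plan is to chain the two previous results in a two-phase strategy, exploiting the respective strengths of the two algorithms: first use ITKsM, whose convergence radius is the larger $O(1/\sqrt{\log K})$, to bring the iterate down to a distance of order $1/\sqrt{S}$ from $\dico$; then hand the iterate to ITKrM, which is applicable only in the smaller convergence radius $O(1/\sqrt{S})$ but enjoys sharper constants and in particular the dependence on $\epstarget$ that lets us drive the error all the way to the requested target. Using either algorithm alone cannot simultaneously yield the large basin of attraction and the target accuracy, so the split is essential.

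First I would invoke the O-notation version of Theorem~\ref{th:itksm} with an intermediate target $\epstarget_1$ of order $1/\sqrt{S}$, chosen to match the second convergence-radius bound in \eqref{epsmax_r}. Since Step~1 of the proof of Theorem~\ref{th:itksm} shows that one ITKsM iteration contracts the error by at least the universal constant factor $0.83$, starting from $\eps \leq O(1/\sqrt{\log K})$ we reach $\epstarget_1$ after
\begin{align}
O\!\left(\log\!\left(\sqrt{S/\log K}\right)\right) = O(\log S) \notag
\end{align}
iterations. The assumption $S \leq 1/(\mu^2 \ell \log K)$ places us in the scope of the O-version of Theorem~\ref{th:itksm}, whose limiting error is $O(K^{2-\ell})$; for $\ell \geq 2$ this is well below $1/\sqrt{S}$, so the intermediate target is actually attainable. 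The per-iteration sample budget $N = O(K \log K \epstarget^{-2})$ is dictated by the final target and always exceeds the $O(K \log K \cdot S)$ actually needed in phase one.

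Second I would feed the phase-one output $\pdico_1$, which by construction satisfies both bounds in \eqref{epsmax_r}, into ITKrM and invoke the O-notation version of Theorem~\ref{th:itkrm}. The same sparsity hypothesis again controls the limiting error at $O(K^{2-\ell})$, and after $O(\log(\epstarget^{-1}))$ iterations, each on a fresh batch of $N = O(K \log K \epstarget^{-2})$ samples, the output dictionary $\tilde \pdico$ will satisfy
\begin{align}
d(\tilde \pdico, \dico) \leq \max\!\left\{\epstarget,\, O(K^{2-\ell})\right\}. \notag
\end{align}
A union bound over the total of $O(\log S + \log \epstarget^{-1})$ iterations multiplies each per-iteration failure probability only by a logarithmic factor, and with the stated sample size each probability is $e^{-\Omega(\log K)}$, keeping the total failure probability small.

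The only delicate step will be the calibration of the intermediate target $\epstarget_1 \asymp 1/\sqrt{S}$: it has to sit simultaneously above the ITKsM limiting error $O(K^{2-\ell})$ and below the ITKrM convergence radius $1/(32\sqrt{S})$, and this is precisely what the hypothesis $\ell \geq 2$ combined with $S \leq 1/(\mu^2 \ell \log K)$ ensures. Everything else is accounting: tracking the $0.83$-contraction through $O(\log S)$ phase-one iterations, matching the sample sizes across the two phases so that $N = O(K \log K \epstarget^{-2})$ works uniformly, and union-bounding the failure probabilities.
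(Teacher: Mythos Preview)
Your proposal is correct and follows exactly the approach the paper takes: the paper does not give a formal proof of this O-notation statement but only the one-sentence explanation preceding it, namely to first exploit the large convergence radius of ITKsM to drive the error down to $O(1/\sqrt{S})$ and then switch to ITKrM for its sharper sample-complexity constants. Your write-up simply spells out this two-phase strategy with the correct iteration counts, the intermediate target $\epstarget_1 \asymp 1/\sqrt{S}$, and the union bound over iterations, all of which is faithful to the paper's intent.
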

Unfortunately the better constant in the sample complexity of ITKrM disappears in the O notation and we cannot really see the improvement over ITKsM. We therefore specialise again to noiseless, strongly S-sparse signals.

\begin{corollary}\label{th:itkrm_exact}
Let $\dico$ be a unit norm frame with frame constants $A\leq B$ and coherence $\mu$ and assume that the training signals $y_n$ are generated according to the signal model in \eqref{noisymodel2} with $r=0$ and coefficients that are exactly and strongly $S$-sparse with relative gap $\Delta_S > 2 \mu S$.
Fix a target precision $\epstarget>0$.
If for the input dictionary $\pdico$ we have $d(\pdico, \dico)\leq \frac{1}{32\sqrt{S}}$ and 
\begin{align} \label{epsmax_rnoiseless}
d(\pdico, \dico)\leq \frac{\Delta_S - 2\mu S}{\sqrt{12}\left(\frac{1}{4} +\sqrt{\log\left(\frac{23 K^2\sqrt{B}}{(\Delta_S - 2\mu S) \gamma_{1,S}}\right)}\right)} \qquad \mbox{and} \qquad d(\pdico, \dico)\leq \frac{1}{32\sqrt{S}},
\end{align}
then after $9 \lceil \log( \epstarget^{-1})\rceil$ iterations of ITKrM, each on a fresh batch of $N$ training signals, the output dictionary $\tilde \pdico$ satisfies 
\begin{align}
d( \tilde \pdico, \dico) &\leq \epstarget \notag
\end{align}
except with probability 
\begin{align}
 \quad 27 K \lceil \log( \epstarget^{-1})\rceil \exp\left( \frac{-\gamma^2_{1,S} N \epstarget}{144\,K\max\{S,B\} }\right).
\end{align}
\end{corollary}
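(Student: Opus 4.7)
The plan is to mirror the two-step argument used for Theorem \ref{th:itkrm}, exploiting the three simplifications brought by the extra assumptions: $r=0$ removes all noise terms, exact $S$-sparsity forces $c(S+1)=0$ and $\gamma_{2,S}=1$, and $\Delta_S > 2\mu S$ guarantees that thresholding with the generating dictionary never fails. First I would prove that one iteration of ITKrM contracts the distance by a constant factor $\factor < 1$ except with small probability, and then iterate $9\lceil \log(\epstarget^{-1})\rceil$ times exactly as in Step 2 of the proofs of Theorem \ref{th:itksm} and Theorem \ref{th:itkrm}.

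For the per-iteration contraction I would split $\|\bar{\patom}_k - \gamma_{1,S}\atom_k/K\|_2$ via the triangle inequality into the three familiar contributions: (i) samples on which thresholding with $\pdico$ fails to recover the generating sign--support pair; (ii) the gap on the remaining samples between the oracle residual using $\pdico$ and the oracle residual using $\dico$; and (iii) the concentration of the sum of $\dico$-based oracle residuals around its expectation $\gamma_{1,S}\atom_k/K$. For (i), under $\Delta_S > 2\mu S$ and $r=0$ thresholding with $\dico$ is always strictly correct with margin at least $c_n(S)(\Delta_S - 2\mu S)/\Delta_S$, so re-running the derivation of \eqref{eventEdef}--\eqref{eventFdef} with $u_2=u_4=0$ and with this tighter gap shows that $\mathcal E_n$ is empty and only $\mathcal F_n$ needs to be controlled through the noise-free version of Lemma \ref{lemma1b}. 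This is what yields the sharper radius in \eqref{epsmax_rnoiseless}; the replacement of $\sqrt{98B}$ by $\sqrt{12}$ and of $B+1$ by $B$ is simply a consequence of the vanished noise variance in the Bernstein proxy and of $\|y_n\|_2\leq \sqrt{B}$ in place of $\sqrt{B+1}$. For (ii) I would use the same rescaled perturbation argument as in the proof of Theorem \ref{th:itkrm}, together with the hypothesis $d(\pdico,\dico)\leq 1/(32\sqrt{S})$ that keeps $\|B_I\|_{2,2}$ bounded uniformly over the relevant supports.

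The main obstacle is the concentration in (iii): replacing the $\epstarget^{-2}$ scaling of Theorem \ref{th:itkrm} by the desired $\epstarget^{-1}$ scaling requires a variance bound that is linear rather than constant in $\epstarget$. The crucial observation is that in the exactly sparse noiseless setting $y_n = \dico_{I_n} x_{I_n}$, so the oracle residual collapses to $a_{n,k}(\dico) = P(\atom_k)y_n$ and its second moment is controlled by $\gamma_{2,S}=1$ with essentially only the $k$-th coordinate contributing; simultaneously, once one contracting iteration has been performed the current distance satisfies $\eps \leq 1/(32\sqrt{S})$, which forces the squared norm of $a_{n,k}(\pdico) - a_{n,k}(\dico)$ to be of order $\eps^2$. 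Plugging this second-moment estimate into Bernstein's inequality in place of the Hoeffding-type bound underlying \eqref{itkrm_failure} replaces $\epstarget + 1 - \gamma_{2,S} + d\nsigma^2$ in the denominator by $\epstarget$ alone, converting the exponent $-\gamma_{1,S}^2 N \epstarget^2/(576 K\max\{S,B+1\}(\ldots))$ into the claimed $-\gamma_{1,S}^2 N\epstarget/(144 K\max\{S,B\})$. Taking the union bound over $K$ atoms and $L = 9\lceil\log(\epstarget^{-1})\rceil$ iterations with a constant $3K$ per-iteration failure probability then gives the stated overall failure bound $27 K \lceil \log(\epstarget^{-1})\rceil \exp(-\gamma^2_{1,S} N \epstarget/(144 K\max\{S,B\}))$.
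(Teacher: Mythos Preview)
Your plan is essentially the paper's own: it follows the proof of Theorem~\ref{th:itkrm}, drops the $\mathcal E_n$ contribution because strong sparsity makes thresholding with $\dico$ exact, replaces Lemma~\ref{lemma1b}(a) by its noise-free variant (b) with gap $\Delta_S-2\mu S$ (this is what produces the $\sqrt{12}$ and the $23K^2\sqrt{B}$ in \eqref{epsmax_rnoiseless}), uses Lemma~\ref{lemma4}(c) for the oracle-residual difference since $\Delta_S>2\mu S$ forces $\delta_S\leq\mu S<1/2$, and iterates. So the skeleton is right.

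Two places where your narrative is slightly off compared to the actual mechanism. First, what you call the ``main obstacle'' in (iii) is in fact the \emph{trivial} part here: in the exactly sparse noiseless regime $y_n-P(\dico_{I_n})y_n\equiv 0$, so the Lemma~\ref{lemma3a} term disappears identically and the paper's split collapses to just two terms,
\[
\|\bar\patom_k - s_k\atom_k\|_2 \leq \tfrac{2\sqrt{B}}{N}\,\sharp\{n:y_n\in\mathcal F^s_n\} + \tfrac{1}{N}\Big\|\sum_n\big[R^o(\pdico,y_n,k)-R^o(\dico,y_n,k)\big]\Big\|_2,
\]
with $s_k$ controlled separately by Lemma~\ref{lemma3b}. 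There is no concentration to prove for (iii) beyond the scalar $s_k$. Second, the $\epstarget^{-1}$ improvement does not come from a variance bound on (iii); it comes from Lemma~\ref{lemma4}(c), where \emph{both} the second moment $\E\|v\|_2^2=O(\eps^2 S/K)$ \emph{and} the sup-norm $\|v\|_2=O(\eps(\sqrt{S}+\sqrt{B}))$ scale with $\eps$. You mention only the variance, but the sup-norm scaling is what keeps the Bernstein exponent of the form $-t_3\gamma_{1,S}^2 N/(32\eps K\max\{S,B\})\min\{t_3/\eps,1\}$, which with $t_3\propto\max\{\epstarget,\eps\}$ yields the linear-in-$\epstarget$ exponent. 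The thresholding-failure term from Lemma~\ref{lemma1b}(b) then matches this rate and the union bound over $9\lceil\log(\epstarget^{-1})\rceil$ iterations and $K$ atoms gives the stated $27K\lceil\log(\epstarget^{-1})\rceil$ prefactor. Also note that $\eps\leq 1/(32\sqrt{S})$ is already a hypothesis, not something earned after one iteration.
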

The proof sketch can be found in the Appendix~\ref{proof_itkrm_exact}.\\

The above corollary clearly reveals the influence of the underlying signal model on dictionary learning results. So assuming that the signals are noiseless and exactly sparse and that $S$ is only of the order $O(\mu^{-1})=O(\sqrt{d})$, we get that one iteration of ITKrM will reduce the error as long as the number of samples scales as $O(K \eps^{-1})$, meaning the influence of the target error is reduced by a factor $\eps^{-1}$!\\
Again combining with ITKsM and assuming that the stronger restriction on the convergence radius is the second bound in \eqref{epsmax_rnoiseless}, we get the following quantitative results.
\begin{ocorollary}[\bf\ref{th:itkrm_exact}] Assume that in each iteration the number of noiseless, exactly S-sparse training signals scales as $O(K\log K \epstarget^{-1})$. If $S\leq O(\mu^{-1})$ then with high probability for any starting dictionary $\pdico$ within distance $\eps\leq O(1/\sqrt{\log K})$ to the generating dictionary after $O( \log(S))$ iterations of ITKsM and $O( \log( \epstarget^{-1}))$ iterations of ITKrM, each on a fresh batch of training signals, the distance of the output dictionary $\tilde \pdico$ to the generating dictionary will be smaller than $\epstarget$.
\end{ocorollary}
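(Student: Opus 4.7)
The plan is to chain the O-Corollary for ITKsM (the qualitative version of Corollary~\ref{th:itksm_exact}) with the quantitative Corollary~\ref{th:itkrm_exact} for ITKrM, using ITKsM to bootstrap from the large convergence radius $O(1/\sqrt{\log K})$ down to the ITKrM convergence radius $O(1/\sqrt{S})$, and then switching to ITKrM to exploit its sharper $\epstarget^{-1}$ sample-complexity scaling. The hypothesis $S\leq O(\mu^{-1})$ together with $\Delta_S = O(1)$ immediately gives $\Delta_S \geq 2\mu S$, so the strong sparsity prerequisite of both exact-sparse results is in force.

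Phase 1 (ITKsM warm-up). Starting from $\pdico$ with $d(\pdico,\dico)\leq O(1/\sqrt{\log K})$, I would apply the O-Corollary of Theorem~\ref{th:itksm_exact} with intermediate target error $\epstarget_1 := 1/(32\sqrt{S})$, which is exactly the residual convergence bound required by Corollary~\ref{th:itkrm_exact}. Since each ITKsM iteration contracts the error by a constant factor (the $0.83$ contraction established in Step~1 of the proof of Theorem~\ref{th:itksm}), reaching $\epstarget_1$ from an initial error of order $1/\sqrt{\log K}$ takes $L_1 = O(\log(\sqrt{S/\log K})) = O(\log S)$ iterations. The per-iteration failure bound from Corollary~\ref{th:itksm_exact} becomes $K\exp\bigl(-\Omega(\gamma_{1,S}^2 N \epstarget_1^2/(SK))\bigr) = K\exp\bigl(-\Omega(N/(SK))\bigr)$ after using $\gamma_{1,S}^2 = \Theta(S)$, which is controlled whenever $N = \Omega(SK\log K)$. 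Since we need the ITKrM phase only when $\epstarget \lesssim 1/\sqrt{S}$, the uniform hypothesis $N = O(K\log K \epstarget^{-1})$ dominates this requirement.

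Phase 2 (ITKrM refinement). After Phase 1 the dictionary satisfies $d(\pdico,\dico)\leq 1/(32\sqrt{S})$, which fulfils the second convergence-radius condition in \eqref{epsmax_rnoiseless}; the first one is satisfied automatically for $\eps \leq O(1/\sqrt{S})$ and $S = O(\mu^{-1})$, since then $\Delta_S - 2\mu S = \Omega(1)$ and the required bound becomes $O(1/\sqrt{\log K})$. I then invoke Corollary~\ref{th:itkrm_exact} directly with target $\epstarget$, which yields recovery in $L_2 = O(\log \epstarget^{-1})$ iterations, provided the per-iteration sample count satisfies $N = \Omega\bigl(K\max\{S,B\}\epstarget^{-1}\log K/\gamma_{1,S}^2\bigr)$. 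Substituting $\gamma_{1,S}^2 = \Theta(S)$ and $B = O(K/d) = O(S)$ (for incoherent tight-ish frames with $S=O(\mu^{-1})$) collapses this to $N = \Omega(K\log K \epstarget^{-1})$, matching the hypothesis exactly.

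A single union bound over $L_1 + L_2 = O(\log S + \log \epstarget^{-1}) = O(\log \epstarget^{-1})$ iterations then delivers the claim with high probability. There is essentially no technical obstacle — the work is purely bookkeeping, the only subtlety being to verify that the single uniform budget $N = O(K\log K \epstarget^{-1})$ suffices for both phases; this hinges on the fact that the intermediate ITKsM target is the constant $\Theta(1/\sqrt{S})$ rather than $\epstarget$, so the stronger $\epstarget^{-2}$ dependence of ITKsM is only paid against a fixed precision floor.
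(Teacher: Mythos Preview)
Your proposal is correct and follows exactly the approach the paper itself takes: the paper's only justification for this O-Corollary is the one-line remark ``Again combining with ITKsM and assuming that the stronger restriction on the convergence radius is the second bound in \eqref{epsmax_rnoiseless}'', and you have simply fleshed this out into the natural two-phase argument (ITKsM to reach $1/(32\sqrt{S})$, then ITKrM to reach $\epstarget$).

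One small bookkeeping slip worth flagging: your claim that the uniform budget $N=O(K\log K\,\epstarget^{-1})$ dominates the Phase~1 requirement $N=\Omega(SK\log K)$ does not follow from $\epstarget\lesssim 1/\sqrt{S}$ alone, since that only gives $\epstarget^{-1}\gtrsim\sqrt{S}$, not $\epstarget^{-1}\gtrsim S$. The argument is clean only once $\epstarget\leq O(1/S)$; for $\epstarget$ in the intermediate range $[1/S,1/\sqrt{S}]$ the ITKsM phase formally needs a slightly larger batch. This is harmless in the asymptotic O-notation reading (where $\epstarget\to 0$ eventually forces $\epstarget^{-1}\geq S$), and the paper glosses over it too, but your stated justification for that step is not quite right as written.
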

Before a final discussion of our results we first illustrate our theoretical findings with some numerical simulations, which give interesting insights into the average convergance radius of the algorithms and indicate that in practice ITKrM can be a very powerful low complexity alternative to K-SVD.

\section{Numerical Simulations}\label{sec:sim}
To complement our theoretical findings, we conduct two small numerical experiments both on synthetic and real data\footnote{A Matlab Swiss knife (mini-toolbox) for playing with ITKrM and reproducing the experiments can be found at \url{http://homepage.uibk.ac.at/~c7021041/ITKrM.zip}.}. First
we test the average case convergence radius and speed of the ITKsM and ITKrM algorithm, by 
running both algorithms on noiseless and noisy training data, using three different types of
initialisations with varying distance to the generating dictionary.\\
We generate our training signals based on the signal model in \eqref{noisymodel2}.
As generating dictionary $\dico$ we choose the dictionary
consisting of the Dirac basis and the first half of the elements of the 
discrete cosine transform basis in $\R^{d}$ with $d=256$, meaning
$K=3*d/2 = 384$, which has coherence $\mu = \sqrt{2/d} \approx 0.088$. 
Given a sparsity level $S$, to simulate noiseless, exactly sparse signals we choose $c$ with
$c_1=\ldots=c_S=1/\sqrt{S}$ and $c_k=0$ for $k>S$, meaning dynamic range 1. To simulate
noisy signals with a higher dynamic range we choose a decay parameter $c_b$ uniformely
at random in [0.9,1], and let the first $S$ entries be a geometric sequence, that is
$c_k=b_0 * c_b^k$ for $k\leq S$ and $c_k=0$ for $k>S$, where $b_0$ is a scaling parameter ensuring
that $\|c\|_2=1$. The noise $r$ is chosen as a centered Gaussian with variance $1/d$,
that is $r(k) \sim \mathcal N (0, 1/\sqrt{d})$, resulting in an expected signal to noise ratio of~1. 
The three different types of initialisations are created by first choosing vectors $z_k$ uniformly  
at random from the unit sphere in $\R^d$, and then setting $$\patom_k =\alpha \cdot \atom_k + \omega \cdot \frac{Q(\atom_k) z_k}{\| Q(\atom_k) z_k \|_2}$$ for the ratios
$\alpha:\omega=1:1$ and $\alpha:\omega=1:4$. We also consider the completely random initialisation 
$\patom_k = z_k$. \\
For each initialisation dictionary we then run 100 iterations of ITKsM and ITKrM with the true sparsity level and dictionary size as input parameters, each time using a new batch of 100000 noiseless, respectively noisy signals. Figure~\ref{fig1} shows the average convergence respectively recovery rates over 20 trials for the three types of initialisations, using noiseless or noisy signals and for sparsity levels $S=4,8,12,16$. 

\begin{figure}[hp]
\begin{tabular}{cc}
 \includegraphics[width=8cm]{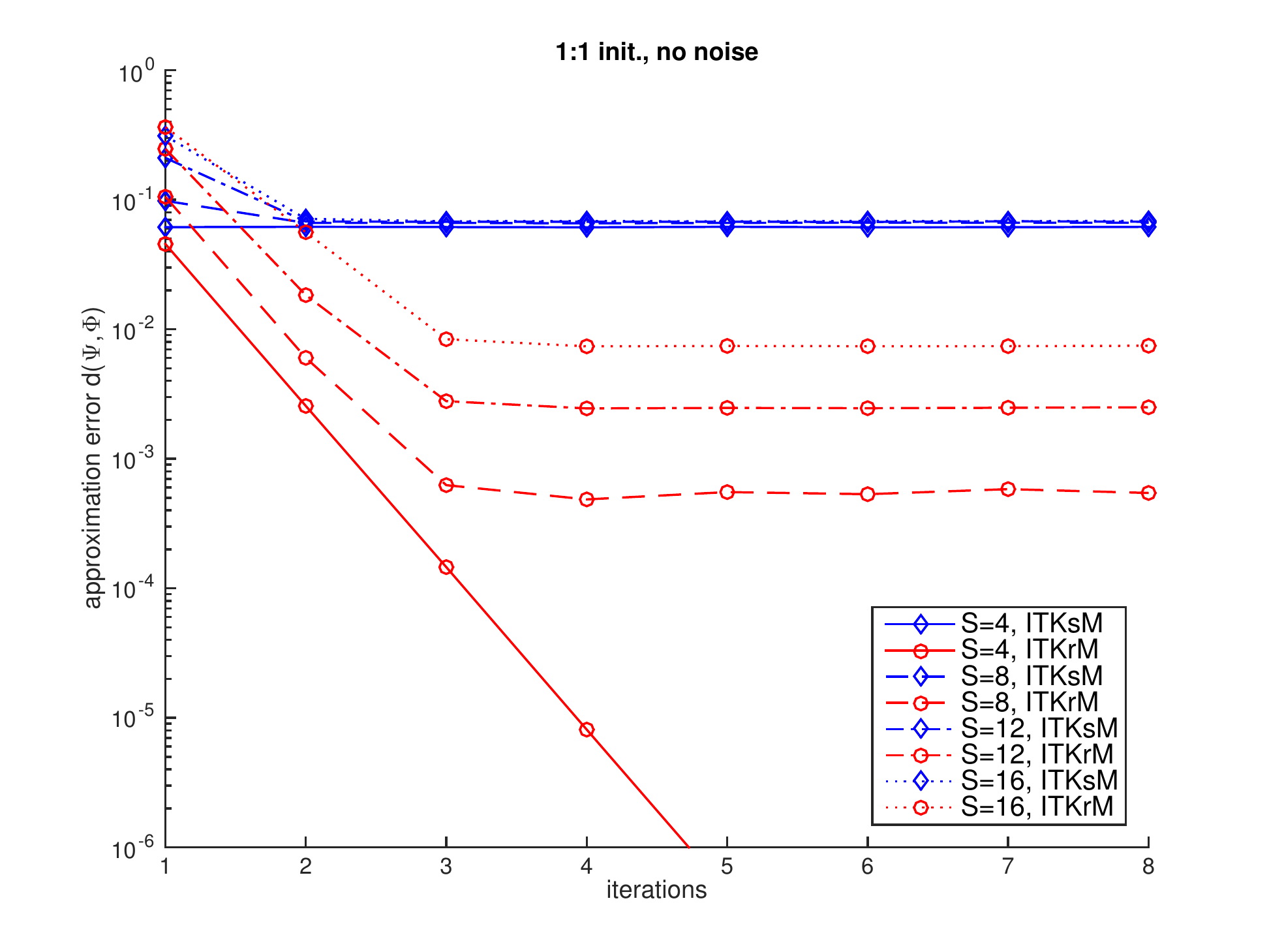} & \includegraphics[width=8cm]{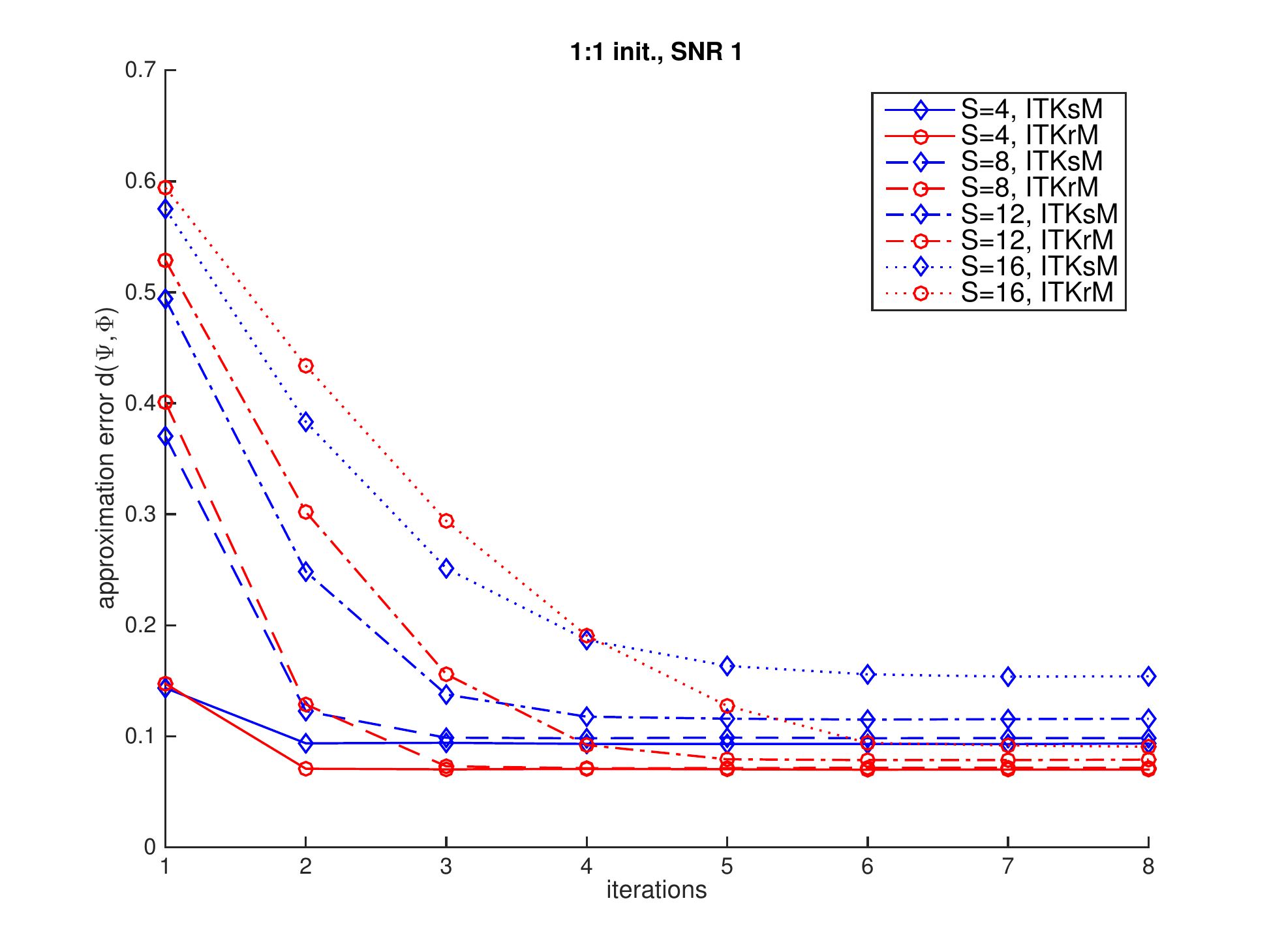}\\
  (a) & (b)\\
   \includegraphics[width=8cm]{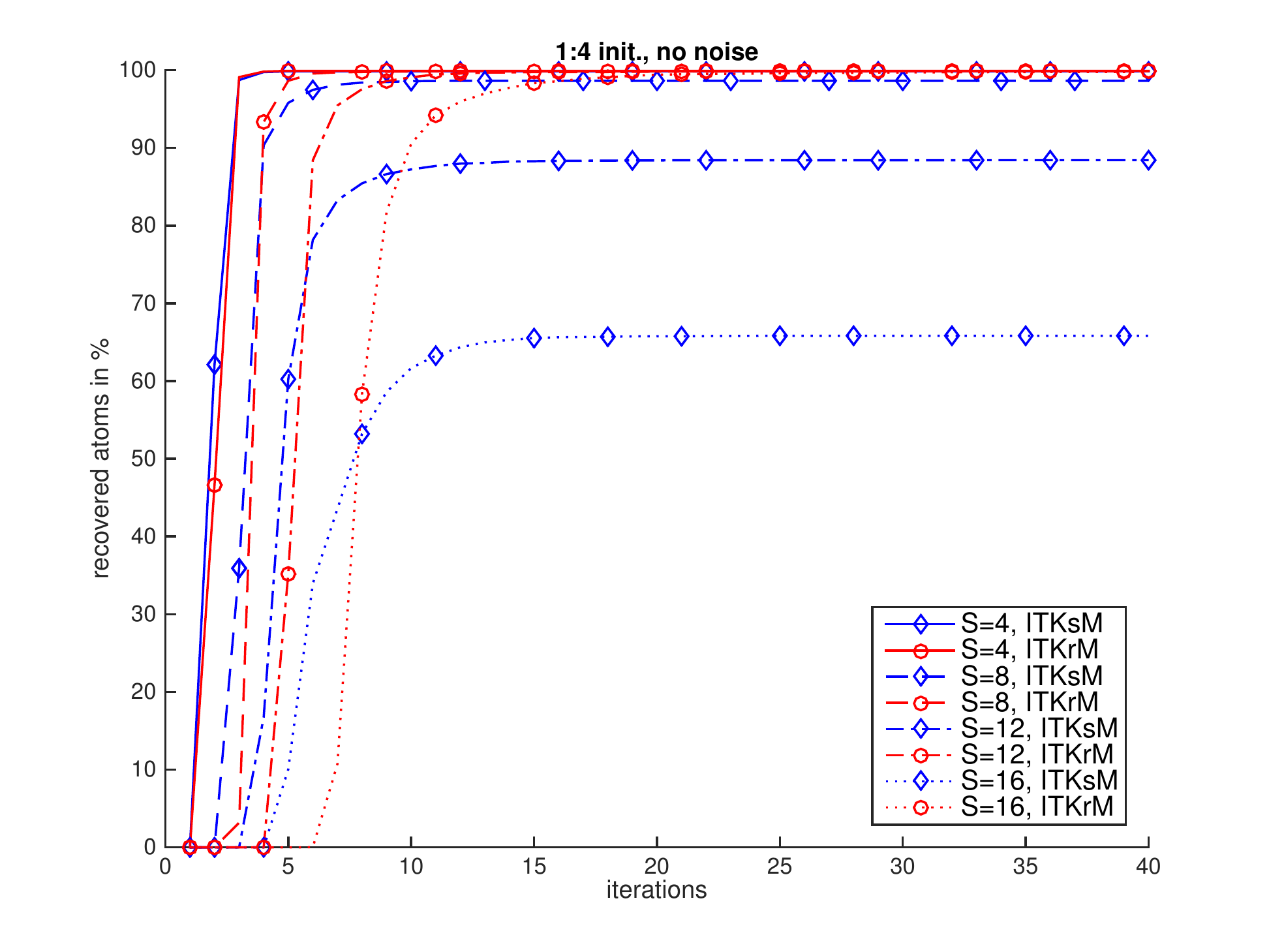} & \includegraphics[width=8cm]{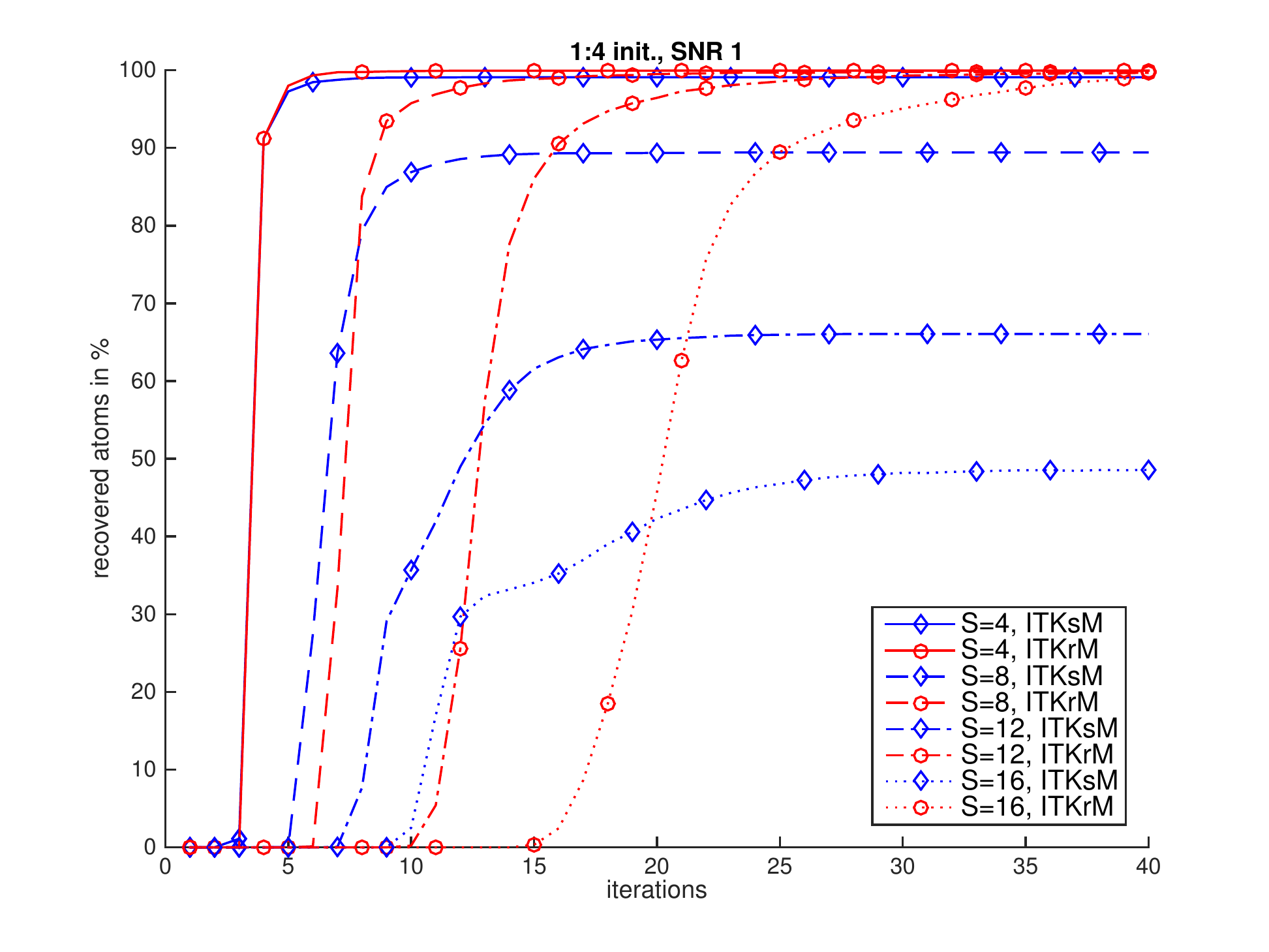}\\
  (c) & (d)\\
   \includegraphics[width=8cm]{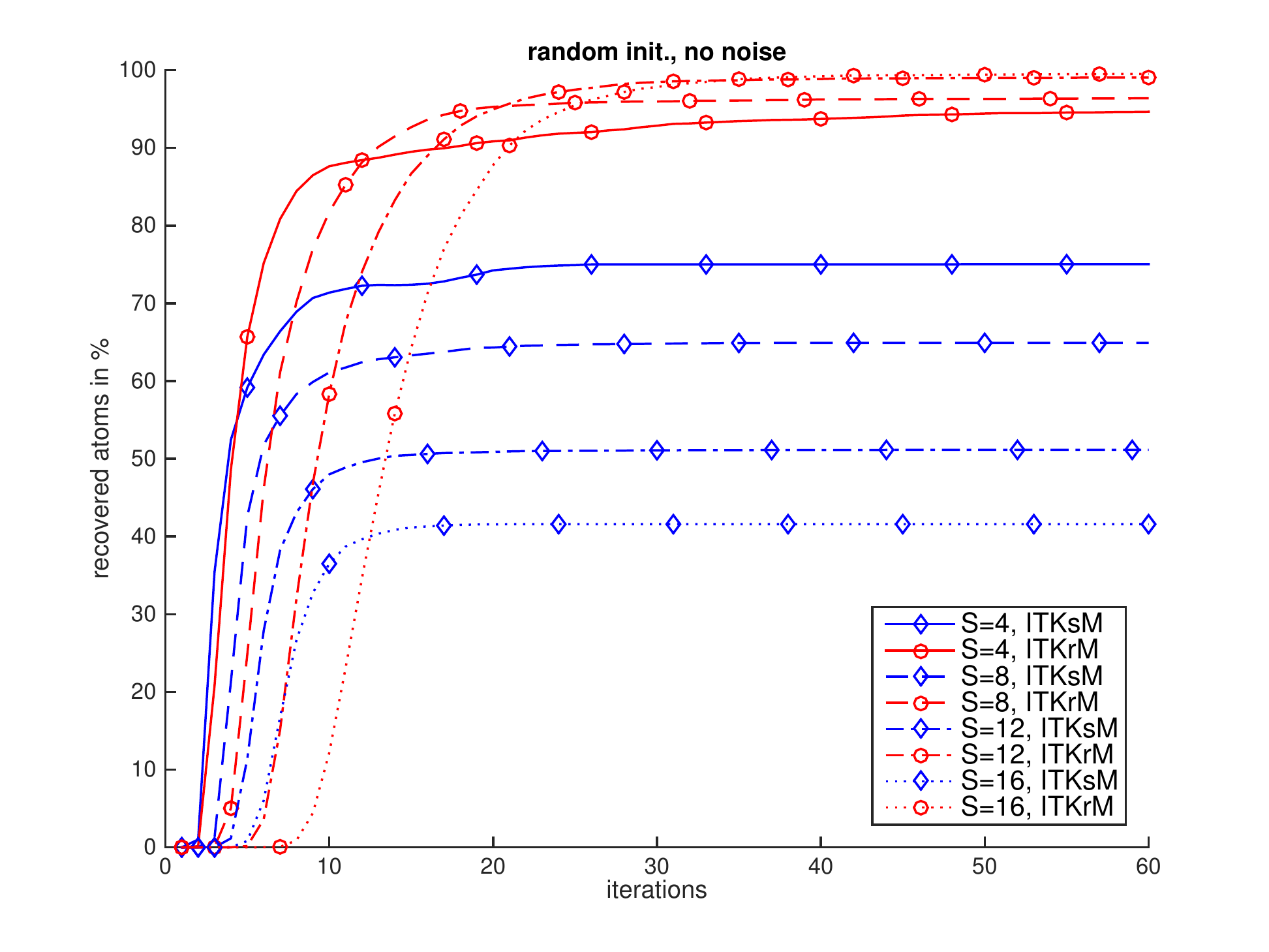} & \includegraphics[width=8cm]{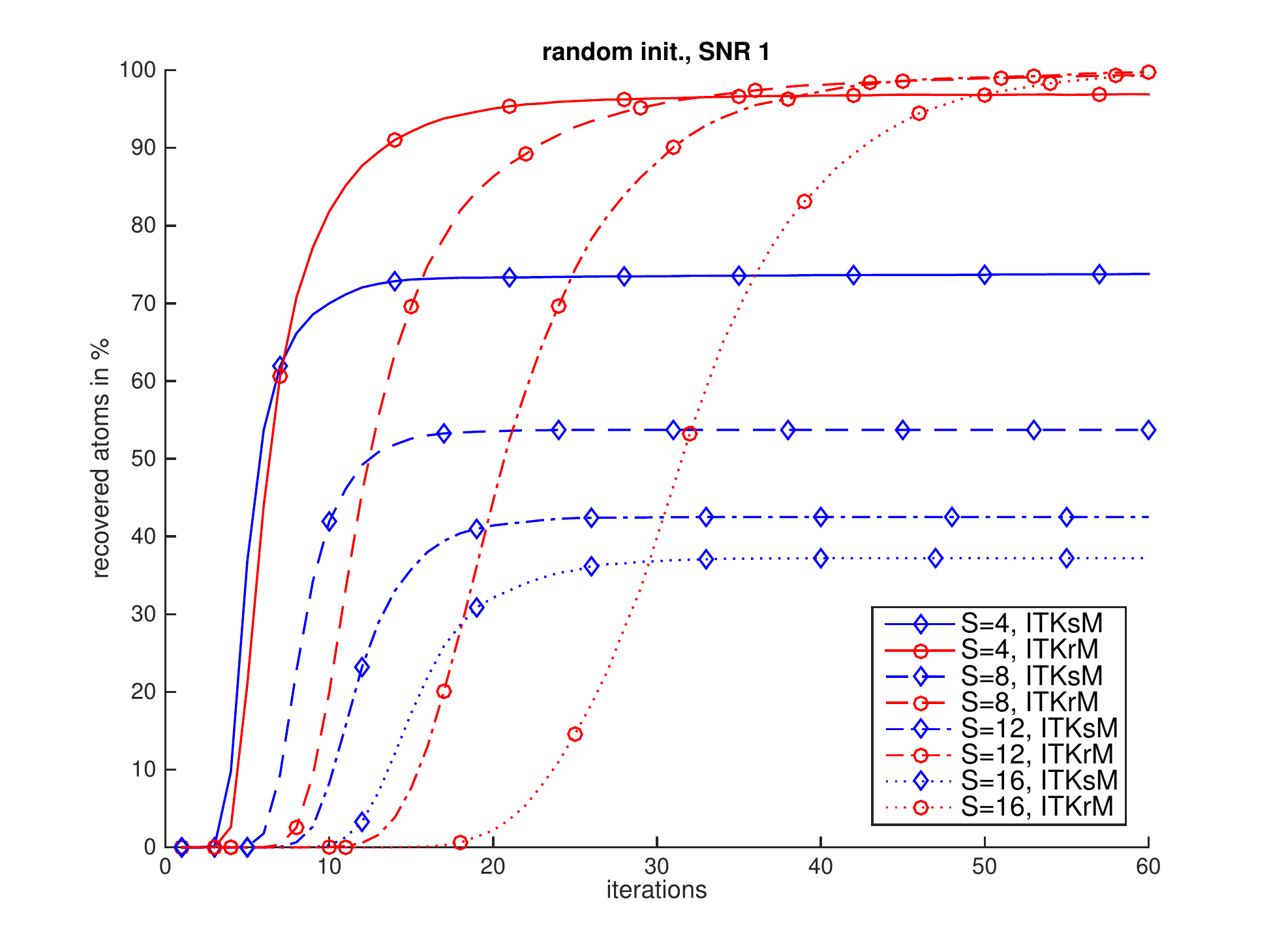}\\
  (e) & (f)\\
  \end{tabular}
    \caption{Convergence respectively recovery rates of ITKsM and ITKrM for three initialisation types, corresponding to increasing distance to the generating dictionary, and using training signals
    with varying sparsity levels both in the noiseless and noisy case. \label{fig1}}
\end{figure}
 
For the $1:1$ initialisations, despite the fact that the corresponding distance between the initialisations and the generating dictionary is much larger than the our estimated convergence radius, $d(\Psi,\Phi) = \sqrt{2-\sqrt{2}} \approx 0.7654 \gg 1/\sqrt{\log K}$, both algorithms always converge to the generating dictionary, so we plot the distance $d(\Psi^{(n)},\dico)$ between the generating dictionary $\dico$ and the output dictionary
of the n-th iteration $\Psi^{(n)}$, Figure~\ref{fig1}(a/b). As predicted by our theoretical results, using the same number of signals, ITKrM always leads to a more accurate estimate than ITKsM. As shown in Figure~\ref{fig1}(a) for the noiseless signals with dynamic range 1 this difference is quite pronounced and especially in the case $S=4\leq \mu^{-1}/2$, the regime of unique sparsity, the precision of ITKrM is limited rather by the machine precision rather then the amount of training signals. From Figure~\ref{fig1}(b) we see that both algorithms are locally stable even for the comparatively low signal to noise ratio $SNR=\E(\|r\|_2^2)/\E(\|\dico_x\|_2) = 1$ and coefficients with dynamic ranges varying between 1 and $0.9^{1-S}$.\\
For the $1:4$ initialisations, corresponding to distance $d(\Psi,\Phi) = \sqrt{2-2/\sqrt{17}} \approx 1.2308$ between the initialisations and the generating dictionary, we do not always have convergence to the generating dictionary.  We therefore plot the percentage of atoms recovered by the algorithm, using the convention that an atom $\atom_k$ is recovered if $\max_\ell |\ip{\patom_\ell^{(n)}}{\atom_k}| \geq 0.99$, compare \cite{ahelbr06}. Counterintuitively to our theoretical results ITKrM turns out to be much more stable to far away initialisations. As we can see from Figure~\ref{fig1}(c), in the case of noiseless signals ITKrM always recovers more than 99\% of the atoms, while the recovery rate of ITKsM deteriorates quite drastically as the sparsity parameter $S$ increases. To be more precise after 100 iterations ITKrM recovers the full dictionary for $17, 17, 15$ and $8$ out of 20 initialisations for $S$ taking values $4, 8, 12$ and $16$ respectively, while ITKsM can only recover the full dictionary in case $S=4$, (15 out of 20 initialisations), and for all other sparsity levels fails every time. The better performance of ITKrM is further confirmed by the results for noisy signals shown in Figure~\ref{fig1}(d). While the recovery rates of ITKsM deteriorate further and even for $S=4$ ITKsM can never recover the full dictionary, ITKrM continues to perform well. Indeed for ITKrM we can report a dithering effect, that is a better performance
in noisy conditions, as ITKrM recovers the full dictionary 18 out of 20 times for $S=4$ and always recovers the full dictionary for the other sparsity levels.\\
For the random initialisations we again plot the recovery rates, which confirm the trends observed for the $1:4$ initialisations, Figure~\ref{fig1}(e/f).
While the recovery rates of ITKsM are at best around 73\% in the noiseless case for $S=4$ decreasing to around $35\%$ in the noisy case for $S=16$, ITKrM always manages to recover at least 93\% of the atoms. Interestingly even though again recovery speed decreases as $S$ increases, both in the case of noiseless and noisy signals the recovery rates increase with $S$. So in the case of noiseless signals for $S=12$ and $S=16$ we again get a more than 99\% recovery rate and can even report one respectively two full recoveries. In the case of noisy signals the dithering effect is now clearly visible and remarkably in case $S=8$ ITKrM can recover the full dictionary 17 out of 20 times and for $S=12,16$ we always get full recovery.\\
Finally we also conduct a small experiment on image data to show that the more promising  ITKrM algorithm is not merely a pretty toy for synthetic set-ups but indeed useful in practice. In particular for two $256\times 256$ images, Fabio and Barbara, we take all $62 001$ possible $8\times 8$ patches, normalise them and afterwards subtract their mean, that is we project the patches onto the orthogonal complement of the constant atom $\atom_1\equiv 1/8$. On these patches we then learn a dictionary of 63 atoms, corresponding to the dimensionality of the signals after subtracting the mean (d=K=63). To be precise, we use a random initialisation, set the sparsity level $S=5$, and in each of the 100 iterations use 10000 randomly selected patches. Figure~\ref{fig2} shows the two images together with their respective learned dictionaries (including the constant atom~$\atom_1$).

\begin{figure}[t]
\begin{tabular}{cc}
 \includegraphics[height=6cm]{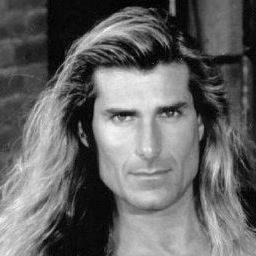} & \includegraphics[height=6cm]{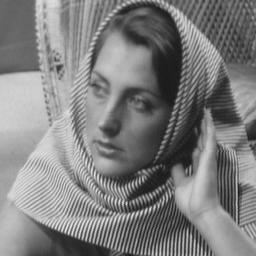}\\
  (a) & (b)\\
   \includegraphics[width=8cm]{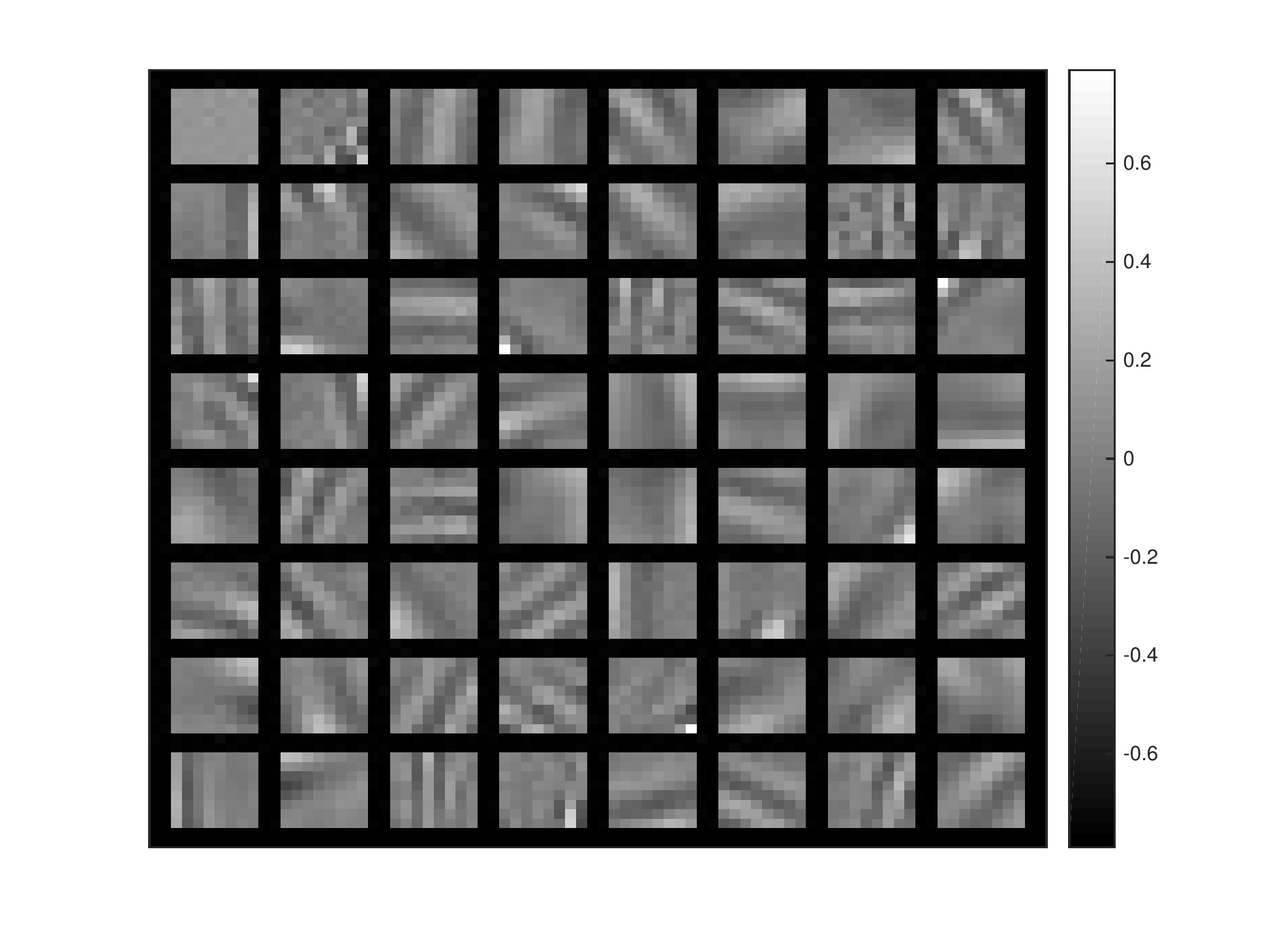} & \includegraphics[width=8cm]{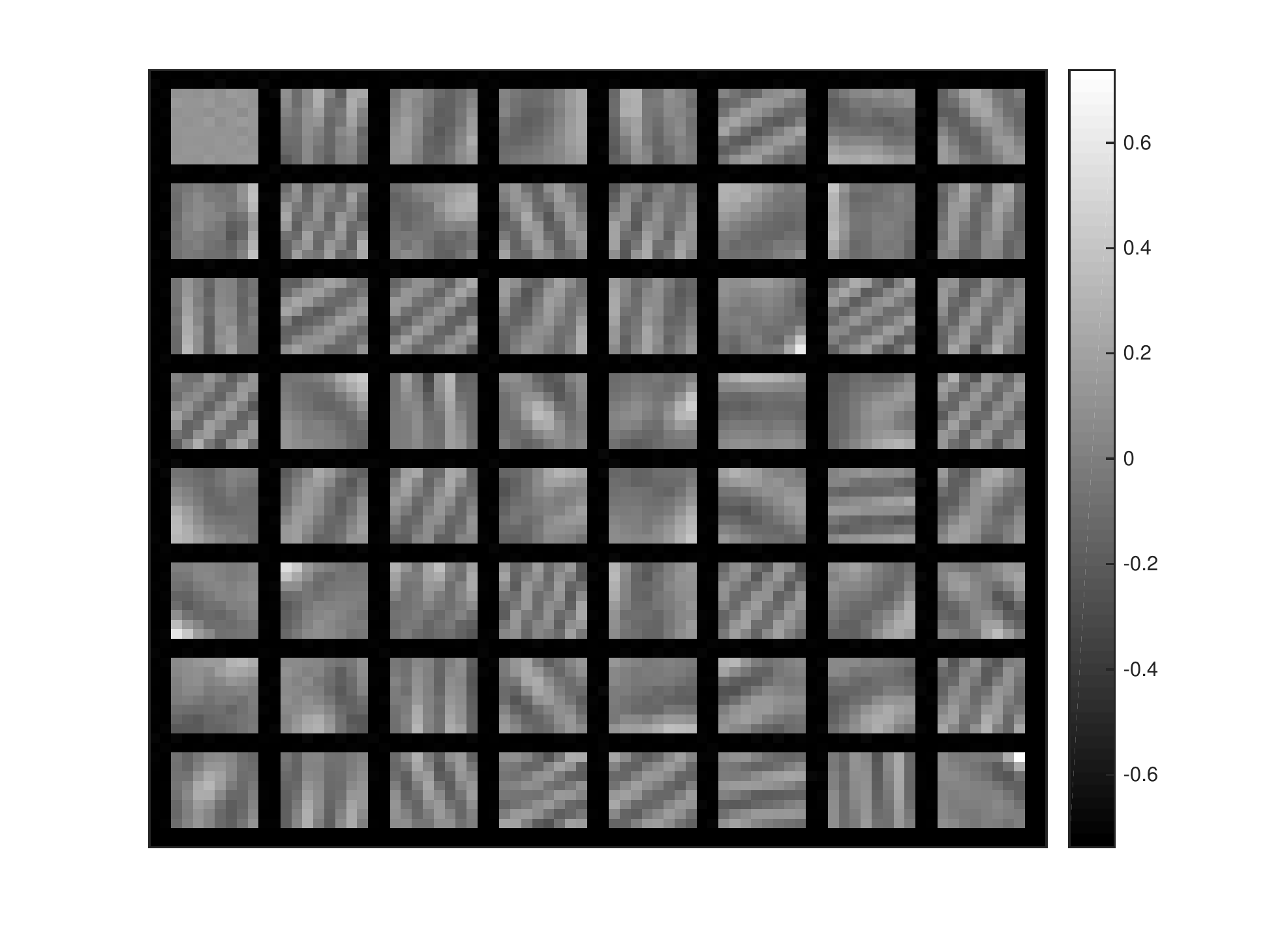}\\
  (c) & (d)\\
  \end{tabular}
    \caption{$256\times 256$ images together with the dictionaries learned on their $8\times8$ patches, (a,c) Fabio, (b,d) Barbara. \label{fig2}}
\end{figure}
 
As we can see ITKrM is able to calculate meaningful dictionaries also on real data. In particular observe that even though we have used the same initialisation the dictionary learned on Barbara contains a lot more high frequency wave-like atoms, which capture the texture of the scarf. For the sake of conciseness we do not go into more details about the approximation performance of the learned dictionaries or possible image processing applications here but refer the interested reader to \cite{ho16, sc16, nasc16}. 
Instead we now turn to a final discussion of our results.
 
\section{Discussion}\label{sec:discussion}

We have shown that iterative thresholding and K-means is a very attractive dictionary learning method, since it has low computational complexity, $O(dKN)$ omitting log factors, can be used in parallel or online, has convergence radius $O(1/\sqrt{\log K)}$ and sample complexity $O(K\log K \epstarget^{-2})$ for a target error $\epstarget$, which reduces to $O(K\log K \epstarget^{-1})$ in the case of noiseless exactly sparse signals. Further to the best of our knowledge it is the only algorithm for learning overcomplete dictionaries, that is proven to be (locally) stable for sparsity ranges up to a log factor of the ambient dimension - that is recovery down to a target error $K^{-\ell}$ for sparsity levels $S$ up to $O(\mu^{-2}/(\ell\log K)) = O (d/(\ell\log K)$.\\
As such it improves on related results in terms of computational efficiency, convergence radius and admissible sparsity level, \cite{aganjaneta13}, or in terms of achievable error and admissible sparsity level, \cite{argemamo15}. In the case of noiseless signals, which is the only valid regime for \cite{aganjaneta13}, the sample complexity is in comparison larger by a factor $\eps^{-1}$. However, note that there are information theoretic results indicating that in the case of noisy signals the dependence of the sample complexity on the inverse squared target error $\eps^{-2}$ is optimal, \cite{juelgo14}. For an overview of results for iterative dictionary learning algorithms see Table~1. For a more general overview of theoretical results in dictionary learning see Table~1 in \cite{bagrje14}.

\newcommand*\rot{\rotatebox{60}}
\newcommand{\cmark}{\ding{51}}%
\newcommand{\xmark}{\ding{55}}%
\begin{table}[t]\label{thetable}
\centering
\begin{tabular}{lcccccc}
& \rot{\shortstack[l]{online\\parallelisable}}& \rot{\shortstack[l]{noise\phantom{p}\\stability}}&\rot{\shortstack[l]{convergence\\radius}}
& \rot{\shortstack[l]{admissible\phantom{p}\\sparsity S}} & \rot{\shortstack[l]{sample\\complexity}}
& \rot{\shortstack[l]{achievable\\ error $\epstarget>\cdot$ \phantom{d}}}\\

Agarwal et.al. \cite{aganjaneta13} &\xmark &\xmark & $S^{-2}$& $\min \{\mu^{-1}, d^{1/6}\}$& $ (K^2/S)$ & (0) \\
Arora et.al. \cite{argemamo15}&\cmark &\cmark & $(\log{d})^{-1}$  &$\mu^{-1}$&$SK\,^*$& $\sqrt{S/d}\,^*$  \\
ITKsM&\cmark &\cmark & $ (\log K)^{-1/2}$& $\mu^{-2}$&$K \epstarget^{-2}$&$K^{-\ell} \, (0)$\\
ITKrM&\cmark &\cmark & $S^{-1/2}$& $\mu^{-2}$&$K \epstarget^{-2} (K \epstarget^{-1})$&$K^{-\ell}\, (0)$\\
\\
\end{tabular}
 
\parbox{0.8 \textwidth}{To be read as $O(\cdot)$, non-leading log-factors omitted, noiseless case with $S\leq\mu^{-1}$ in brackets.\\
$^*${\scriptsize valid for Algorithm 2. Algorithm 5 seems to achieve similar errors as ITKsM/ITKrM but at significantly higher computational cost.
Further the dependence of its sample complexity on the target error is not made explicit.}}

\caption{Comparison of theoretical results for iterative dictionary learning algorithms.}
\end{table}

Further we have shown that in synthetic experiments the computationally more involved algorithm ITKrM often converges globally, when initalized with a random dictionary. This together with the fact that the algorithm is also able to learn meaningful dictionaries on image data makes it an attractive low complexity alternative to K-SVD. \\
The global convergence behaviour of ITKrM comes partly as a surprise since for ITKrM we can only prove a convergence radius of the order $O(1/\sqrt{S})$ as opposed to $O(1/\sqrt{\log K)}$ for ITKsM. It also indicates that one might be able to increase the convergence radius of the algorithms by making additional assumptions on the perturbation dictionary, that is the normalised difference between the input and the generating dictionary, such as good conditioning and incoherence like the random perturbations in our experiments. All that then remains to show is that most perturbations have this additional property and that one iteration of ITKrM conserves the property respectively that an additional small corrective step can restore the property. \\
For the theoretical results, another slight disappointment hidden in the O notation is, that both the convergence radius and implicitly also the limiting precision decrease with the dynamic range of the coefficients. This seems unavoidable since the success of thresholding depends on the dynamic range. So while we could improve our results to depend on an average dynamic range instead of the worst dynamic range by assuming a probability distribution on the dynamic range in our proofs, this average dynamic range will remain a limitation. To remove the dependence on the dynamic range we would have to replace thresholding by another sparse approximation method such as (Orthogonal) Matching Pursuit or Basis Pursuit, which is used in \cite{aganjaneta13}. However, the only method that is known to be on average stable for sparsity levels $S\geq \sqrt{d}$ is Basis Pursuit, \cite{tr08}, and it will need some work to extend the corresponding results to perturbed dictionaries, noise and approximation error all at the same time. A maybe less daunting strategy is to extend the stability results for thresholding to iterative (hard) thresholding methods, \cite{blda08,blda09, fo11, jateka14}. Another strategy to overcome large dynamic ranges, we are interested in, which would at the same time remove the requirement of knowing the exact sparsity level, is to extend our results to the case where we can only assure a gap between $c_S$ and $c_{S+T}$ for $T>1$. \\
The most important research directions are concerned with the globality of the results. To get to an efficient algorithm we need to find initialisation strategies, such as in \cite{argemamo15}, that remain cost efficient also for sparsity levels $S=O(\mu^{-2}/(\ell\log K))$. An alternative strategy, we are currently pursuing, is based on the earlier mentioned additional assumptions. If the perturbation dictionary not only has a flat spectrum but is itself incoherent and incoherent to the generating dictionary we expect one step of ITKrM to reduce the perturbation sizes but to keep the perturbation directions roughly the same. Estimating the volume of 'good' perturbations we could then calculate the probability that a random initialisation is successful or, in case this probability is too small, add a corrective step that restores the good properties of the current iterate. \\

\appendices

\section*{Acknowledgements}
This work was supported by the Austrian Science Fund (FWF) under Grant no.~J3335 and Grant no.~Y760
In addition the numerical simulations were supported by the Austrian Ministry of Science (BMWF) as part of the UniInfrastrukturprogramm of the Focal Point Scientific Computing at the University of Innsbruck.
\\ 
Thanks go also to the reviewers for their corrections and helpful suggestions and to the Computer Vision Laboratory of the University of Sassari, Italy, which provided the beautiful surroundings, where the inspirational part of the presented work was done.
\newpage
\section{Proof Sketches}\label{appendix_proofs}

\subsection{Proof of Corollary~\ref{th:itksm_exact}} \label{proof_itksm_exact}
The proof is analogue to the one of Theorem~\ref{th:itksm}. We only need to take into account that without noise we have $C_r=1$ and that in all estimates the constant $B+1$ can be replaced by $B$, since for noise free signals $y_n=\dico x_{c_n, p_n,\sigma_n}$ we have $\|y_n\|_2 \leq B$. Further since the coefficients are strongly $S$-sparse, thresholding using the generating dictionary $\dico$ will always (almost surely) recover the generating support with a margin $u_s \geq (\Delta_S - 2\mu S) c_n(1)$, that is $\min_{k\in I_n} |\ip{\atom_k}{y_n}| \geq \max_{k\notin I_n} |\ip{\atom_k}{y_n}| + u_s$, compare \cite{sc14b}. Therefore the event that thresholding using $\pdico$ fails or that the empirical signs differ from the generating ones is contained in 
\begin{align}\label{eventFsdef}
\mathcal F^s_n&:= \Big\{y_n:  \exists k \mbox{ s.t. } \omega_k\Big|\sum_j  \sigma_n(j) c_n\big(p_n(j)\big) \ip{\atom_j}{ z_k}\Big| \geq \frac{u_s -\frac{\eps^2c_n(S)}{2}}{2} \Big\}
\end{align}
and we get
\begin{align}
\left\| \bar \patom_k - \frac{\gamma_{1,S}}{K}\atom_k\right\|_2 &\leq \frac{2 \sqrt{B}}{N} \sharp \{ n: y_n \in \mathcal F^s_n\} + \left\|\frac{1}{N} \sum_n y_n \, \sigma_n(k) \, \chi(I_n,k) -  \frac{\gamma_{1,S}}{K}\atom_k\right\|_2,
\end{align}
which can be estimated as before.

\subsection{Proof of Theorem~\ref{th:itkrm}} \label{proof_itkrm}
As already mentioned we use the same two step procedure and ideas as in the proof of Theorem~\eqref{th:itksm}. \\
\noindent \emph{Step 1:} We first check how often thresholding with $\pdico$ fails. Assuming thresholding recovers the generating support we show that the difference of the residuals using
$\dico$ or $\pdico$ concentrates around its expectation, which is small. Finally we show that the sum of residuals using $\dico$ converges to a scaled version of $\atom_k$. 
To make the ideas precise we define the thresholding residual based on $\pdico$
\begin{align} \label{defRt}
R^t(\pdico, y_n, k) := \big[y_n - P(\pdico_{I_{\pdico,n}^t}) y_n + P(\patom_k) y_n\big] \cdot \signop(\ip{\patom_k}{y_n}) \cdot  \chi(I_{\pdico,n}^t, k)
\end{align}
and the oracle residual based on the generating support $I_n=p_n^{-1}(\Sset)$, the generating signs $\sigma_n$ and $\pdico$.
\begin{align}\label{defRo}
R^o(\pdico, y_n, k) := \big[y_n - P(\pdico_{I_n}) y_n + P(\patom_k) y_n\big] \cdot \sigma_n(k) \cdot  \chi(I_n,k).
\end{align}
We can now write,
\begin{align}
\bar \patom_k &= \frac{1}{N} \sum_n \left[R^t(\pdico, y_n, k) - R^o(\pdico, y_n, k)\right] +  \frac{1}{N} \sum_n \left[R^o(\pdico, y_n, k) - R^o(\dico, y_n, k)\right]+\frac{1}{N} \sum_n R^o(\dico, y_n, k) \notag \\
&= \frac{1}{N} \sum_n \left[R^t(\pdico, y_n, k) - R^o(\pdico, y_n, k)\right] +  \frac{1}{N} \sum_n \left[R^o(\pdico, y_n, k) - R^o(\dico, y_n, k)\right] \notag\\
&\hspace{2cm}+\frac{1}{N} \sum_n \big[y_n - P(\dico_{I_n}) y_n\big] \cdot \sigma_n(k) \cdot  \chi(I_n,k) + \left( \frac{1}{N} \sum_n \ip{y_n}{\atom_k} \cdot \sigma_n(k) \cdot  \chi(I_n,k)\right) \atom_k .
\end{align}
Abbreviating $s_k=\frac{1}{N} \sum_n \ip{y_n}{\atom_k} \cdot \sigma_n(k) \cdot  \chi(I_n,k)$ we get
\begin{align}
\| \bar \patom_k - s_k \atom_k\|_2 \leq \frac{1}{N} \Big\| \sum_n &\left[R^t(\pdico, y_n, k) - R^o(\pdico, y_n, k)\right] \Big\|_2 \notag\\
&\hspace{2cm}+ \frac{1}{N} \Big\| \sum_n \left[R^o(\pdico, y_n, k) - R^o(\dico, y_n, k)\right] \Big\|_2\notag\\
&\hspace{5cm}+\frac{1}{N} \Big\| \sum_n \big[y_n - P(\dico_{I_n}) y_n\big] \cdot \sigma_n(k) \cdot  \chi(I_n,k) \Big\|_2.\label{itkrmsplit}
\end{align}
We first estimate the norm of the first sum using the fact that the operator $\I_d- P(\pdico_{I_n}) + P(\patom_k)$ is an orthogonal projection and that $\|y_n\|_2\leq\sqrt{B+1}$, 
\begin{align}
\frac{1}{N} \Big\| \sum_n &\left[R^t(\pdico, y_n, k) - R^o(\pdico, y_n, k)\right] \Big\|_2 \leq \frac{2\sqrt{B+1}}{N} \cdot \sharp \{n : R^t(\pdico, y_n, k) \neq R^o(\pdico,y_n, k)\}.
\end{align}
Next note that on the draw of $y_n$ the event that the thresholding residual using $\pdico$ is different from the oracle residual using $\pdico$, $\{y_n : R^t(\pdico, y_n, k) \neq R^o(\pdico,y_n, k) \}$ for any $k$ is again contained in the events $\mathcal E_n \cup \mathcal F_n$ as defined in \eqref{eventEdef}/\eqref{eventFdef},
\begin{align}
\{y_n : R^t(\pdico, y_n, k) \neq R^o(\pdico,y_n, k) \} \subseteq \{y_n : I_{\pdico,n}^t \neq I_n\} \cup \{y_n : \signop(\pdico_{I_n}^\star y_n) \neq \sigma_n(I_n)\} \subseteq \mathcal E_n \cup \mathcal F_n.
\end{align}
Substituting the corresponding bounds into \eqref{itkrmsplit} we get,
\begin{align}
\| \bar \patom_k - s_k \atom_k\|_2 &\leq  \frac{2\sqrt{B+1}}{N} \cdot \sharp \{ n: y_n \in  \mathcal E_n\} +\frac{2\sqrt{B+1}}{N}\cdot \sharp \{ n: y_n \in \mathcal F_n\}\notag\\
&+ \frac{1}{N} \Big\| \sum_n \left[R^o(\pdico, y_n, k) - R^o(\dico, y_n, k)\right] \Big\|_2 +\frac{1}{N} \Big\| \sum_n \big[y_n - P(\dico_{I_n}) y_n\big] \cdot \sigma_n(k) \cdot  \chi(I_n,k) \Big\|_2 .
\end{align}
For the first two terms on the right hand side we use the same estimates as in the proof of Theorem~\ref{th:itksm}. To estimate the remaining two terms on the right hand side as well as $s_k$ we use the corresponding lemmata in the appendix. From Lemma \ref{lemma3b} we know that
\begin{align}
\P\left( \left| \frac{1}{N}\sum_n \chi(I_n,k)  \sigma_n(k) \ip{y_n}{\atom_k}\right| \leq (1-t_0) \frac{C_r \gamma_{1,S}}{K} \right)\leq\exp\left(- \frac{ N t_0^2C_r^2 \gamma_{1,S}^2}{2K(1+ \frac{SB}{K}+S\nsigma^2 + t_0 C_r \gamma_{1,S}\sqrt{B+1}/3)}\right).
\end{align}
From Lemma~\ref{lemma4} we get that if $ S\leq \min \{\frac{K}{98B}, \frac{1}{98\nsigma^2}\}$, $\eps \leq \frac{1}{32\sqrt{S}}$ and $\eps_\delta \leq \frac{1}{24(B+1)}$ then
\begin{align}
&\P\left(\frac{1}{N} \left\| \sum_n \left[R^o(\pdico, y_n, k)-R^o(\dico, y_n, k) \right]\right\|_2 \geq \frac{ C_r \gamma_{1,S}}{K}(0.381\eps + t_3)\right)\notag\\
&\hspace{4cm}\leq \exp\left(- \frac{ t_3 C^2_r \gamma^2_{1,S} N}{40K\max\{S,B+1\}} \min\left\{\frac{t_3}{\eps^2 +  \eps_\delta \left(1-\gamma_{2,S}+ d \nsigma^2\right)/160}, \frac{5}{3}\right\} +\frac{1}{4}\right).
\end{align} 
Finally from Lemma~\ref{lemma3a} we know that for $0\leq t_4 \leq 1-\gamma_{2,S} + d\nsigma^2$, we have
\begin{align}
&\P\left( \left\| \frac{1}{N}\sum_n \big[y_n - P(\dico_{I_n})y_n \big]\cdot \sigma_n(k) \cdot \chi(I_n,k) \right\|_2 \geq   \frac{C_r\gamma_{1,S}}{ K}\, t_4 \right)\notag\\
&\hspace{5cm}\leq \exp\left(- \frac{t_4^2C^2_r\gamma_{1,S}^2 N}{8 K \max\{S,B+1\}\left(1-\gamma_{2,S} + d\nsigma^2\right)}+\frac{1}{4}\right).
\end{align}
Thus with high probability we have
\begin{align}
\left\| \bar \patom_k - s_k \atom_k\right\|_2 &\leq \frac{C_r\gamma_{1,S}}{K} \left(\eps_{\mu,\nsigma} + t_1 + \tau \eps + t_2 + 0.381 \eps + t_3 + t_4\right) \quad \mbox{and} \quad s_k \geq (1-t_0) \frac{C_r \gamma_{1,S}}{K}.
\end{align}
To be more precise, if we choose a target precision $\epstarget \geq 8\eps_{\mu,\nsigma}$ and set $t_1=\epstarget/24$, $t_2=t_3= \max\{\epstarget, \eps\}/24$, $\tau=1/24$, $t_4=\epstarget/8$ and $t_0=1/50$ we get 
\begin{align}
\max_k \left\| \bar \patom_k - \frac{C_r\gamma_{1,S}}{K}\atom_k\right\|_2 &\leq 0.8\cdot \frac{C_r\gamma_{1,S}}{K} \max\{\epstarget,\eps\}  \quad \mbox{and} \quad \min_k s_k \geq 0.98 \cdot \frac{C_r \gamma_{1,S}}{K}.
\end{align}
except with probability
\begin{align}
\exp\left( \frac{-C_r\gamma_{1,S} N\epstarget}{336\, K\sqrt{B+1} }\right) + \exp\left( \frac{-C_r\gamma_{1,S} N \max\{\epstarget, \eps\} }{144\,K\sqrt{B+1} }\right)+K\exp\left(\frac{-C^2_r\gamma_{1,S}^2N}{K(5103 +34\,C_r\gamma_{1,S} \sqrt{B+1})}\right) \notag\\
+ 2K\exp\left(\frac{-C^2_r\gamma_{1,S}^2N\epstarget^2}{512K \max\{S,B+1\} \left(1-\gamma_{2,S} + d\nsigma^2\right)}\right) + 
2K\exp\left(\frac{-C^2_r\gamma_{1,S}^2N\max\{\epstarget, \eps\}^2}{576K\max\{S,B+1\}\left(\eps +1 - \gamma_{2,S} + d\nsigma^2\right)}\right). \notag
\end{align}
Note that in case the target precision $\epstarget$ is larger than $\eps_\delta$, as happens for instance as soon as $\beta_S\leq \frac{1}{7\sqrt{S}}$ and therefore $\eps_{\mu, \nsigma}\geq \eps_\delta $, the last term in the sum above reduces to
\begin{align}
2K\exp\left(\frac{-C^2_r\gamma_{1,S}^2N\max\{\epstarget, \eps\}}{576 K \max\{S,B+1\}\left(2-\gamma_{2,S} + d\nsigma^2\right)}\right). 
\end{align}
Lemma~\ref{lemma_rescale} then again implies that
\begin{align}
d(\bar \pdico, \dico)=\max_k \left\| \frac{\bar \patom_k}{\|\bar \patom_k\|_2} - \atom_k\right\|_2 \leq 0.92 \max\{\epstarget,\eps\}.
\end{align}
\noindent \emph{Step 2:} The second step is analogue to the one in the proof of Theorem~\ref{th:itksm}.

\subsection{Proof of Theorem~\ref{th:itkrm_exact}}\label{proof_itkrm_exact}
We follow the proof of Theorem~\ref{th:itkrm} but take into account that
in case of exactly $S$-sparse, noiseless signals the bound \eqref{itkrmsplit} reduces to
\begin{align}
\| \bar \patom_k - s_k \atom_k\|_2 &\leq  \frac{2\sqrt{B}}{N}\cdot \sharp \{ n: y_n \in \mathcal F^s_n\}+ \frac{1}{N} \Big\| \sum_n \left[R^o(\pdico, y_n, k) - R^o(\dico, y_n, k)\right] \Big\|_2.
\end{align}
Since the relative gap $\Delta_S > 2 \mu S$ we get $\delta_S \leq \mu S \leq \frac{1}{2}$ and by Lemma~\ref{lemma1b} 
\begin{align}
&\P\left(\sharp \{ n: y_n \in  \mathcal F^s_n\} \geq  \frac{\gamma_{1,S}N}{ 2K\sqrt{B}}\cdot(\tau \eps + t_2) \right)\leq \exp\left( \frac{-t_2^2\gamma_{1,S} N}{2K\sqrt{B}\,(2\tau \eps + t_2) }\right),
\end{align}
whenever 
\begin{align}
\eps \leq \frac{\Delta_S - 2\mu S}{\sqrt{12}\left(\frac{1}{4} +\sqrt{\log\left(\frac{23K^2 \sqrt{B}}{(\Delta_S - 2\mu S) \gamma_{1,S}\tau}\right)}\right)}.
\end{align}
Further by Lemma~\ref{lemma4}
\begin{align}
&\P\left(\frac{1}{N} \left\| \sum_n \left[R^o(\pdico, y_n, k)-R^o(\dico, y_n, k) \right]\right\|_2 \geq \frac{ \gamma_{1,S}}{K}(1 \eps + t_3)\right) \leq \exp\left(- \frac{ t_3 \gamma^2_{1,S} N}{32\eps K\max\{S,B\}} \min\left\{\frac{t_3}{\eps}, 1\right\} +\frac{1}{4}\right),\notag
\end{align}
and again by \ref{lemma3b}
\begin{align}
\P\left( \left| \frac{1}{N}\sum_n \chi(I_n,k)  \sigma_n(k) \ip{y_n}{\atom_k}\right| \leq (1-t_0) \frac{C_r \gamma_{1,S}}{K} \right)\leq\exp\left(- \frac{ N t_0^2 \gamma_{1,S}^2}{2K(1+ \mu^2(S-1) + t_0 \gamma_{1,S}\sqrt{B}/3)}\right).
\end{align}
Thus with high probability we have
\begin{align}
\left\| \bar \patom_k - s_k \atom_k\right\|_2 &\leq \frac{\gamma_{1,S}}{K} \left(\tau \eps + t_2 + 0.611 \eps + t_3 \right) \quad \mbox{and} \quad s_k \geq (1-t_0) \frac{\gamma_{1,S}}{K}.
\end{align}
The final result follows as before from setting $t_0=1/50$, $\tau = 1/24$, $t_2=\max\{\tilde \eps,\eps\}/24$ and $t_3=2t_2$.


\section{Probability Estimates \& Technicalities}\label{appendix}

\begin{theorem}[Vector Bernstein, \cite{kugr14, gr11, leta91}]\label{vectorbernstein}
Let $(v_n)_n \in \R^d$ be a finite sequence of independent random vectors. If 
$\|v_n\|_2 \leq M$ almost surely, $\|\E(v_n)\|_2\leq m_1$ and $\sum_n \E(\|v_n\|_2^2)\leq m_2$, then for all $0\leq t \leq m_2/(M+m_1)$
\begin{align}
\P\left(\left\| \sum_n v_n - \sum_n \E(v_n) \right\|_2 \geq t\right)\leq \exp\left(- \frac{t^2}{8m_2}+\frac{1}{4}\right),
\end{align}
and in general
\begin{align}
\P\left(\left\| \sum_n v_n - \sum_n \E(v_n) \right\|_2 \geq t\right)\leq \exp\left(- \frac{t}{8}\cdot \min\left\{\frac{t}{m_2}, \frac{1}{M+m_1}\right\}+\frac{1}{4}\right).
\end{align}
\end{theorem}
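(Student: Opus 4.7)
The plan is to reduce the vector-valued concentration problem to a pair of scalar estimates. Write $X = \sum_n (v_n - \E v_n)$; the goal is to bound $\P(\|X\|_2 \geq t)$, which I would attack by first pinning down the size of $\E\|X\|_2$ and then controlling the fluctuations of $\|X\|_2$ about its mean by a scalar Bernstein-type inequality.

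For the mean, I would apply Jensen's inequality together with the fact that the centred summands $v_n - \E v_n$ are orthogonal in $L^2(\R^d)$:
\begin{align*}
\E\|X\|_2 \;\leq\; \bigl(\E\|X\|_2^2\bigr)^{1/2} \;=\; \Bigl(\sum_n \E\|v_n - \E v_n\|_2^2\Bigr)^{1/2} \;\leq\; \Bigl(\sum_n \E\|v_n\|_2^2\Bigr)^{1/2} \;\leq\; \sqrt{m_2}.
\end{align*}
For the deviation of $\|X\|_2$ about its mean, I would invoke a classical Banach-space Bernstein inequality, either in its martingale form (Pinelis) or via Talagrand's inequality applied to the representation $\|X\|_2 = \sup_{\|u\|_2 = 1} \langle u, X \rangle$. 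The inputs needed are a uniform almost-sure bound on the centred increments, $\|v_n - \E v_n\|_2 \leq M + m_1$, and a variance proxy $\sup_{\|u\|_2=1}\sum_n \E \langle u, v_n\rangle^2 \leq m_2$. These feed into an inequality of the schematic form
\begin{align*}
\P\bigl(\|X\|_2 \geq \E\|X\|_2 + s\bigr) \;\leq\; \exp\Bigl(-\tfrac{s}{c}\min\{s/m_2,\, 1/(M+m_1)\}\Bigr),
\end{align*}
whose two regimes correspond exactly to the $\min$ in the stated general bound.

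To combine the two steps I would set $s = t - \E\|X\|_2 \geq t - \sqrt{m_2}$ and use the elementary inequality $(t - \sqrt{m_2})^2 \geq t^2/2 - m_2$ to convert the centred tail into a tail about the origin. The cancellation of the additive $m_2$ against the denominator $m_2$ in the subgaussian exponent is exactly what produces the harmless $+\tfrac{1}{4}$ correction, and the prefactor $\tfrac{1}{8}$ in front of $t^2/m_2$ emerges once one compounds the factor $\tfrac{1}{2}$ lost in this conversion with the $\tfrac{1}{4}$ (or similar) constant from the scalar Bernstein step. The first, cleaner form of the inequality is then just the restriction to the subgaussian branch $t \leq m_2/(M+m_1)$, on which the minimum is attained by $s/m_2$.

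The hard part will be quantitative rather than conceptual: keeping the numerical factors consistent through (a) the Jensen bound on the mean, (b) the scalar Bernstein/Talagrand step, and (c) the final passage from centred to uncentred tails, so that everything assembles cleanly into the stated $-t^2/(8m_2) + \tfrac{1}{4}$. If one were content with unspecified dimension-free constants, the proof would be a one-line consequence of any off-the-shelf vector Bernstein inequality, so the sole obstacle is bookkeeping.
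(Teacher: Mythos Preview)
Your proposal sketches a genuine proof of the vector Bernstein inequality from first principles, but the paper does not actually prove the first inequality at all: it is simply cited from the references. The only argument the paper supplies is a one-line derivation of the second (general) inequality from the first, and that derivation is quite different from yours.

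The paper's observation is a pure monotonicity trick: the hypothesis $\sum_n \E(\|v_n\|_2^2) \leq m_2$ remains valid if $m_2$ is replaced by any larger constant. Hence for $t \geq m_2/(M+m_1)$ one may simply reset $m_2$ to $t(M+m_1)$, which places $t$ exactly at the boundary of the admissible range for the first inequality. Plugging this enlarged $m_2$ into the exponent $-t^2/(8m_2)$ yields $-t/(8(M+m_1))$, i.e.\ precisely the sub-exponential branch of the $\min$. No recentring, no $(t-\sqrt{m_2})^2$ manipulation, no appeal to Talagrand or Pinelis is involved.

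So your approach is not wrong, but it is doing far more work than the paper intends: you are reproving the cited result rather than just observing that the two displayed bounds are equivalent up to enlarging $m_2$. If the task is to match the paper's argument, the entire content is that single monotonicity step; your mean-plus-fluctuation decomposition would be the right strategy if you were asked to supply the proof the paper omits.
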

Note that the general statement is simply a consequence of the first part, since for $t\geq m_2/(M+m_1)$ we can choose $m_2=t (M+m_1)$. \\
For the simple case of random variables we also state a scalar version of Bernstein's inequality leading to better constants. 
\begin{theorem}[Scalar Bernstein, \cite{bennett62}] \label{scalarbernstein}
Let $v_n\in \R$, $n=1\ldots N$ be a finite sequence of independent random variables. If 
$\E(v^2_n)\leq m $ and $ \E(|v_n|^k )\leq \frac{1}{2} k!\,m M^{k-2}$ for all $k > 2$ then for all $t>0$
\begin{align}
\P\left(\left| \sum_n v_n - \sum_n \E(v_n) \right| \geq t\right)\leq \exp\left(- \frac{t^2}{2(N m + Mt) } \right).\notag
\end{align}
\end{theorem}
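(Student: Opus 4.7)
The plan is to apply the Cram\'er--Chernoff exponential moment method, which is the standard route to Bernstein/Bennett-type tail bounds. Setting $\tilde v_n := v_n - \E v_n$ and $S_N := \sum_n \tilde v_n$, for any $\lambda > 0$ Markov's inequality gives $\P(S_N \geq t) \leq e^{-\lambda t} \prod_n \E e^{\lambda \tilde v_n}$, with the symmetric estimate holding for $-S_N$, so the whole problem reduces to bounding the single-summand MGF and then optimising over $\lambda$.

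The first step is the per-variable MGF bound. For $0 < \lambda < 1/M$, expanding the exponential and using $\E \tilde v_n = 0$ gives
\begin{align}
\E e^{\lambda \tilde v_n} = 1 + \sum_{k\geq 2} \frac{\lambda^k \E \tilde v_n^k}{k!}.\notag
\end{align}
A short centering step (bounding $\E|\tilde v_n|^k \leq 2^{k-1}(\E|v_n|^k + |\E v_n|^k)$) lets me substitute the hypothesis $\E|v_n|^k \leq \tfrac{1}{2} k!\, m M^{k-2}$, and summing the resulting geometric series in $\lambda M$ yields the standard Bernstein MGF estimate
\begin{align}
\E e^{\lambda \tilde v_n} \leq 1 + \frac{\lambda^2 m}{2(1-\lambda M)} \leq \exp\!\left(\frac{\lambda^2 m}{2(1-\lambda M)}\right).\notag
\end{align}

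By independence $\E e^{\lambda S_N} \leq \exp\!\bigl(N\lambda^2 m / [2(1-\lambda M)]\bigr)$, so Markov's inequality delivers
\begin{align}
\P(S_N \geq t) \leq \exp\!\left(\frac{N\lambda^2 m}{2(1-\lambda M)} - \lambda t\right).\notag
\end{align}
The remaining step is the optimisation in $\lambda$. The textbook choice $\lambda^\star := t/(Nm + Mt)$ lies in $(0,1/M)$ and is a near-minimiser; plugging it in yields the claimed exponent $-t^2/[2(Nm+Mt)]$. Running the same argument for $-S_N$ gives the matching lower tail, and the two-sided inequality follows.

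The only mild obstacle is the centering bookkeeping: the hypothesis is stated for uncentered moments $\E|v_n|^k$, so one must track what is lost in passing to $\E|\tilde v_n|^k$. In the cleanest set-up (already mean-zero variables, or $v_n \geq 0$ with $\E v_n$ comparable to $M$) nothing is lost; otherwise one accepts at worst a factor-$2$ rescaling of $M$ and checks that the stated constants still go through. Beyond this, the proof is routine Chernoff--Markov and the geometric-series trick that makes the Bernstein bound work in the first place.
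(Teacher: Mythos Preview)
The paper does not prove this theorem at all; it is simply quoted from the literature with a citation to Bennett (1962). So there is no ``paper's own proof'' to compare against, and your proposal is essentially the standard Cram\'er--Chernoff argument that one finds in any textbook treatment of Bernstein's inequality.

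Your outline is correct, but the centering step you flag as the ``mild obstacle'' is handled in an unnecessarily lossy way. Passing from $\E|v_n|^k$ to $\E|\tilde v_n|^k$ via the inequality $\E|\tilde v_n|^k \leq 2^{k-1}(\E|v_n|^k + |\E v_n|^k)$ costs a factor $2^{k-1}$, which effectively replaces $M$ by $2M$ and does \emph{not} recover the stated constants. The clean fix is to avoid centering the moments entirely: expand the \emph{uncentered} MGF as
\[
\E e^{\lambda v_n} \;=\; 1 + \lambda\,\E v_n + \sum_{k\geq 2}\frac{\lambda^k \E v_n^k}{k!} \;\leq\; 1 + \lambda\,\E v_n + \sum_{k\geq 2}\frac{\lambda^k \E |v_n|^k}{k!},
\]
then use $1+x\leq e^x$ to get $\E e^{\lambda v_n} \leq \exp\bigl(\lambda\,\E v_n + \sum_{k\geq 2}\lambda^k \E|v_n|^k/k!\bigr)$. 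Multiplying by $e^{-\lambda \E v_n}$ gives
\[
\E e^{\lambda \tilde v_n} \;\leq\; \exp\!\left(\sum_{k\geq 2}\frac{\lambda^k \E|v_n|^k}{k!}\right)\;\leq\;\exp\!\left(\frac{\lambda^2 m}{2(1-\lambda M)}\right),
\]
directly from the uncentered moment hypothesis, with no factor-$2$ loss. From here your choice $\lambda^\star = t/(Nm+Mt)$ gives exactly the exponent $-t^2/\bigl(2(Nm+Mt)\bigr)$ for the upper tail. One small residual point: running the same argument for $-S_N$ and taking a union bound yields $2\exp(\cdot)$ for the two-sided event, so the statement as quoted in the paper is off by a harmless factor of $2$.
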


\begin{lemma}\label{lemma1a}
 For $y_n$ following model~\eqref{noisymodel2} with coefficients that have an absolute gap $\beta_S$ we have,
\begin{align}
&\P\left(\sharp \{ n: y_n \in  \mathcal E_n\} \geq  \frac{C_r\gamma_{1,S}N}{ 2K\sqrt{B+1}}\cdot(\eps_{\mu,\nsigma} + t) \right)\leq \exp\left( \frac{-t^2C_r\gamma_{1,S} N}{2K\sqrt{B+1}\,(2\eps_{\mu, \nsigma} + t) }\right),
\end{align}
where $\eps_{\mu,\nsigma} =\frac{8K^2\sqrt{B+1}}{C_r \gamma_{1,S}} \exp\left(\frac{-\beta_S^2}{98\max\{ \mu^2,\nsigma^2\}}\right)$.
\end{lemma}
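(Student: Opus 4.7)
My plan splits the argument into a single-signal probability bound and a concentration step that converts it into a tail bound on the empirical count.

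\textbf{Single-signal bound.} The prescription $u_1=u_2=(c_n(S)-c_n(S+1))/7$ following~\eqref{eventFdef}, together with the absolute-gap hypothesis, ensures $u_1,u_2\geq \beta_S/7$ almost surely. A union bound over the $K$ atoms reduces the problem to estimating, for each fixed $k$, the two bad events in~\eqref{eventEdef}. For the coherence term I would condition on $(c_n,p_n)$: the signs $\sigma_n$ then form an independent Rademacher sequence, so Hoeffding's inequality gives
\[
\P\!\left(\Big|\sum_{j\neq k} \sigma_n(j)\, c_n(p_n(j))\ip{\atom_j}{\atom_k}\Big|\geq u_1\,\Big|\,c_n,p_n\right)\leq 2\exp\!\left(-\frac{u_1^2}{2\sum_{j\neq k} c_n^2(p_n(j))\ip{\atom_j}{\atom_k}^2}\right)\leq 2\exp\!\left(-\frac{u_1^2}{2\mu^2}\right),
\]
where I used $|\ip{\atom_j}{\atom_k}|\leq \mu$ and $\|c_n\|_2=1$. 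Since the bound is uniform in $(c_n,p_n)$, it also holds unconditionally. For the noise term, the subgaussianity of $\noise_n$ with parameter $\nsigma$ directly yields $\P(|\ip{\noise_n}{\atom_k}|\geq u_2)\leq 2\exp(-u_2^2/(2\nsigma^2))$. Substituting $u_1^2,u_2^2\geq \beta_S^2/49$, summing over $k$, and matching with the definition~\eqref{epsmin} of $\eps_{\mu,\nsigma}$ yields
\[
\P(y_n\in\mathcal E_n)\leq 4K\exp\!\left(-\frac{\beta_S^2}{98\max\{\mu^2,\nsigma^2\}}\right)=\frac{C_r\gamma_{1,S}}{2K\sqrt{B+1}}\cdot\eps_{\mu,\nsigma}\,=:\,p.
\]

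\textbf{Count concentration.} Let $X_n\in\{0,1\}$ be the indicator of $\{y_n\in\mathcal E_n\}$. These are i.i.d.\ Bernoulli with $\E(X_n)\leq p$, so the count $\sharp\{n:y_n\in\mathcal E_n\}=\sum_n X_n$ is stochastically dominated by a $\mathrm{Bin}(N,p)$ variable. Writing $s:=\frac{C_r\gamma_{1,S}Nt}{2K\sqrt{B+1}}$, the stated threshold is exactly $Np+s=(1+\delta)Np$ with $\delta=t/\eps_{\mu,\nsigma}$. The multiplicative Chernoff bound for Bernoulli sums,
\[
\P\!\left(\sum_n X_n\geq (1+\delta)Np\right)\leq \exp\!\left(-\frac{\delta^2 Np}{2+\delta}\right),
\]
applied with these values of $Np$ and $\delta$ collapses the exponent to $-\,\frac{t^2 C_r\gamma_{1,S}N}{2K\sqrt{B+1}(2\eps_{\mu,\nsigma}+t)}$, matching the claim.

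\textbf{Main obstacle.} The only delicate point is the choice of the tail inequality. The classical Bennett--Bernstein form with an $s/3$ term in the denominator would produce a constant $3\eps_{\mu,\nsigma}+t$ rather than the announced $2\eps_{\mu,\nsigma}+t$, so one has to invoke the sharper multiplicative Chernoff form tailored to Bernoulli variables. Once this is done, the rest is essentially bookkeeping that traces the definition of $\eps_{\mu,\nsigma}$ cleanly through Step~1 and then through Step~2, so that $Np$ already absorbs the whole factor in front of $\eps_{\mu,\nsigma}+t$.
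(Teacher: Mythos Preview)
Your argument is correct and mirrors the paper's proof: the single-signal bound via Hoeffding on the Rademacher sum plus the subgaussian tail for the noise, followed by a union bound over $k$, is exactly what the paper does, arriving at the same $\P(\mathcal E_n)\le \frac{C_r\gamma_{1,S}}{2K\sqrt{B+1}}\eps_{\mu,\nsigma}$.

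The only difference is in the concentration step, and your ``main obstacle'' remark slightly misrepresents it. The paper does use the scalar Bernstein inequality (Theorem~\ref{scalarbernstein}), not Chernoff, and still obtains the denominator $2\eps_{\mu,\nsigma}+t$. The point is that for an indicator $v$ with $\E(v)=p$ one has $\E(|v|^k)=p$ for all $k\ge 1$, so the moment condition $\E(|v|^k)\le \tfrac{1}{2}k!\,m M^{k-2}$ holds with $m=p$ and $M=\tfrac{1}{2}$ (check $k!\ge 2^{k-1}$). Plugging $M=\tfrac12$ into Bernstein yields exactly $2Nm+t'$ in the denominator, hence $2\eps_{\mu,\nsigma}+t$ after the rescaling $t\to \frac{C_r\gamma_{1,S}}{2K\sqrt{B+1}}t$. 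Your multiplicative Chernoff route is an equally valid alternative and produces the identical exponent, but it is not needed to avoid a weaker constant.
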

\begin{proof}
We apply Theorem~\ref{scalarbernstein} to the sum of indicator functions $\bf{1}_{\mathcal E_n}$ to get
\begin{align}
\P\left(\sharp \{ n: y_n \in  \mathcal E_n\} \geq \sum_n \P(\mathcal E_n) + tN\right) \leq \exp \left(\frac{ -t^2N^2}{2 \sum_n \P(\mathcal E_n) + tN}\right).
\end{align}
To estimate $\P(\mathcal E_n)$ we apply Hoeffding's inequality to \eqref{eventEdef} resp. use the subgaussian property of $r_n$. Omitting subscripts for simplicity and abbreviating $u=c(S)-c(S+1)$ we get,
\begin{align}
\P(\mathcal E) &\leq \sum_k \P\left( \Big| \sum_{j \neq k} \sigma(j) c\big(p(j)\big) \ip{\atom_j}{\atom_k}\Big| \geq \frac{u}{7} \right) + \sum_k \P\left(|\ip{r}{\atom_k}| \geq  \frac{u}{7}\right)\notag\\
&\leq \sum_k 2\exp\left(\frac{u^2}{98\sum_{j \neq k} c\big(p(j)\big)^2 |\ip{\atom_j}{\atom_k}|^2}\right)+2K \exp\left(\frac{-u^2}{98 \nsigma^2}\right)\notag\\
&\leq 2K \exp\left(\frac{-\beta_S^2}{98 \mu^2}\right)+2K \exp\left(\frac{-\beta_S^2}{98 \nsigma^2}\right)\notag\\
& \leq 4K\exp\left(\frac{-\beta_S^2}{98\max\{ \mu^2,\nsigma^2\}}\right) 
 =  \frac{C_r\gamma_{1,S}}{ 2K\sqrt{B+1}} \cdot \eps_{\mu,\nsigma}.
\end{align}
The result follows from the substitution $t\rightarrow  \frac{C_r\gamma_{1,S}}{ 2K\sqrt{B+1}}\, t$. \\
\end{proof}

\begin{lemma}\label{lemma1b}
(a) For $y_n$ following model~\eqref{noisymodel2} with coefficients that have a relative gap $\Delta_S$ we have,
\begin{align}
&\P\left(\sharp \{ n: y_n \in  \mathcal F_n\} \geq  \frac{C_r\gamma_{1,S}N}{ 2K\sqrt{B+1}}\cdot(\tau \eps + t) \right)\leq \exp\left( \frac{-t^2C_r\gamma_{1,S} N}{2K\sqrt{B+1}\,(2\tau \eps + t) }\right),
\end{align}
whenever 
\begin{align}
\eps \leq \frac{\Delta_S}{\sqrt{98B}\left(\frac{1}{4} +\sqrt{\log\left(\frac{106K^2(B+1)}{\Delta_S C_r \gamma_{1,S}\tau}\right)}\right)}.\label{epsmaxapp}
\end{align}
(b) For $y_n$ following model~\eqref{noisymodel2} with coefficients that have a relative gap $\Delta_S\geq 2\mu S$ we have,
\begin{align}
&\P\left(\sharp \{ n: y_n \in  \mathcal F^s_n\} \geq  \frac{\gamma_{1,S}N}{ 2K\sqrt{B}}\cdot(\tau \eps + t) \right)\leq \exp\left( \frac{-t^2\gamma_{1,S} N}{2K\sqrt{B}\,(2\tau \eps + t) }\right),
\end{align}
whenever 
\begin{align}
\eps \leq \frac{\Delta_S - 2\mu S}{\sqrt{8B}\left(\frac{1}{4} +\sqrt{\log\left(\frac{19K^2 B}{(\Delta_S - 2\mu S) \gamma_{1,S}\tau}\right)}\right)}.\label{epsmaxapp2}
\end{align}
\end{lemma}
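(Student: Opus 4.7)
My plan is to mimic Lemma~\ref{lemma1a}: apply scalar Bernstein (Theorem~\ref{scalarbernstein}) to the sum $\sum_n \mathbf{1}_{\mathcal F_n}$, which reduces the lemma to a per-signal bound $\P(\mathcal F_n) \leq C_r\gamma_{1,S}\tau\eps/(2K\sqrt{B+1})$ under hypothesis \eqref{epsmaxapp}. After the substitution $t \mapsto C_r\gamma_{1,S}t/(2K\sqrt{B+1})$, the exponential form of the Bernstein bound matches the claimed inequality.

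To bound $\P(\mathcal F_n)$ I would union-bound over $k$ and the two sub-events in \eqref{eventFdef}. Since $\omega_k \leq \eps$, the coefficient sub-event is dominated by the event $|\sum_j \sigma_n(j) c_n(p_n(j)) \ip{\atom_j}{z_k}| \geq u_3/\eps$. Conditioning on $(c_n, p_n)$, this is a Rademacher sum and Hoeffding gives a tail bounded by $2\exp\bigl(-u_3^2/(2\eps^2 \sum_j c_n(p_n(j))^2 |\ip{\atom_j}{z_k}|^2)\bigr)$. The frame upper bound $\sum_j |\ip{\atom_j}{z_k}|^2 \leq B$ together with $c_n(p_n(j)) \leq c_n(1)$ bounds the variance proxy by $c_n(1)^2 B$, while the relative gap gives $u_3 \geq c_n(1)(\Delta_S/7 - \eps^2/6)$; the two $c_n(1)$ factors cancel, leaving a coefficient-profile-independent tail $2\exp(-(\Delta_S/7 - \eps^2/6)^2/(2B\eps^2))$. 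The subgaussian sub-event $\omega_k|\ip{r_n}{z_k}| \geq u_4$ is treated analogously using $\|z_k\|_2 = 1$; the fact that Lemma~\ref{lemma1a} has already absorbed the $\nsigma$ dependence into $\eps_{\mu,\nsigma}$ via $\mathcal E_n$ is what allows \eqref{epsmaxapp} to involve only $B$.

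Hypothesis \eqref{epsmaxapp} is calibrated so that $(\Delta_S/7 - \eps^2/6)^2/(2B\eps^2) \geq \log\bigl(106K^2(B+1)/(\Delta_S C_r \gamma_{1,S}\tau)\bigr)$: the additive $1/4$ added to the square root in \eqref{epsmaxapp} is inserted precisely to absorb the $-\eps^2/6$ correction in $u_3$. Consequently each of the $O(K)$ tail contributions is at most $\Delta_S C_r \gamma_{1,S}\tau/(106 K^2 (B+1))$, and the union bound gives the desired per-signal estimate, which feeds into scalar Bernstein.

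Part (b) follows the same template but is simpler. Strong sparsity $\Delta_S \geq 2\mu S$ makes thresholding with $\dico$ succeed almost surely with margin $u_s \geq (\Delta_S - 2\mu S) c_n(1)$, so $\mathcal E_n$ disappears and only $\mathcal F^s_n$ contributes; noiselessness removes the subgaussian sub-event, replaces $\sqrt{B+1}$ by $\sqrt{B}$, and sets $C_r = 1$. Hoeffding is applied as before, but with $u'_3 \geq c_n(1)((\Delta_S - 2\mu S)/2 - \eps^2/4)$ in place of $u_3$, which explains both the $(\Delta_S - 2\mu S)$ factor and the $\sqrt{8B}$ constant in \eqref{epsmaxapp2}. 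I expect the main obstacle to be the algebraic bookkeeping of the self-referential log term in \eqref{epsmaxapp}: verifying that the stated bound on $\eps$ does in fact reduce $(\Delta_S/7 - \eps^2/6)^2/(2B\eps^2)$ below the prescribed log, after threading all constants, is routine but error-prone, and is where most of the work lives.
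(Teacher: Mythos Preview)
Your approach matches the paper's proof: scalar Bernstein on the indicator sum, a per-signal bound via union bound plus Hoeffding on the Rademacher part and the subgaussian tail on the noise part, then the relative-gap-based exponent $-\Delta_S^2/(98\eps^2 B)$, with the paper outsourcing the final calibration step (your ``routine but error-prone'' algebra) to Lemma~A.3 of \cite{sc14}. One small correction: the reason $\nsigma$ does not appear in~\eqref{epsmaxapp} is not that $\mathcal E_n$ has already absorbed it --- the noise sub-event $\omega_k|\ip{r_n}{z_k}|\geq u_4$ in $\mathcal F_n$ is genuinely present and must be bounded here; rather, after using $\omega_k\leq\eps$ its subgaussian tail also acquires the $\eps^{-2}$ factor and is folded together with the coefficient term into the single bound $5K\exp(-\Delta_S^2/(98\eps^2 B))$.
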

\begin{proof}
We apply Theorem~\ref{scalarbernstein} to the sum of indicator functions $\bf{1}_{\mathcal F^{(s)}_n}$ to get
\begin{align}
\P\left(\sharp \{ n: y_n \in  \mathcal F^{(s)}_n\} \geq \sum_n \P(\mathcal F^{(s)}_n) + tN\right) \leq \exp \left(\frac{ -t^2N^2}{2 \sum_n \P(\mathcal F^{(s)}_n) + tN}\right).
\end{align}
To estimate $\P(\mathcal F^{(s)}_n)$ we again apply Hoeffding's inequality this time to \eqref{eventFdef}/\eqref{eventFsdef} resp. use the subgaussian property of $r_n$. Omitting subscripts and using the short hand $u=c(S)-c(S+1)$ and $u_s =(\Delta_S - 2\mu S) c(1)$ we get,
\begin{align}
\P(\mathcal F) &\leq \sum_k \P\left(\omega_k \Big| \sum_{j \neq k} \sigma(j) c\big(p(j)\big) \ip{\atom_j}{z_k}\Big| \geq \frac{u}{7} -\frac{\eps^2c(S)}{6}\right) + \sum_k \P\left(\omega_k |\ip{r}{z_k}| \geq  \frac{u}{14}-\frac{\eps^2c(S)}{12}\right)\notag\\
&\leq \sum_k 2\exp\left(\frac{-\left(u-\frac{7\eps^2c(S)}{6}\right)^2}{98\omega_k^2\sum_{j \neq k} c\big(p(j)\big)^2 |\ip{\atom_j}{z_k}|^2}\right)+2K \exp\left(\frac{-\left(u-\frac{7\eps^2c(S)}{6}\right)^2}{4\cdot 98 \nsigma^2}\right)\notag\\
&\leq 2K \exp\left(\frac{-\left(u-\frac{7\eps^2c(S)}{6}\right)^2}{98 \eps^2\min\{ c(1)^2 B,1\}}\right)+2K\exp\left(\frac{-\left(u-\frac{7\eps^2c(S)}{6}\right)^2}{4\cdot98 \eps^2\nsigma^2}\right)\notag\\
&\leq 5K\exp\left(\frac{-(c(S)-c(S+1))^2}{98\eps^2 c(1)^2B}\right) \leq 5K\exp\left(\frac{-\Delta_S^2}{98\eps^2 B}\right).
\end{align}
From Lemma A.3 in \cite{sc14} we further know that condition \eqref{epsmaxapp} implies 
\begin{align}
5K\exp\left(\frac{-\Delta_S^2}{98\eps^2 B}\right) \leq \frac{C_r\gamma_{1,S}}{ 2K\sqrt{B+1}}\cdot \tau \eps,
\end{align}
and the result in (a) follows again from the substitution $t\rightarrow  \frac{C_r\gamma_{1,S}}{ 2K\sqrt{B+1}}\, t$.\\
Similarly we get 
\begin{align}
\P(\mathcal F^s) &\leq \sum_k \P\left(\omega_k \Big| \sum_{j \neq k} \sigma(j) c\big(p(j)\big) \ip{\atom_j}{z_k}\Big| \geq \frac{u_s}{2} -\frac{\eps^2c(S)}{4}\right) \notag \\
&\leq
2K \exp\left(\frac{-\left((\Delta_S - 2\mu S) c(1)-\frac{\eps^2c(S)}{2}\right)^2}{8 \eps^2\min\{ c(1)^2 B,1\}}\right)\leq 3K\exp\left(\frac{-(\Delta_S - 2\mu S)^2}{8\eps^2 B}\right) \leq \frac{\gamma_{1,S}}{ 2K\sqrt{B}}\cdot \tau \eps,
\end{align}
whenever \eqref{epsmaxapp2} holds and the result in (b) follows from the substitution $t\rightarrow  \frac{\gamma_{1,S}}{ 2K\sqrt{B}}\, t$.\\
Finally note that another (messier) way to bound $\sum_{j \neq k} c\big(p(j)\big)^2 |\ip{\atom_j}{z_k}|^2$ is
\begin{align}
\sum_{j \neq k} c\big(p(j)\big)^2 |\ip{\atom_j}{z_k}|^2\leq \min\{ c(1)^2 \|\dico_I\|^2_{2,2} + 1-\gamma_{2,S}, c(1)^2 \|\dico_I\|^2_{2,2} + c(S+1)^2 B\}.
\end{align}
However, in the case of exactly S-sparse signals these can lead to better (and again clean) estimates, such as $c(1)^2 (1+\mu S)$ or  $c(1)^2 (1+\delta_S)$ if $\dico$ has isometry constant $\delta_S<1$.  
\end{proof}

\begin{remark}
The last two lemmata are used to prove that, once the perturbed dictionary $\pdico$ is within radius $O(1/\log(K))$ of the generating dictionary $\dico$, thresholding will always succeed in recovering the full generating support, even for $S=O(\mu^{-2})$. Without assuming random signs, we can still get that thresholding recovers the generating support once $\pdico$ is within radius $O(1/\sqrt{S})$ for reduced sparsity levels $S=O(\mu^{-1})$. 
\end{remark}

\begin{lemma}\label{lemma2}
For $y_n=\frac{ \dico x_{c_n, p_n,\sigma_n} +\noise_n}{\sqrt{1+\|\noise_n \|_2^2}}$ as in model~\eqref{noisymodel2} and $0\leq t \leq \frac{\sqrt{S}}{\sqrt{B}+2}$ we have
\begin{align}
\P\left( \left\| \frac{1}{N}\sum_n \frac{ \dico x_{c_n, p_n,\sigma_n} +\noise_n}{\sqrt{1+\|\noise_n \|_2^2}}\cdot \sigma_n(k) \cdot \chi(I_n,k) - \frac{C_r \gamma_{1,S}}{K} \atom_k \right\|_2 \geq \frac{C_r\gamma_{1,S}}{ K}\, t \right)\leq \exp\left(- \frac{t^2C^2_r\gamma_{1,S}^2 N}{8SK}+\frac{1}{4}\right).
\end{align}
\end{lemma}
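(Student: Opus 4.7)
The plan is to apply the vector Bernstein inequality (Theorem~\ref{vectorbernstein}) to the i.i.d.\ random vectors $v_n := y_n \cdot \sigma_n(k) \cdot \chi(I_n,k)$. The first step will be to identify the mean. Writing $\dico x_{c,p,\sigma} = \sum_j \sigma(j)c(p(j))\atom_j$ and multiplying by $\sigma(k)$, the sign invariance forces $j = k$ in expectation, so $\E_\sigma\bigl(\dico x_{c,p,\sigma}\cdot \sigma(k)\bigr) = c(p(k))\atom_k$. The permutation invariance then gives $\E_p\bigl(c(p(k))\chi(p(k)\in\Sset)\bigr) = \frac{1}{K}\E_c\sum_{j\leq S}c(j) = \gamma_{1,S}/K$. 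The noise contribution $r_n \sigma_n(k)\chi(I_n,k)/\sqrt{1+\|r_n\|_2^2}$ averages to zero by independence of $r_n$ and $\sigma_n$, and factoring out $\E_r(1/\sqrt{1+\|r\|_2^2}) = C_r$ in the signal term yields $\E(v_n) = (C_r\gamma_{1,S}/K)\atom_k$.

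The critical step, and the only one requiring genuine care, is to control $m_2 := \sum_n \E(\|v_n\|_2^2)$ without paying an extra factor of $B$. A naive estimate $\|v_n\|_2^2 \leq \|y_n\|_2^2 \leq B+1$ would give $m_2 \lesssim NS(B+1)/K$, which is too loose. Instead I would exploit sign invariance again: expanding $\|\dico x + r\|_2^2$ and taking $\E_\sigma$, the off-diagonal terms $\sigma(j)\sigma(j')$ cancel, giving $\E_\sigma\|\dico x\|_2^2 = \sum_j c(p(j))^2 = 1$, and $\E_\sigma\langle \dico x, r\rangle = 0$. Hence
\begin{align}
\E(\|v_n\|_2^2) = \E_r\!\left(\frac{\E_{\sigma,p,c}\bigl[(1+\|r\|_2^2)\chi(p(k)\in\Sset)\bigr]}{1+\|r\|_2^2}\right) = \frac{S}{K},
\end{align}
and $m_2 = NS/K$. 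The bound $\|\E(v_n)\|_2 \leq m_1 := C_r\gamma_{1,S}/K$ is immediate, and the almost-sure bound $M := \sqrt{B+1}$ follows from $\|y_n\|_2\leq \sqrt{B+1}$, which reduces to the algebraic identity $(\sqrt{B}\|r\|_2-1)^2\geq 0$ after clearing denominators.

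It then remains to apply the first form of Theorem~\ref{vectorbernstein} with these parameters and with $t_{\mathrm{bern}} = N\cdot \frac{C_r\gamma_{1,S}}{K}\cdot t$, which is exactly the rescaled deviation appearing in the statement. This yields the exponent $-t_{\mathrm{bern}}^2/(8m_2) = -t^2 C_r^2\gamma_{1,S}^2 N/(8SK)$, as desired. The admissibility condition $t_{\mathrm{bern}} \leq m_2/(M+m_1)$ translates, using $C_r\leq 1$, $\gamma_{1,S}\leq\sqrt{S}$, and $\sqrt{B+1}+C_r\gamma_{1,S}/K \leq \sqrt{B}+2$, precisely into the stated range $0\leq t \leq \sqrt{S}/(\sqrt{B}+2)$.

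The expected obstacle is the tight second-moment computation: without invoking the sign invariance to kill the cross terms in $\E_\sigma\|\dico x\|_2^2$, one inherits a factor $B$ in $m_2$, which would in turn bloat the sample complexity in Theorem~\ref{th:itksm} by the same factor and prevent the advertised $O(K\log K\,\epstarget^{-2})$ rate. Everything else is routine bookkeeping once the invariance-based identities $\E_\sigma\|\dico x\|_2^2=\|x\|_2^2$ and $\E_p(\chi(k\in I_n))=S/K$ are in hand.
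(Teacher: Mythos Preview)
Your proposal is correct and follows essentially the same route as the paper: apply the vector Bernstein inequality to $v_n=y_n\,\sigma_n(k)\,\chi(I_n,k)$, compute $\E(v_n)=\frac{C_r\gamma_{1,S}}{K}\atom_k$ via sign and permutation invariance, use sign invariance to collapse $\E_\sigma\|\dico x+r\|_2^2$ to $1+\|r\|_2^2$ and hence obtain the sharp second moment $\E(\|v_n\|_2^2)=S/K$, bound $\|v_n\|_2\leq\sqrt{B+1}$, and rescale. Your emphasis on why the sign-invariance cancellation in the second moment is the key step (avoiding a spurious factor $B$) is exactly the point, and your derivation of the admissible range for $t$ via $C_r\gamma_{1,S}\leq\sqrt S$ and $\sqrt{B+1}+C_r\gamma_{1,S}/K\leq\sqrt B+2$ matches the paper's ``simplifications''.
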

\begin{proof}
We apply Theorem~\ref{vectorbernstein} to $v_n=\frac{ \dico x_{c_n, p_n,\sigma_n} +\noise_n}{\sqrt{1+\|\noise_n \|_2^2}}\cdot \sigma_n(k) \cdot \chi(I_n,k)$. Since the $v_n$ are identically distributed we drop the index $n$ for our estimates.
Remembering that $I=p^{-1}(\Sset)$ we get,
\begin{align}\label{eq:randsign1}
\E(v) &= \E_{c,p,\sigma, r}\left(\frac{\chi(I,k)}{\sqrt{1+\|\noise\|_2^2}} \left(\sum_j \atom_j c\big(p(j)\big) \sigma(j) \cdot \sigma(k) + r \cdot \sigma(k)\right) \right)\notag\\
&=\E_{c,p,r}\left(\frac{\chi(\Sset,p(k))\cdot c\big(p(k)\big)}{\sqrt{1+\|\noise \|_2^2}} \: \atom_k  \right)\notag\\
&=\E_{r}\left(\frac{1}{\sqrt{1+\|\noise \|_2^2}}\right) \E_c\left(\frac{c(1) + \ldots +c(S)}{K}\right)  \atom_k 
 = \frac{C_r \gamma_{1,S}}{K}\: \atom_k,
\end{align}
and $\|\E(v)\|_2\leq \sqrt{S}/K$. Together with the estimates,
\begin{align*}
&\E\left(\|v\|_2^2\right)=\E\left( \frac{\chi(I,k)}{1+\|\noise\|_2^2} \cdot \left(\|  \dico x_{c, p,\sigma}\|_2^2 +
\ip{\dico x_{c, p,\sigma}}{r} +\|r\|_2^2 \right)\right)=\E\left( \chi(I,k)\right)=\frac{S}{K}\\
&\mbox{ and }\qquad
\|v\|_2\leq \frac{\| \dico x_{c, p,\sigma} +\noise\|_2}{\sqrt{1+\|\noise \|_2^2}} \leq \frac{\sqrt{B}+\|r\|_2}{\sqrt{1+\|\noise \|_2^2}}\leq \sqrt{B+1}, 
\end{align*}
this leads to 
\begin{align}
\P\left( \left\| \frac{1}{N}\sum_n \frac{ \dico x_{c_n, p_n,\sigma_n} +\noise_n}{\sqrt{1+\|\noise_n \|_2^2}}\cdot \sigma_n(k) \cdot \chi(I_n,k) - \frac{C_r \gamma_{1,S}}{K} \atom_k \right\|_2 \geq t \right)\leq \exp\left(- \frac{t^2K N}{8 S}+\frac{1}{4}\right),
\end{align}
for $0\leq t \leq \frac{S}{K(\sqrt{B+1}+\frac{S}{K})}$. The final statements follows from the substitution $t\rightarrow  \frac{C_r\gamma_{1,S}}{ K}\, t$ and simplifications.\\
\end{proof}

\begin{remark}\label{rem:lemma2}
Note that for Eq.~\eqref{eq:randsign1} in the above proof, we have used the sign invariance in our model but not the permutation invariance. For very small sparsity levels we can also get a stable version of the lemma using only the permutation invariance. Assume for simplicity that $\dico$ is an orthonormal basis and that the sparse coefficients are constant, $c_k\equiv c$ for $k\leq S$ and zero else.
In this worst case scenario where the signs never cancel out we get 
\begin{align}
\E(v)= c \left( \atom_k + \frac{S-1}{d-1} \sum_{j \neq k} \atom_j \right) \qquad \mbox{and} \qquad \| \E(v)\|_2= c \sqrt{1+ \frac{(S-1)^2}{d-1}},
\end{align}
which implies that the atoms can be learned up to a precision $O(S^2/d)$. A relaxed condition replacing sign and permutation invariance could be that the coefficient sequences $x$ satisfy $\mathbb{E}\left( x(j) \operatorname{sign}({x(k)})| k \in I\right) \ll \mathbb{E}( |x(k)| \, | k \in I)$ for $I$ containing the indices of the $S$ largest coordinates in absolute value, that is $\min_{i\in I} |x(i)| > \max_{j\notin I} |x(j)|$. This condition is quite natural as it basically prevents two atoms $\atom_k$ and $\atom_j$ from always appearing together in the same ratio $x(k):x(j) = a:b$. In this case they could simply be replaced by two copies of the same atom, $\tilde\atom_j=\tilde\atom_k= a \atom_k + b\atom_j$ which would increase the response criterion on which ITKsM is based, see \cite{sc14b}.
\end{remark}

\begin{lemma}\label{lemma3b}
For $y_n=\frac{ \dico x_{c_n, p_n,\sigma_n} +\noise_n}{\sqrt{1+\|\noise_n \|_2^2}}$ as in model~\eqref{noisymodel2} we have
\begin{align}
\P\left( \left| \frac{1}{N}\sum_n \chi(I_n,k)  \sigma_n(k) \ip{y_n}{\atom_k}\right| \leq (1-t) \frac{C_r \gamma_{1,S}}{K} \right)\leq\exp\left(- \frac{ N t^2C_r^2 \gamma_{1,S}^2}{2K(1+ \frac{SB}{K}+S\nsigma^2 + t C_r \gamma_{1,S}\sqrt{B+1}/3)}\right).
\end{align}
\end{lemma}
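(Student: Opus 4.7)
The plan is to apply the scalar Bernstein inequality (Theorem~\ref{scalarbernstein}) to the iid scalar random variables
$v_n := \chi(I_n,k)\,\sigma_n(k)\,\ip{y_n}{\atom_k}$, after computing $\E(v_n)$, bounding $\E(v_n^2)=:m$, and producing a valid Bernstein sup-norm constant $M$. The event in the statement is contained in the one-sided lower deviation event
$\frac{1}{N}\sum_n v_n - \E(v_n) \leq -t\,\frac{C_r\gamma_{1,S}}{K}$, so the whole job reduces to matching the Bernstein exponent with the claimed denominator.

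For the mean, expanding $\ip{y_n}{\atom_k}$ and averaging over the independent signs $\sigma_n$ kills every cross-term $\sigma_n(k)\sigma_n(j)\ip{\atom_j}{\atom_k}$ with $j\neq k$ and the noise cross-term $\sigma_n(k)\ip{r_n}{\atom_k}$, exactly as in \eqref{eq:randsign1}. Permutation invariance then gives $\E(v_n)=\E_{c,r}[\chi(p(k)\leq S)\,c(p(k))/\sqrt{1+\|r\|^2}]=C_r\gamma_{1,S}/K$. For the sup-norm, $|v_n|\leq \|y_n\|_2\leq\sqrt{B+1}$, and the Bernstein moment condition $\E(|v_n|^\ell)\leq\tfrac{\ell!}{2}mM^{\ell-2}$ is satisfied for all $\ell\geq 3$ with $M=\sqrt{B+1}/3$ (the tight constraint coming from $\ell=3$).

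The delicate step, and the main obstacle, is getting the correct second-moment bound. Squaring $\sigma_n(k)\ip{y_n}{\atom_k}$, dropping the harmless factor $(1+\|r_n\|^2)^{-1}\leq 1$, and again averaging over the signs kills the cross-terms and leaves
\begin{align*}
\E(v_n^2)\leq \E_{c,p}\Big[\chi(I_n,k)\,c(p(k))^2\Big] + \sum_{j\neq k}\ip{\atom_j}{\atom_k}^2\,\E_{c,p}\Big[\chi(I_n,k)\,c(p(j))^2\Big] + \E_{p,r}\Big[\chi(I_n,k)\,\ip{r}{\atom_k}^2\Big].
\end{align*}
The first summand evaluates by permutation invariance to $\gamma_{2,S}/K\leq 1/K$. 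For the second summand, the joint permutation statistic $\P(p(k)=i,p(j)=\ell)=\frac{1}{K(K-1)}$ yields $\E[\chi(I_n,k)c(p(j))^2]=(S-\gamma_{2,S})/(K(K-1))\leq S/(K(K-1))$, which pairs with the frame bound $\sum_{j\neq k}\ip{\atom_j}{\atom_k}^2\leq B-1$ to give at most $SB/K^2$. For the third summand, independence of $r_n$ from $p_n$ and the subgaussian marginal bound $\E\ip{r_n}{\atom_k}^2\leq \nsigma^2$ give at most $S\nsigma^2/K$. Altogether $m\leq\frac{1}{K}\big(1+SB/K+S\nsigma^2\big)$.

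Plugging these $m$ and $M$ into Theorem~\ref{scalarbernstein} with deviation $s=tN\,C_r\gamma_{1,S}/K$ produces the exponent
\begin{align*}
-\frac{s^2}{2(Nm+Ms)} = -\frac{Nt^2 C_r^2\gamma_{1,S}^2}{2K\bigl(Km + tMC_r\gamma_{1,S}\bigr)} = -\frac{Nt^2 C_r^2\gamma_{1,S}^2}{2K\bigl(1+SB/K+S\nsigma^2+tC_r\gamma_{1,S}\sqrt{B+1}/3\bigr)},
\end{align*}
which is exactly the bound in the statement.
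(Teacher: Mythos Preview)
Your proof is correct and follows essentially the same route as the paper: apply the scalar Bernstein inequality (Theorem~\ref{scalarbernstein}) to $v_n=\chi(I_n,k)\,\sigma_n(k)\,\ip{y_n}{\atom_k}$, compute $\E(v_n)=C_r\gamma_{1,S}/K$ via sign and permutation invariance, bound $\E(v_n^2)\leq\tfrac{1}{K}(\gamma_{2,S}+SB/K+S\nsigma^2)$, and take $M=\sqrt{B+1}/3$. Your second-moment bookkeeping (splitting off $j=k$, then using $\E_{c,p}[\chi(I_n,k)c(p(j))^2]=(S-\gamma_{2,S})/(K(K-1))$ together with the frame bound $\sum_{j\neq k}\ip{\atom_j}{\atom_k}^2\leq B-1$) is a slightly cleaner variant of the paper's computation but lands on the same estimate, and your justification of $M=\sqrt{B+1}/3$ via the moment condition for all $\ell\ge 3$ is a detail the paper leaves implicit.
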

\begin{proof}
We apply Theorem~\ref{scalarbernstein} to $v_n=\chi(I_n,k)  \sigma_n(k) \ip{y_n}{\atom_k}$, as usual dropping the index $n$ in the estimates for conciseness. For the expectation we get 
\begin{align}
\E(v) &= \E_{c,p,\sigma, r}\left(\frac{\chi(I,k)}{\sqrt{1+\|\noise\|_2^2}} \:  \left(\sum_j c\big(p(j)\big) \sigma(j) \ip{\atom_j}{\atom_k}\cdot \sigma(k) + \ip{r}{\atom_k} \cdot \sigma(k)\right) \right)\notag\\
&=\E_{c,p,r}\left(\frac{\chi(\Sset,p(k))\cdot c\big(p(k)\big)}{\sqrt{1+\|\noise \|_2^2}} \right)
 = \frac{C_r \gamma_{1,S}}{K}.
\end{align}
We further estimate the second moment $m$ as
\begin{align}
\E\left(v^2\right)&=\E_{c,p,\sigma, r}\left( \frac{\chi(I,k)}{1+\|\noise\|_2^2}  \Big(\sum_j c\big(p(j)\big) \sigma(j) \ip{\atom_j}{\atom_k}+ \ip{r}{\atom_k}\Big)^2 \right)\notag\\
&\leq\E_{c,p}\left(\chi(I,k) \cdot \left(\sum_j c\big(p(j)\big)^2 |\ip{\atom_j}{\atom_k}|^2 +\E_r\left(|\ip{r}{\atom_k}|^2\right) \right)\right) \notag\\
&\leq\E_{c,p}\left(\chi(I,k) \cdot \left(\frac{\gamma_{2,S}}{S} + \frac{1-\frac{\gamma_{2,S}}{S}}{K-1}\sum_{j\in I , j\neq k} |\ip{\atom_j}{\atom_k}|^2 + \nsigma^2 \right)\right) \leq \frac{S}{K} \cdot \left( \frac{\gamma_{2,S}}{S} +\frac{B}{K} + \nsigma^2 \right). \end{align}
In the case of exactly $S$-sparse signals, where $\gamma_{2,S}=1$ we get the alternative bound, $\E\left(v^2\right)\leq \frac{1}{K} (1 + (S-1)\mu^2 +S\nsigma^2)$.
Since $|v| \leq  |\ip{y}{\atom_k}|\leq \|y\|_2 \leq \sqrt{B+1}$ we can choose $M= \frac{\sqrt{B+1}}{3}$.
\end{proof}

\begin{lemma}\label{lemma3a}
For $y_n=\frac{ \dico x_{c_n, p_n,\sigma_n} +\noise_n}{\sqrt{1+\|\noise_n \|_2^2}}$ as in model~\eqref{noisymodel2}
\begin{align}
&\P\left( \left\| \frac{1}{N}\sum_n \left(y_n - P(\dico_{I_n})y_n \right)\cdot \sigma_n(k) \cdot \chi(I_n,k) \right\|_2 \geq   \frac{C_r\gamma_{1,S}}{ K}\, t \right)\notag \\
&\hspace{4cm}\leq \exp\left(- \frac{tC^2_r\gamma_{1,S}^2 N}{8K\max\{S,B+1\}}\max\left\{ \frac{t}{1-\gamma_{2,S} + d\nsigma^2},1\right\}+\frac{1}{4}\right).
\end{align}
\end{lemma}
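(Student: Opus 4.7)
The plan is to apply the vector Bernstein inequality (Theorem~\ref{vectorbernstein}) to the i.i.d.\ vectors $v_n := \bigl(y_n - P(\dico_{I_n})y_n\bigr)\,\sigma_n(k)\,\chi(I_n,k)$. Since the generating support is $I_n = p_n^{-1}(\Sset)$, the signal decomposes as $\dico x_{c_n,p_n,\sigma_n} = \dico_{I_n} x_{I_n} + \dico_{I_n^c} x_{I_n^c}$ with the first summand lying in the column span of $\dico_{I_n}$. Consequently
\begin{align}
y_n - P(\dico_{I_n})y_n = \frac{Q(\dico_{I_n})\bigl(\dico_{I_n^c} x_{I_n^c} + r_n\bigr)}{\sqrt{1+\|r_n\|_2^2}}, \notag
\end{align}
so the residual depends only on the tail coefficients and the noise, not on the active sub-dictionary.

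I would first show $\E(v_n)=0$. Conditioning on $(p_n,c_n)$ fixes $I_n$ and $\chi(I_n,k)$; the remaining randomness is in $\sigma_n$ and $r_n$. Every index $j$ with $p_n(j)>S$ satisfies $j\neq k$ (because $p_n(k)\le S$ whenever $\chi(I_n,k)=1$), so by sign independence $\E[\sigma_n(k)\sigma_n(j)]=0$, wiping out the tail contribution. The noise contribution vanishes because $\sigma_n(k)$ is independent of $r_n$ and has mean zero. The almost-sure bound $\|v_n\|_2\le \sqrt{B+1}$ follows from $\|y_n\|_2\le \sqrt{B+1}$ (compare the argument in the proof of Lemma~\ref{lemma2}) together with the contraction property of $\I_d - P(\dico_{I_n})$, giving $M=\sqrt{B+1}$ and $m_1=0$.

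For the second moment, again conditioning on $(p_n,c_n)$ and expanding,
\begin{align}
\|Q(\dico_{I_n})(\dico_{I_n^c}x_{I_n^c}+r_n)\|_2^2 &= \sum_{i,j\in I_n^c}\sigma_n(i)\sigma_n(j)c_n(p_n(i))c_n(p_n(j))\ip{Q(\dico_{I_n})\atom_i}{Q(\dico_{I_n})\atom_j}\notag \\
&\quad + 2\sum_{j\in I_n^c}\sigma_n(j)c_n(p_n(j))\ip{Q(\dico_{I_n})\atom_j}{Q(\dico_{I_n})r_n} + \|Q(\dico_{I_n})r_n\|_2^2,\notag
\end{align}
sign-averaging kills the $i\ne j$ off-diagonal and the $r_n$ cross term, leaving $\sum_{j\in I_n^c}c_n(p_n(j))^2\|Q(\dico_{I_n})\atom_j\|_2^2+\|Q(\dico_{I_n})r_n\|_2^2$. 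Using $\|Q(\dico_{I_n})\atom_j\|_2\le 1$, the subgaussian moment bound $\E[r_n r_n^\star]\preceq \nsigma^2 \I_d$ which yields $\tr(Q(\dico_{I_n})\E[r_n r_n^\star])\le d\nsigma^2$, and the crude inequality $\frac{1}{1+\|r_n\|_2^2}\le 1$, integration over $c_n$ and then over $p_n$ delivers $\E\|v_n\|_2^2 \le \frac{S}{K}(1-\gamma_{2,S}+d\nsigma^2)$. Weakening $S$ to $\max\{S,B+1\}$ on the right hand side yields the variance budget $m_2 = \frac{N\max\{S,B+1\}}{K}(1-\gamma_{2,S}+d\nsigma^2)$, which is compatible with the linear Bernstein tail governed by $M=\sqrt{B+1}$.

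It then remains to substitute $m_1=0$, $M=\sqrt{B+1}$, and the above $m_2$ into the two forms of Theorem~\ref{vectorbernstein} and rescale $t\mapsto \tfrac{NC_r\gamma_{1,S}}{K}\,t$, reading off the exponent $-\tfrac{t\,C_r^2\gamma_{1,S}^2 N}{8K\max\{S,B+1\}}\max\!\left\{\tfrac{t}{1-\gamma_{2,S}+d\nsigma^2},1\right\}$ claimed in the lemma. The main technical obstacle is the variance estimate: one must arrange the cross terms against the $1/\sqrt{1+\|r_n\|_2^2}$ normalisation so that the tail energy $1-\gamma_{2,S}$ and the noise energy $d\nsigma^2$ appear additively and without extraneous constants. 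Everything else is routine sign-cancellation combined with standard Bernstein bookkeeping.
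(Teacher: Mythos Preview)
Your proposal is correct and follows essentially the same route as the paper's proof: apply the vector Bernstein inequality to $v_n=(y_n-P(\dico_{I_n})y_n)\,\sigma_n(k)\,\chi(I_n,k)$, show $\E(v_n)=0$ via sign invariance, bound $\|v_n\|_2\le\sqrt{B+1}$, and estimate $\E\|v_n\|_2^2\le \frac{S}{K}(1-\gamma_{2,S}+d\nsigma^2)$ by sign-averaging away the cross terms and using $\|Q(\dico_{I_n})\atom_j\|_2\le 1$ together with the subgaussian variance bound on the noise. The only cosmetic differences are that the paper obtains $\E(v)=0$ by first sign-averaging and then noting $Q(\dico_I)\atom_k=0$ (rather than first killing the active part via the projection as you do), and that the paper performs the weakening $S\to\max\{S,B+1\}$ and $\sqrt{B+1}\to\max\{S,B+1\}/(C_r\gamma_{1,S})$ only after substituting into Bernstein, whereas you absorb it into $m_2$ beforehand; the resulting exponents coincide.
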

\begin{proof}
We apply Theorem~\ref{vectorbernstein} to $v_n=\left(y_n - P(\dico_{I_n})y_n \right)\cdot \sigma_n(k) \cdot \chi(I_n,k)$. For brevity we again drop the index $n$ in the estimates and define the orthogonal projection $Q(\dico_{I})=\I_d - P(\dico_{I})$.
For the expectation we get 
\begin{align}\label{eqsignstable1}
\E(v) &= \E_{c,p,\sigma, r}\left(\frac{\chi(I,k)}{\sqrt{1+\|\noise\|_2^2}} \: Q(\dico_{I}) \left(\sum_j \atom_j c\big(p(j)\big) \sigma(j) \cdot \sigma(k) + r \cdot \sigma(k)\right) \right)\notag\\
&= \E_{c,p, r}\left(\frac{\chi(I,k)}{\sqrt{1+\|\noise\|_2^2}} \:c\big(p(k)\big) Q(\dico_{I})\atom_k\right) =0,
\end{align}
and for the second moment
\begin{align}
\E\left(\|v\|_2^2\right)&=\E_{c,p,\sigma, r}\left( \frac{\chi(I,k)}{1+\|\noise\|_2^2} \cdot \left(\| Q(\dico_{I}) \dico x_{c, p,\sigma}\|_2^2 +
\ip{Q(\dico_{I}) \dico x_{c, p,\sigma}}{Q(\dico_{I}) r} +\|Q(\dico_{I}) r\|_2^2 \right)\right) \notag\\
&\leq\E_{c,p}\left(\chi(I,k) \cdot \left(\sum_j c\big(p(j)\big)^2 \| Q(\dico_{I})\atom_j\|_2^2 +\E_r\left(\frac{\|Q(\dico_{I}) r\|_2^2}{1+\|\noise\|_2^2}\right) \right)\right) \notag\\
&\leq\E_{c,p}\left(\chi(I,k) \cdot \left(\sum_{j\notin I} c\big(p(j)\big)^2 +\min\{1,(d-S)\nsigma^2\} \right)\right) \leq \frac{S}{K} \cdot \left( 1-\gamma_{2,S} +d\nsigma^2  \right) .
\end{align}
Since $v$ is bounded, 
\begin{align}
\|v\|_2\leq \frac{\| Q(\dico_{I})(\dico x_{c, p,\sigma} +\noise)\|_2}{\sqrt{1+\|\noise \|_2^2}} \leq \frac{\sqrt{B(1-\gamma_{2,S,\min} )}+\|\noise\|_2}{\sqrt{1+\|\noise \|_2^2}}\leq \sqrt{B(1-\gamma_{2,S,\min} )+1} \leq \sqrt{B+1},
\end{align}
we get for $t\rightarrow \frac{C_r\gamma_{1,S}}{ K}\, t $
\begin{align}
&\P\left( \left\| \frac{1}{N}\sum_n \left(y_n - P(\dico_{I_n})y_n \right)\cdot \sigma_n(k) \cdot \chi(I_n,k) \right\|_2 \geq   \frac{C_r\gamma_{1,S}}{ K}\, t \right) \notag\\
&\hspace{4cm}\leq \exp\left(- \frac{tC_r\gamma_{1,S} N}{8K} \max\left\{ \frac{tC_r\gamma_{1,S}}{S(1-\gamma_{2,S} + d\nsigma^2)},\frac{1}{\sqrt{B+1}} \right\}+\frac{1}{4}\right)\notag\\
&\hspace{4cm}\leq \exp\left(- \frac{tC^2_r\gamma^2_{1,S} N}{8K} \max\left\{ \frac{t}{S(1-\gamma_{2,S} + d\nsigma^2)},\frac{1}{C_r \gamma_{1,S} \sqrt{B+1}} \right\}+\frac{1}{4}\right).
\end{align}
The final bound follows from the fact that $C_r<1$ and $\gamma_{1,S}\leq \sqrt{S}$.\\
\end{proof}

\begin{remark}
Note that for the abov lemma neither the sign nor the permutation invariance are crucial. Without both assumptions we could still get a stable version of the lemma because we can bound $\E(v)$ by the residual energy $\|y_n - P(\dico_{I_n} y_n\|_2$, which should be small if the signals are assumed to be $S$-sparse. To get perfect recoverability $\E(v)$ we could make the natural assumption that in expectation the residuals $a_n=y_n - P(\dico_{I_n}) y_n=Q(\dico_{I_n}) \dico x_n$ are uncorrelated with the sign of the k-th coefficient $x_n(k)$ whenever $k\in I_n$ , $\E\left(a_n \signop(x(k)) \chi(I_n,k) \right)=0$. Indeed if this is not the case it means that the signals can be even better sparsely approximated if the atom $\atom_k$ is distorted towards this signed residual mean.
\end{remark}

\begin{lemma}\label{lemma4}
Assume that $y_n=\frac{ \dico x_{c_n, p_n,\sigma_n} +\noise_n}{\sqrt{1+\|\noise_n \|_2^2}}$ follows the random model in~\eqref{noisymodel2}. Assume $ S\leq \min \{\frac{K}{98B}, \frac{1}{98\nsigma^2}\}$ and $d(\dico,\pdico)=\eps \leq \frac{1}{32\sqrt{S}}$.\\
(a) If $\eps_\delta:=K  \exp\left(-\frac{1}{4741\mu^2 S} \right)\leq  \frac{1}{48(B+1)}$ we have
\begin{align}
&\P\left(\frac{1}{N} \left\| \sum_n \left[R^o(\pdico, y_n, k)-R^o(\dico, y_n, k) \right]\right\|_2 \geq \frac{ C_r \gamma_{1,S}}{K}(0.381\eps + t)\right)\notag\\
&\hspace{4cm}\leq \exp\left(- \frac{ t C_r \gamma_{1,S} N}{8K} \min\left\{\frac{t C_r \gamma_{1,S}}{S\left[5\eps^2 +  \eps_\delta \left(1-\gamma_{2,S}+ d \nsigma^2\right)/32\right]}, \frac{1}{3\sqrt{B+1}}\right\} +\frac{1}{4}\right).
\end{align} 
(b) If $\gamma_{2,S}=1, \nsigma=0$ together with $\eps_\delta\leq  \frac{1}{48(B+1)}$ or $\delta_S(\dico) \leq 1/4$ this reduces to 
\begin{align}
&\P\left(\frac{1}{N} \left\| \sum_n \left[R^o(\pdico, y_n, k)-R^o(\dico, y_n, k) \right]\right\|_2 \geq \frac{ C_r \gamma_{1,S}}{K}(0.381 \eps + t)\right)\notag\\
&\hspace{4cm}\leq \exp\left(- \frac{ t \gamma^2_{1,S} N}{32\eps K\max\{S,B\}} \min\left\{\frac{t}{\eps}, 1\right\} +\frac{1}{4}\right).
\end{align}
(c) If $\gamma_{2,S}=1, \nsigma=0$ and $\delta_S(\dico) \leq 1/2$ we have
\begin{align}
&\P\left(\frac{1}{N} \left\| \sum_n \left[R^o(\pdico, y_n, k)-R^o(\dico, y_n, k) \right]\right\|_2 \geq \frac{ \gamma_{1,S}}{K}(0.611 \eps + t)\right)\notag\\
&\hspace{4cm}\leq \exp\left(- \frac{ t \gamma^2_{1,S} N}{32\eps K\max\{S,B\}} \min\left\{\frac{t}{\eps}, 1\right\} +\frac{1}{4}\right).
\end{align}
\end{lemma}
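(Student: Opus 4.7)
\textbf{Proof plan for Lemma \ref{lemma4}.}
The plan is to apply the vector Bernstein inequality (Theorem \ref{vectorbernstein}) to the i.i.d.\ sequence
\begin{align}
v_n := R^o(\pdico, y_n, k) - R^o(\dico, y_n, k) = M_n \, y_n \, \sigma_n(k)\,\chi(I_n,k),\notag
\end{align}
where $M_n := P(\dico_{I_n}) - P(\pdico_{I_n}) + P(\patom_k) - P(\atom_k)$. The three quantities we need to control are $\|\E(v)\|_2$ (this will produce the $0.381\,\eps$ bias term), the per-sample second moment $\E\|v\|_2^2$, and the almost-sure bound $\|v\|_2\leq M$. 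The central technical work is the expansion of the projection difference $P(\pdico_{I_n})-P(\dico_{I_n})$ using the decomposition \eqref{dicodecomp}, $\pdico_I = \dico_I A_I + Z_I W_I$, together with the shorthand $B_I = Z_I W_I A_I^{-1}$.

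First, I would localise to a good isometry event. Write $\mathcal G_n := \{\delta_{I_n}(\pdico) \leq 1/4\}$ and decompose $v_n = v_n\chi(\mathcal G_n) + v_n\chi(\mathcal G_n^c)$. On $\mathcal G_n$, the Neumann-series expansion
\begin{align}
P(\pdico_{I_n}) - P(\dico_{I_n}) = Q(\dico_{I_n}) B_{I_n} \dico_{I_n}^\dagger + (\dico_{I_n}^\dagger)^\star B_{I_n}^\star Q(\dico_{I_n}) + O(\|B_{I_n}\|_{2,2}^2)\notag
\end{align}
gives operator-norm control $\|M_n\|_{2,2}=O(\|B_{I_n}\|_{2,2})\leq O(\sqrt{S}\eps)$, which combined with $\eps\leq 1/(32\sqrt S)$ keeps things small. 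The probability of $\mathcal G_n^c$ is controlled by the $\eps_\delta$ bound: since $\pdico$ is at distance $\eps$ from $\dico$, Tropp-type average-isometry arguments (as used in \cite{tr08} and cited in the paper) give $\P(\mathcal G_n^c) \lesssim \eps_\delta$ whenever $\eps_\delta(B+1)\leq 1/48$, and on this rare event we use the crude bound $\|v_n\|_2\leq 2\sqrt{B+1}\,\chi(I_n,k)$.

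Second, I would compute $\E(v)$. By sign-invariance, after writing out $v$ in terms of the atom decomposition $\patom_\ell=\alpha_\ell \atom_\ell + \omega_\ell z_\ell$, all cross-atom signed sums vanish (just as in the calculation in \eqref{eq:randsign1} and Lemma \ref{lemma2}), and the surviving terms reduce to a sum of contributions proportional to $\atom_k-\patom_k \cdot \ip{\patom_k}{\atom_k}$ and to the projection-perturbation terms acting on $\atom_k$. A careful Taylor expansion in $\eps_k = \|\patom_k-\atom_k\|_2$, keeping track of $\alpha_k=1-\eps_k^2/2$ and $\omega_k=(\eps_k^2-\eps_k^4/4)^{1/2}$, will bound $\|\E(v)\|_2$ by $(C_r\gamma_{1,S}/K)\cdot 0.381\,\eps$, where the constant $0.381$ comes from the coefficients in the expansion after optimising. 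For the noiseless exactly $S$-sparse case in (c) without the $\eps_\delta$ control, one has to keep more projection terms explicitly, which inflates the constant to $0.611$.

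Third, the second-moment bound combines the two regimes:
\begin{align}
\E\|v\|_2^2 \leq \E\bigl[\|M_n\|_{2,2}^2\|y_n\|_2^2\chi(I_n,k)\chi(\mathcal G_n)\bigr] + \E\bigl[4(B+1)\chi(I_n,k)\chi(\mathcal G_n^c)\bigr].\notag
\end{align}
The first term is $O(\eps^2 \cdot S/K)$ via $\|M_n\|_{2,2}^2\leq O(\eps^2)$ on $\mathcal G_n$ together with the support probability $S/K$; the second is $O\bigl(\eps_\delta(1-\gamma_{2,S}+d\nsigma^2)/K\bigr)$ after using that on $\mathcal G_n^c$ the relevant signal energy is the approximation-plus-noise energy. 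This yields the $5\eps^2 + \eps_\delta(1-\gamma_{2,S}+d\nsigma^2)/32$ variance factor. In case (b) with $\gamma_{2,S}=1$ and $\nsigma=0$ the second term vanishes, and in case (c) the assumption $\delta_S(\dico)\leq 1/2$ replaces the isometry-event argument entirely so $\eps_\delta$ disappears. The a.s.\ bound is $\|v\|_2\leq 2\sqrt{B+1}$, giving $M=O(\sqrt{B+1})$ in Bernstein.

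The main obstacle I anticipate is bookkeeping in the expectation computation to get the leading constant down to $0.381$ (and $0.611$ in (c)). This requires expanding both the projection differences and $P(\patom_k)-P(\atom_k)$ to second order in $\eps_k$, simplifying via sign/permutation invariance so that only diagonal contributions in $k$ survive, and then extracting the correct combination of $\alpha_k,\omega_k$ factors and inner products $\ip{z_k}{\atom_j}$. Everything else (controlling $\eps_\delta$, invoking Bernstein, the final substitution $t\to (C_r\gamma_{1,S}/K)\,t$) is mechanical once the expectation and variance estimates are in hand.
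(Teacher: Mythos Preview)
Your overall architecture --- vector Bernstein applied to $v_n = T(I_n,k)\,y_n\,\sigma_n(k)\chi(I_n,k)$, with a split into well- and ill-conditioned supports --- matches the paper. Two points, however, need correction.

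\textbf{The second-moment bound is too crude.} You write $\|M_n\|_{2,2}^2\leq O(\eps^2)$, but one paragraph earlier you (correctly) say $\|M_n\|_{2,2}=O(\|B_{I_n}\|_{2,2})=O(\sqrt{S}\,\eps)$. The crude bound $\E[\|M_n\|_{2,2}^2\|y_n\|_2^2\chi(I_n,k)]$ then gives $(B+1)S\eps^2\cdot S/K$, a factor $(B+1)S$ too large, and the stated exponent with $5\eps^2$ in the variance cannot be reached. The paper avoids this by expanding $\|T(I,k)y\|_2^2$ in the dictionary and using the structural identity
\[
\|[P(\dico_I)-P(\pdico_I)]\atom_j\|_2=\|Q(\pdico_I)\atom_j\|_2\leq\|Q(\patom_j)\atom_j\|_2\leq\eps_j\quad\text{for }j\in I,
\]
which gives $\sum_{j\in I}\|T(I,k)\atom_j\|_2^2\leq 4S\eps^2$ and hence, after the $\gamma_{2,S}/S$ weight, a variance contribution $4\gamma_{2,S}\eps^2$ with no extra $S$-factor. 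The out-of-support and noise parts are controlled via $\|T(I,k)\|_F^2\leq O(S\eps^2)$ times the small factors $B(1-\gamma_{2,S})/(K-S)$ and $\nsigma^2$, which the hypotheses $S\leq K/(98B)$ and $S\leq 1/(98\nsigma^2)$ render $O(\eps^2)$. This structural per-atom bound, not an operator-norm estimate on $M_n$, is the key step.

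\textbf{The expectation bound needs more than a Taylor expansion.} After sign-averaging one is left with $\E(v)=\frac{C_r\gamma_{1,S}}{K}\,\E_{I\ni k}\bigl[[P(\patom_k)-P(\pdico_I)]\atom_k\bigr]$. The paper proves (Sublemma~\ref{sublemma4}) the identity $[P(\patom_k)-P(\pdico_I)]\atom_k=P(\dico_I)\,b_k+\eta_{I,k}$ with $\|\eta_{I,k}\|_2=O(\eps)\|b_k\|_2$ on the well-conditioned event, and then the permutation average of the leading term collapses to $\frac{S-1}{K-1}\dico\dico^\star b_k$, of size $\frac{SB}{K}\|b_k\|_2$. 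The constants $0.381$ and $0.611$ come from summing these pieces with the ill-conditioned-set contribution bounded either via Tropp's average-isometry estimate (giving $\eps_\delta$) or via the uniform $\delta_S$ assumption. A direct Taylor expansion in $\eps_k$ will not produce the $P(\dico_I)b_k$ term or the $\frac{S-1}{K-1}\dico\dico^\star$ collapse. Finally, the conditioning event should be on $\delta_{I_n}(\dico)$, not $\delta_{I_n}(\pdico)$, since Tropp's bound applies to random subsets of the fixed dictionary $\dico$; once $\dico_I$ is well-conditioned and $\eps\sqrt{S}$ is small, well-conditioning of $\pdico_I$ follows.
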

\begin{proof}
We apply Theorem~\ref{vectorbernstein} to $v_n=R^o(\pdico, y_n, k)-R^o(\dico, y_n, k)$.  Again we drop the index $n$ in the estimates.
Remembering the definition of $R^o(\pdico, y_n, k)$ in \eqref{defRo} we first expand $v$ as
\begin{align}
v&=\big(y_n - P(\pdico_{I_n}) y_n + P(\patom_k) y_n\big) \cdot \sigma_n(k) \cdot  \chi(I_n,k) - \big(y_n - P(\dico_{I_n}) y_n + P(\atom_k) y_n\big) \cdot \sigma_n(k) \cdot  \chi(I_n,k)\notag \\
&=\left[P(\dico_{I})-P(\pdico_{I})- P(\atom_k) + P(\patom_k) \right]y \cdot \sigma(k) \cdot  \chi(I,k).
\end{align}
Abbreviate $T(I,k):=P(\dico_{I})-P(\pdico_{I})- P(\atom_k) + P(\patom_k)$. Taking the expectation we get
\begin{align}\label{signstable2}
\E(v) &= \E_{c,p,\sigma, r}\left(\frac{\chi(I,k)}{\sqrt{1+\|\noise\|_2^2}} \: T(I,k) \left(\sum_j \atom_j c\big(p(j)\big) \sigma(j) \cdot \sigma(k) + r \cdot \sigma(k)\right) \right)\notag \\
&=\E_{c,p,r}\left(\frac{\chi(I,k)\cdot c\big(p(k)\big)}{\sqrt{1+\|\noise \|_2^2}} \: \big[P(\dico_{I})-P(\pdico_{I})- P(\atom_k) + P(\patom_k) \big]\atom_k  \right)\notag \\
& = \frac{C_r \gamma_{1,S}}{K}\: {K-1 \choose S-1}^{-1} \sum_{|I|=S, k\in I} \big[P(\patom_k) -P(\pdico_{I})\big]\atom_k.
\end{align}
We next split the sum above into a sum over the well-conditioned subsets, where $\delta_I(\dico)\leq \delta_0$, and the ill-conditioned subsets, $\delta_I(\dico)>\delta_0$,
\begin{align}\label{Evsplitsum}
\E(v)  = \frac{C_r \gamma_{1,S}}{K}\: {K-1 \choose S-1}^{-1}\left( \sum_{|I|=S, k\in I \atop
\delta(\dico_I)\leq \delta_0} \big[P(\patom_k) -P(\pdico_{I})\big]\atom_k +  \sum_{|I|=S, k\in I \atop
\delta(\dico_I)> \delta_0} \big[P(\patom_k) -P(\pdico_{I})\big]\atom_k \right).
\end{align}
We further expand the sum over the well-conditioned sets using Sublemma~\ref{sublemma4},
\begin{align}
\sum_{|I|=S, k\in I \atop \delta(\dico_I)\leq \delta_0} \big[P(\patom_k)& -P(\pdico_{I})\big]\atom_k = 
\sum_{|I|=S, k\in I \atop \delta(\dico_I)\leq \delta_0}\left( P(\dico_{I})b_k + \eta_{I,k} \right)\notag\\
&=\sum_{|I|=S, k\in I \atop \delta(\dico_I)\leq \delta_0}\left( \dico_I \dico_I^\star b_k + \left[ P(\dico_I)- \dico_I\dico_I^\star\right] b_k +\eta_{I,k} \right) \notag \\
&=\sum_{|I|=S, k\in I } \dico_I\dico_I^\star b_k - \sum_{|I|=S, k\in I \atop \delta(\dico_I)> \delta_0} \dico_I \dico_I^\star b_k + 
\sum_{|I|=S, k\in I \atop \delta(\dico_I)\leq \delta_0} \left(  \left[ P(\dico_I)-  \dico_I\dico_I^\star\right] b_k +\eta_{I,k}\right)\notag \\
&={K-2 \choose S-2} \dico \dico^\star b_k - \sum_{|I|=S, k\in I \atop \delta(\dico_I)> \delta_0} \dico_I \dico_I^\star b_k + 
\sum_{|I|=S, k\in I \atop \delta(\dico_I)\leq \delta_0} \left( \left[ P(\dico_I)- \dico_I\dico_I^\star\right] b_k +\eta_{I,k}\right),
\end{align}
where for the last equality we have used that $\ip{b_k}{\atom_k}=0$. Substituting the last expression into~\eqref{Evsplitsum}
we get,
\begin{align}\label{eq:randsign2}
\E(v)  = \frac{C_r \gamma_{1,S}}{K}\: &\left[ \frac{S-1}{K-1}\dico \dico^\star b_k + {K-1 \choose S-1}^{-1}\sum_{|I|=S, k\in I \atop \delta(\dico_I)\leq \delta_0} \left( \left[ P(\dico_I) - \dico_I\dico_I^\star\right] b_k+\eta_{I,k}\right)\right.\notag\\
&\hspace{4cm}+\left. {K-1 \choose S-1}^{-1} \sum_{|I|=S, k\in I \atop
\delta(\dico_I)> \delta_0}\left(\big[P(\patom_k) -P(\pdico_{I})\big]\atom_k - \dico_I \dico_I^\star b_k \right) \right].
\end{align}
Substituting the bound $\|P(\dico_I)-\dico_I\dico_I^\star\|_{2,2}\leq \delta(\dico_I) \leq \delta_0$ as well as the bound for $\|\eta_{I,k}\|_2$ from Sublemma~\ref{sublemma4} for the well-conditioned subsets and the bound
\begin{align}  \label{eq:randperm}
\left\| \big[P(\patom_k) -P(\pdico_{I})\big]\atom_k\right\|_2 = \| P(\pdico_I) Q(\patom_k) \atom_k \|_2 \leq  \| Q(\patom_k) \atom_k \|_2 = \sqrt{1-|\ip{\patom_k}{\atom_k}|^2}\leq \eps_k
\end{align}
for the ill-conditioned subsets finally leads to
\begin{align}\label{Ev1}
\|\E(v)\|_2 &\leq \frac{C_r \gamma_{1,S}}{K}\: \left[ \frac{S-1}{K-1}B \|b_k\|_2 + \delta_0\| b_k \|_2+\frac{2 \eps \sqrt{S}}{\sqrt{(1-\delta_0)(1-\frac{\eps^2}{2})} - 2\eps \sqrt{S} } \cdot \|b_k\|_2 + \eps  \|b_k\|_2 \right.\notag\\
&\hspace{20mm}\left.\phantom{\frac{2 \eps \sqrt{S}}{\sqrt{(1-\delta_0)(1-\frac{\eps^2}{2})} - 2\eps \sqrt{S} } } + \P(\delta(\dico_I)> \delta_0 : |I|=S, k\in I) \cdot (\eps_k + B \|b_k\|_2) \right],\notag\\
&\leq \frac{C_r \gamma_{1,S}}{K}\: \left[ \frac{SB}{K} + \delta_0+\eps + \frac{2 \eps \sqrt{S}}{\sqrt{(1-\delta_0)(1-\frac{\eps^2}{2})} - 2\eps \sqrt{S} } + (B + 1)\P(\delta(\dico_I)> \delta_0 : |I|=S, k\in I) \right]\|b_k\|_2.
\end{align}
If $\delta_S\leq \frac{1}{2}$, we choose $\delta_0=\delta_S$, which for $S\leq \frac{K}{98B}$ and $\eps \leq \frac{1}{32\sqrt{S}}$ leads to 
\begin{align}
\|\E(v)\|_2 \leq 0.611 \eps \cdot \frac{C_r \gamma_{1,S}}{K}.
\end{align}
In the non-trivial case, where $\dico$ does not have a uniform isometry constant $\delta_S\leq \frac{1}{2}$, we can estimate \eqref{Ev1} using J.~Tropp's results on the conditioning of random subdictionaries. Reformulating Theorem~12 in \cite{tr08} for our purposes we get that 
\begin{align}
\P(\delta(\dico_I)> \delta_0 : |I|=S) \leq e^{-s} \qquad\mbox{for} \qquad s= \frac{\left(e^{-1/4}\delta_0-\frac{2SB}{K}\right)^2}{144\mu^2 S},
\end{align}
whenever $e^{-1/4}\delta_0\geq \frac{2SB}{K}$, $s\geq \log(S/2 + 1)$ and $S\geq 4$. Together with the union bound,
\begin{align}
\P(\delta(\dico_I)> \delta_0 : |I|=S, k\in I)& = {K-1 \choose S-1}^{-1} \sharp\{I : \delta(\dico_I)> \delta_0, |I|=S, k\in I\}\notag\\
&\leq {K-1 \choose S-1}^{-1} \sharp\{I : \delta(\dico_I)> \delta_0, |I|=S\} = \frac{K}{S} \cdot \P(\delta(\dico_I)> \delta_0 : |I|=S),
\end{align}
this leads to 
\begin{align}
\P(\delta(\dico_I)> \delta_0 : |I|=S, k\in I) \leq \max\left\{S,\frac{K}{S}\right\} \exp\left(-\frac{\left(e^{-1/4}\delta_0-\frac{2SB}{K}\right)^2}{144\mu^2 S}\right),
\end{align}
whenever $e^{-1/4}\delta_0\geq \frac{2SB}{K}$ - in case one of the other original conditions is violated the statement is trivially true.  Using the assumption $S\leq \frac{K}{98B}$, which does not represent a hard additional constraint, considering that in order to have $\eps_{\mu,\nsigma}< 1$ we need $S\leq \frac{1}{98\mu^2}$ and that $\mu^2\geq \frac{B-1}{K-1}\approx \frac{B}{K}$, we get for $\delta_0 = \frac{1}{4}$,
\begin{align}\label{epsdelta}
\P\left(\delta(\dico_I)> \frac{1}{4} : |I|=S, k\in I \right) \leq K  \exp\left(-\frac{1}{4741\mu^2 S}\right):=\eps_\delta,
\end{align}
Substituting this bound for the choice $\delta_0 = \frac{1}{4}$ into \eqref{Ev1} and using that $\eps\leq \frac{1}{32\sqrt{S}}$ and $\eps_\delta \leq \frac{1}{48(B+1)}$ we get 
\begin{align}
\|\E(v)\|_2 &\leq 0.381 \eps \cdot \frac{C_r \gamma_{1,S}}{K} .
\end{align}
The second quantity we need to bound is the expected energy of $v= T(I,k) y \cdot \sigma(k) \cdot  \chi(I,k)$,
\begin{align}
\E(\|v\|_2^2) &=\E_{c,p,\sigma, r}\left(\frac{\chi(I,k)}{1+\|\noise\|_2^2}\cdot \Big\|  T(I,k) \Big(\sum_j \atom_j c\big(p(j)\big) \sigma(j) + r \Big)\Big\|_2^2\right)\notag \\
&=\E_{c,p,r}\left(\frac{\chi(I,k)}{1+\|\noise\|_2^2}\left(\sum_j c\big(p(j)\big)^2 \|  T(I,k) \atom_j \|^2_2 + \|T(I,k) r\|^2_2 \right) \right)\notag \\
&=\E_{p,r}\left(\frac{\chi(I,k)}{1+\|\noise\|_2^2}\left(\frac{\gamma_{2,S}}{S} \sum_{j\in I}  \|  T(I,k) \atom_j \|^2_2  +  \frac{1-\gamma_{2,S}}{K-S}\sum_{j\notin I} \|  T(I,k) \atom_j \|^2_2+ \|T(I,k) r\|^2_2 \right) \right),\notag\\
&\leq \E_{p}\left(\chi(I,k)\left(\frac{\gamma_{2,S}}{S} \sum_{j\in I}  \|  T(I,k) \atom_j \|^2_2  +  \frac{1-\gamma_{2,S}}{K-S}\sum_{j\notin I} \|  T(I,k) \atom_j \|^2_2+ \E_r\left( \|T(I,k) r\|_2^2\right) \right) \right)\label{lemma4Ev2} .
\end{align}
We first estimate the two sums above given that $k\in I$. Note that we always have $\|P(\atom_k) - P(\patom_k)\|_{2,2} \leq \eps_k$ and $\|P(\atom_k) - P(\patom_k)\|_{F} \leq \sqrt{2} \eps_k$. Thus we get for the sum over $I$,
\begin{align}
\sum_{j\in I}  \|  T(I,k) \atom_j \|^2_2& \leq  \sum_{j\in I}\left(\|[P(\dico_{I})-P(\pdico_{I})]\atom_j\|_2 + \|[P(\atom_k) - P(\patom_k)]\atom_j\|_2 \right)^2\notag\\
&= \sum_{j\in I}\left(\|Q(\pdico_{I})]\atom_j\|_2 + \|[P(\atom_k) - P(\patom_k)]\atom_j\|_2 \right)^2\notag\\
& \leq \sum_{j\in I}\left(\|Q(\patom_{j})]\atom_j\|_2 + \|P(\atom_k) - P(\patom_k)\|_{2,2} \right)^2 \leq \sum_{j\in I}\left(\eps_j + \eps_k \right)^2 \leq 4S\eps^2,
\end{align}
and for the sum over the complement $I^c$, 
\begin{align}
\sum_{j\notin I}  \|  T(I,k) \atom_j \|^2_2& = \|T(I,k)\dico_{I^c}\|^2_F 
\leq \|T(I,k)\|_F^2  \|\dico_{I^c}\|^2_{2,2} \leq B \|T(I,k)\|_F^2.
\end{align}
To estimate the noise term in \eqref{lemma4Ev2} we use the singular value decomposition of $T(I,k) = U D V^\star $,
\begin{align}
\E\left( \|T(I,k) r\|_2^2\right) = \E\left( \|DV^\star r\|_2^2\right) = \E\left( \sum_i d^2_i|\ip{v_i}{r}|^2\right)\leq \sum_i d_i^2 \nsigma^2 = \nsigma^2 \|T(I,k)\|_F^2,
\end{align}
where for the inequality we have used that for a subgaussian vector $r$ with parameter $\nsigma$, the marginal $\ip{v_i}{r}$ is subgaussian with parameter $\nsigma$.
Substituting these estimates together with the bound $\|T(I,k)\|_F\leq  \|P(\dico_{I})-P(\pdico_{I})\|_F + \sqrt{2}\eps_k$ into \eqref{lemma4Ev2} we get,
\begin{align}
\E(\|v\|_2^2) 
&\leq\E_{p}\left(\chi(I,k)\left[4 \gamma_{2,S}\eps^2 + \left( \frac{B(1-\gamma_{2,S})}{K-S}+\nsigma^2\right) \left( \|P(\dico_{I})-P(\pdico_{I})\|_F + \sqrt{2}\eps_k\right)^2\right]\right).
\end{align}
As for the estimation of $\E(v)$ we now split the expectation over $p$ into the well and the ill-conditioned subsets $I=p^{-1}(\Sset)$. By Lemma A.2 in \cite{sc14}, whenever $\delta(\dico_I) \leq \delta_0$, we have 
\begin{align}
\|P(\dico_I )-P(\pdico_I) \|^2_F \leq \frac{2\| Q(\dico_I)B_I \|^2_F}{\sqrt{1-\delta_0} \left(\sqrt{1-\delta_0} - 2\|B_I \|_{F}\right) }
\end{align}
which for $\eps \leq\frac{1}{32 \sqrt{S}}$ and $\delta_0= 1/4$ (resp. $\delta_S \leq 1/2$) simplifies to 
$\|P(\dico_I )-P(\pdico_I) \|^2_F\leq  5S\eps^2 $. Together with the general estimate $\|P(\dico_I )-P(\pdico_I) \|_F \leq \sqrt{2S}$, this leads to
\begin{align}
\E(\|v\|_2^2) &\leq \frac{S}{K}\left[4 \gamma_{2,S}\eps^2 + \left( \frac{B(1-\gamma_{2,S})}{K-S}+\nsigma^2\right) \left(\sqrt{5S}\eps + \sqrt{2}\eps_k\right)^2 \right. \notag\\
&\qquad \qquad+\left. \P\left(\delta(\dico_I)> \frac{1}{4}: |I|=S, k\in I\right)  \left( \frac{B(1-\gamma_{2,S})}{K-S}+\nsigma^2\right) \left(2S+ 2\eps_k\sqrt{S} \right) \right]\notag\\
&\leq \frac{S}{K}\left[4 \gamma_{2,S}\eps^2 + 15 \eps^2 \left( \frac{SB}{K-S}(1-\gamma_{2,S})+S\nsigma^2\right) \right. \notag\\
&\qquad \qquad+\left. \P\left(\delta(\dico_I)> \frac{1}{4}: |I|=S, k\in I\right)  \left(1-\gamma_{2,S}+ d \nsigma^2\right) \frac{2B(S+1)}{K-S} \right]\notag.
\end{align}
Substituting the probability bound from \eqref{epsdelta} and assuming again that $S\leq \frac{K}{98B}$ as well as that $S \leq \frac{1}{98\nsigma^2}$ leads to the final estimate
\begin{align}
\E(\|v\|_2^2) & \leq \frac{S}{K}\left[5\eps^2 +  \frac{\eps_\delta}{32} \left(1-\gamma_{2,S}+ d \nsigma^2\right)\right].
\end{align}
Last we bound the norm of $v$ in general as
\begin{align}
\|v\|_2&= \|[P(\dico_{I})-P(\pdico_{I})- P(\atom_k) + P(\patom_k) ] y\|_2\leq 2\|y\|_2\leq 2\sqrt{B+1}.
\end{align}
In case $\gamma_{2,S}=1, \nsigma=0$ and therefore $y=\dico_I x_I$ this reduces to 
\begin{align}
\|v\|_2
&\leq\|[\dico_{I}-P(\pdico_{I})\dico_I\|_F\|x_I\|_2 +\| P(\atom_k) - P(\patom_k) \|_{2,2} \|\dico_I x_I \|_2\notag \\
&\leq \left(\sum_{i \in I} \|\atom_i - P(\pdico_{I})\atom_i\|_2^2\right)^{\frac{1}{2}} +\eps \sqrt{B} \leq \eps \left(\sqrt{S} + \sqrt{B}\right),
\end{align} 
and in case of uniform isometry constant $\delta_S(\dico) \leq 1/4$ and $\eps\leq \frac{1}{32\sqrt{S}}$ to 
\begin{align}
\|v\|_2
& \leq\|[P(\dico_{I})-P(\pdico_{I})\|_F \|y\|_2 + \|P(\atom_k) - P(\patom_k) \|_{2,2}\|y\|_2\leq \eps\sqrt{B+1}\left(\sqrt{3S} + 1\right).
\end{align}
Putting all the pieces together we get that under the assumptions in (a),
\begin{align}
&\P\left(\frac{1}{N} \left\| \sum_n \left[R^o(\pdico, y_n, k)-R^o(\dico, y_n, k) \right]\right\|_2 \geq \frac{ C_r \gamma_{1,S}}{K}(0.381\eps + t)\right)\notag\\
&\hspace{4cm}\leq \exp\left(- \frac{ t C_r \gamma_{1,S} N}{8K} \min\left\{\frac{t C_r \gamma_{1,S}}{S\left[5\eps^2 +  \eps_\delta \left(1-\gamma_{2,S}+ d \nsigma^2\right)/32\right]}, \frac{1}{3\sqrt{B+1}}\right\} +\frac{1}{4}\right)\notag\\
&\hspace{4cm}\leq \exp\left(- \frac{ t C^2_r \gamma^2_{1,S} N}{40K\max\{S,B+1\}} \min\left\{\frac{t}{\eps^2 +  \eps_\delta \left(1-\gamma_{2,S}+ d \nsigma^2\right)/160},\frac{3}{5}\right\} +\frac{1}{4}\right),\notag
\end{align} 
under the assumptions in (b), 
\begin{align}
&\P\left(\frac{1}{N} \left\| \sum_n \left[R^o(\pdico, y_n, k)-R^o(\dico, y_n, k) \right]\right\|_2 \geq \frac{ C_r \gamma_{1,S}}{K}(0.381 \eps + t)\right)\notag\\
&\hspace{4cm}\leq \exp\left(- \frac{ t C_r \gamma_{1,S} N}{8K} \min\left\{\frac{t C_r \gamma_{1,S}}{4\eps^2 S}, \frac{1}{3\eps \sqrt{S(B+1)}}\right\} +\frac{1}{4}\right)\notag\\
&\hspace{4cm}\leq \exp\left(- \frac{ t C^2_r \gamma^2_{1,S} N}{32\eps K\max\{S,B+1\}} \min\left\{\frac{t}{\eps },1\right\} +\frac{1}{4}\right),\notag
\end{align} 
and under the assumptions in (c),
\begin{align}
&\P\left(\frac{1}{N} \left\| \sum_n \left[R^o(\pdico, y_n, k)-R^o(\dico, y_n, k) \right]\right\|_2 \geq \frac{\gamma_{1,S}}{K}(0.611\eps + t)\right)\notag\\
&\hspace{4cm} \leq \exp\left(- \frac{ t \gamma^2_{1,S} N}{40\eps K\max\{S,B+1\}} \min\left\{\frac{t}{\eps},1\right\} +\frac{1}{4}\right).\notag
\end{align} 
\end{proof}

\begin{remark}
For the lemma we have used both the sign and the permutation invariance, the sign invariance in \eqref{eq:randsign2} and the permutation invariance in \eqref{eq:randperm}. As for Lemma~\eqref{lemma2} but with a lot more effort, we can use the permutation invariance instead of using the sign invariance in \eqref{eq:randsign2}. We will not go into details but via expanding the sum $T(I,k) \sum_{j \in I, j\neq k }  x(j)\atom_j$, approximating $P(\pdico_I) \approx \pdico_I\pdico_I^\star$ and keeping track of how often an atom $\atom_j$ is in the support $I$ one can show that as long as $S^2 \lesssim K$ we still have $\| E(v)\|_2 < \eps \cdot C_r \gamma_{1,S}/K$ which is the necessary ingredient for the convergence proof. An alternative criterion, that trades off permutation invariance for sign invariance, is again the one discussed in Remark~\ref{rem:lemma2}. However it is not enough to preserve Eq.~\eqref{eq:randperm}, where we need that $\| \E_{I:k\in I}\dico^I \dico_I^\star b_k \|_2 \leq \eps$. For this inequality we do not only need to avoid that two atoms $\atom_j$ and $\atom_k$ are always used in the same ratio, but that they are always used together no matter the ratio, because any two atoms $\tilde \atom_j$ and $\tilde\atom_k$ which span the same subspace have the same approximation properties. Indeed if $x(j)$ and $x(k)$ are both randomly $\pm 1/\sqrt{S}$ then $\tilde \atom_j = \atom_j + \atom_k$ and $\tilde \atom_k = \atom_j - \atom_k$ actually provide sparser approximations.
\end{remark}

\begin{sublemma}\label{sublemma4}
Let $\dico_I$ be a subdictionary of $\dico$ with $\delta(\dico_I)\leq \deltaz$ and $\pdico_I$ the corresponding subdictionary of an $\eps$-perturbation of $\pdico$, that is $d(\dico,\pdico) = \eps$. If $k\in I$ then
\begin{align}
\big[P(\patom_k) -P(\pdico_{I})\big]\atom_k = P(\dico_{I})b_k + \eta_{I,k}\quad \mbox{ with }\quad\| \eta_{I,k}\|_2 \leq \left(\frac{2 \eps \sqrt{S}}{\sqrt{(1-\delta_0)(1-\frac{\eps^2}{2})}- 2\eps \sqrt{S} }  + \eps\right) \cdot \|b_k\|_2.
\end{align}
\end{sublemma}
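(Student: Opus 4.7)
My plan is to first derive an exact algebraic identity for $[P(\patom_k)-P(\pdico_I)]\atom_k$ that displays $P(\dico_I) b_k$ as the leading term, and then to bound each of the two remainder pieces separately. Using $\|\patom_k\|_2=1$ and $\ip{\atom_k}{\patom_k}=\alpha_k$, one has $P(\patom_k)\atom_k=\alpha_k\patom_k$. Since $k\in I$, the atom $\patom_k$ lies in the range of $\pdico_I$, so $P(\pdico_I)\patom_k=\patom_k$; substituting $\patom_k=\alpha_k(\atom_k+b_k)$ and dividing by $\alpha_k$ yields $P(\pdico_I)\atom_k=\atom_k+b_k-P(\pdico_I)b_k$. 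Subtracting the two identities and using $1-\alpha_k^2=\omega_k^2$ together with $\patom_k=\alpha_k(\atom_k+b_k)$ gives the clean identity
\begin{align*}
[P(\patom_k)-P(\pdico_I)]\atom_k = P(\pdico_I)b_k - \frac{\omega_k^2}{\alpha_k}\patom_k,
\end{align*}
so splitting $P(\pdico_I)=P(\dico_I)+[P(\pdico_I)-P(\dico_I)]$ isolates the leading term and leaves
\begin{align*}
\eta_{I,k}=[P(\pdico_I)-P(\dico_I)]b_k-\frac{\omega_k^2}{\alpha_k}\patom_k.
\end{align*}

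For the scalar-times-unit-vector piece, since $\|\patom_k\|_2=1$ and $\omega_k^2/\alpha_k=\omega_k\cdot(\omega_k/\alpha_k)=\omega_k\|b_k\|_2\le\eps_k\|b_k\|_2\le\eps\|b_k\|_2$ (using $\omega_k=\eps_k\sqrt{1-\eps_k^2/4}\le\eps_k$), this contribution already supplies the additive ``$+\eps$'' inside the parenthesis of the target bound.

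The main step is then to bound $\|[P(\pdico_I)-P(\dico_I)]b_k\|_2$ by $\tfrac{2\eps\sqrt{S}}{\sqrt{(1-\delta_0)(1-\eps^2/2)}-2\eps\sqrt{S}}\|b_k\|_2$. Here I would exploit $\pdico_I=(\dico_I+B_I)A_I$, so that $\mathrm{range}(\pdico_I)=\mathrm{range}(\dico_I+B_I)$ (because $A_I$ is invertible), and use the projection-difference identity $P(\pdico_I)-P(\dico_I)=P(\pdico_I)Q(\dico_I)-Q(\pdico_I)P(\dico_I)$. Writing $P(\dico_I)b_k=\dico_I\mu$ with $\|\mu\|_2\le\|b_k\|_2/\sqrt{1-\delta_0}$ (from $\delta_I(\dico)\le\delta_0$), and noting that $\pdico_I A_I^{-1}\mu=\dico_I\mu+B_I\mu\in\mathrm{range}(\pdico_I)$, one gets $Q(\pdico_I)P(\dico_I)b_k=-Q(\pdico_I)B_I\mu$ with norm at most $\|B_I\|_{2,2}\|b_k\|_2/\sqrt{1-\delta_0}\le\eps\sqrt{S}\|b_k\|_2/\sqrt{1-\delta_0}$. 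The transposed piece $P(\pdico_I)Q(\dico_I)b_k$ is handled symmetrically, this time requiring a lower bound on the smallest singular value of $\pdico_I$ of the form $\sqrt{(1-\delta_0)(1-\eps^2/2)}-\sqrt{S}\,\eps$, obtained by splitting $\pdico_I v=\dico_I A_I v+Z_I W_I v$, using the Gram-matrix bounds on $\dico_I A_I$, and bounding $\|Z_I W_I\|_{2,2}\le\sqrt{S}\,\eps$. A cleaner alternative is to directly invoke Lemma~A.2 of \cite{sc14}, which packages exactly this perturbation calculation and produces the stated denominator.

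The only delicate step is this last operator-norm perturbation estimate: obtaining the precise denominator $\sqrt{(1-\delta_0)(1-\eps^2/2)}-2\eps\sqrt{S}$ requires carefully tracking how the diagonal scaling $A_I$ interacts with the Gram matrix $\dico_I^\star\dico_I$ when one lower-bounds $\lambda_{\min}(\pdico_I^\star\pdico_I)$, and is what both guarantees that $\pdico_I$ has full column rank and fixes the exponent splitting between $(1-\delta_0)$ and $(1-\eps^2/2)$. Once this perturbation bound is in hand, the remainder of the argument is a routine rearrangement of the algebraic identity derived in the first paragraph.
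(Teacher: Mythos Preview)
Your proposal is correct and actually yields the \emph{same} remainder term as the paper, but via a cleaner route. The paper expands $P(\pdico_I)\atom_k$ from the outset using the Neumann-series representation of $P(\pdico_I)$ from Lemma~A.2 of \cite{sc14}, tracks all the resulting series, and only afterwards identifies the leading term $P(\dico_I)b_k$. You instead exploit the trivial fact $P(\pdico_I)\patom_k=\patom_k$ (since $\patom_k$ is a column of $\pdico_I$) to obtain the closed-form identity $[P(\patom_k)-P(\pdico_I)]\atom_k = P(\pdico_I)b_k - \tfrac{\omega_k^2}{\alpha_k}\patom_k$ in two lines, and only then invoke the perturbation machinery to bound $\|[P(\pdico_I)-P(\dico_I)]b_k\|_2$. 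One can check that your $\eta_{I,k}=[P(\pdico_I)-P(\dico_I)]b_k-\tfrac{\omega_k^2}{\alpha_k}\patom_k$ coincides exactly with the paper's series expression for $\eta_{I,k}$, so the two arguments are equivalent; yours is just more transparent about what the remainder \emph{is}.

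One caveat on your direct two-piece bound via $P(\pdico_I)-P(\dico_I)=P(\pdico_I)Q(\dico_I)-Q(\pdico_I)P(\dico_I)$: the first piece indeed gives $\|B_I\|_{2,2}\|b_k\|_2/\sqrt{1-\delta_0}$, but treating the second piece ``symmetrically'' via $\sigma_{\min}(\pdico_I)$ produces a denominator of the form $(\sqrt{1-\delta_0}-\|B_I\|_{2,2})\sqrt{1-\eps^2/2}$ rather than the stated $\sqrt{(1-\delta_0)(1-\eps^2/2)}-2\|B_I\|_{2,2}$, and summing the two pieces does not collapse to the paper's single fraction with the factor $2$ in the numerator. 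So you are right to flag this as the delicate step and to fall back on Lemma~A.2: that lemma supplies precisely the Neumann expansion the paper uses, and applying it to the vector $b_k$ gives the bound $\tfrac{2\|B_I\|_{2,2}}{\|\dico_I^\dagger\|_{2,2}^{-1}-2\|B_I\|_{2,2}}\|b_k\|_2$ directly, matching the statement after substituting $\|B_I\|_{2,2}\le\eps\sqrt{S}/\sqrt{1-\eps^2/2}$ and $\|\dico_I^\dagger\|_{2,2}^{-1}\ge\sqrt{1-\delta_0}$.
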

\begin{proof}
If $\delta(\dico_I)\leq \deltaz$ we can use the expression for $P(\pdico_I)$ developed in Lemma A.2 of \cite{sc14}, 
\begin{align}\label{pertproj}
P(\pdico_I)&=\big( \dico_I+ Q(\dico_I)B_I M_I\big)(\dico_I^\star \dico_I)^{-1}\left(\I_S +\sum_{i=1}^\infty (-R_I)^i\right)\big( \dico_I+ Q(\dico_I) B_I M_I \big)^\star,\notag \\
\mbox{with}&  \qquad M_I= \I_S + \sum_{i=1}^\infty (- \dico_I^\dagger B_I)^i \qquad \mbox{and}  \qquad R_I=M_I^{\star} B_I^\star Q(\dico_I)B_I M_I(\dico_I^\star \dico_I)^{-1}
\end{align}
to get $P(\patom_k)\atom_k = \alpha_k^2 (\atom_k + b_k)$ and
\begin{align}
P(\pdico_{I})\atom_k &= \big( \dico_I+ Q(\dico_I)B_I M_I\big) (\dico_I^\star \dico_I)^{-1} \left(\I_S +\sum_{i=1}^\infty (-R_I)^i\right)\dico_I^\star \atom_k\notag \\
&= \atom_k +  Q(\dico_I)B_I M_I (\dico_I^\star \dico_I)^{-1}\dico_I^\star \atom_k +  \big( \dico_I+ Q(\dico_I)B_I M_I\big) (\dico_I^\star \dico_I)^{-1} \sum_{i=1}^\infty (-R_I)^i \dico_I^\star \atom_k \notag\\ 
&= \atom_k +  Q(\dico_I)B_I \left(\I_S + \sum_{i=1}^\infty (- \dico_I^\dagger B_I)^i \right){e_k}_{| I} +  \big( \dico_I+ Q(\dico_I)B_I M_I\big) (\dico_I^\star \dico_I)^{-1} \sum_{i=1}^\infty (-R_I)^i \dico_I^\star \atom_k \notag\\ 
&=\atom_k + b_k - P(\dico_I)b_k + Q(\dico_I)B_I\sum_{i=1}^\infty (- \dico_I^\dagger B_I)^i {e_k}_{| I} + \left( \dico_I+ Q(\dico_I)B_I M_I\right)  (\dico_I^\star \dico_I)^{-1}\sum_{i=1}^\infty (-R_I)^i  \dico_I^\star \atom_k.\notag
\end{align}
Subtracting the projections we see that all that remains to do is to estimate the size of
\begin{align}
\eta_{I,k}:=Q(\dico_I)B_I M_I (\dico_I^\dagger B_I) {e_k}_{| I} - \big( (\dico^\dagger_I)^\star+ Q(\dico_I)B_I M_I (\dico_I^\star \dico_I)^{-1}\big) \sum_{i=1}^\infty (-R_I)^i  \dico_I^\star \atom_k - \frac{\omega_k^2}{\alpha_k} \patom_k.
\end{align}
Using standard bounds for matrix vector products and the identity $\|(\dico_I^\star \dico_I)^{-1}\|_{2,2}= \|\dico_I^\dagger\|_{2,2}^2$ we get
\begin{align*}
\|\eta_{I,k}\|_2&\leq \| B_I M_I\|_{2,2} \|\dico_I^\dagger b_k\|_2 + \left(\|  \dico^\dagger_I\|_{2,2}+\| B_I M_I\|_{2,2} \|\dico_I^\dagger\|_{2,2}^2\right) \sum_{i=0}^\infty \| R_I\|_{2,2}^i \|R_I \dico_I^\star \atom_k\|_2 + \frac{\omega_k^2}{\alpha_k} \\
&\leq \| B_I M_I\|_{2,2} \|\dico_I^\dagger\|_{2,2}\| b_k\|_2 +
 \left(\|  \dico^\dagger_I\|_{2,2}+\| B_I M_I\|_{2,2} \|\dico_I^\dagger\|_{2,2}^2\right) \sum_{i=0}^\infty \left(\|\dico_I^\dagger\|^2_{2,2}\| B_I M_I\|^2_{2,2}\right)^{i} \|R_I \dico_I^\star \atom_k\|_2 + \frac{\omega_k^2}{\alpha_k}.
\end{align*}
We next expand $R_I \dico_I^\star \atom_k$ remembering the definition of $R_I$ and $M_I$ as
\begin{align*}
R_I \dico_I^\star \atom_k&= M_I^{\star} B_I^\star Q(\dico_I)B_I \left( \I_S + \sum_{i=1}^\infty (- \dico_I^\dagger B_I)^i \right)(\dico_I^\star \dico_I)^{-1} \dico_I^\star \atom_k\\
&=M_I^{\star} B_I^\star Q(\dico_I) \left( \I_d + \sum_{i=1}^\infty (-B_I \dico_I^\dagger )^i \right)B_I{e_k}_{| I} =M_I^{\star} B_I^\star Q(\dico_I) \left( \I_d + \sum_{i=1}^\infty (-B_I \dico_I^\dagger )^i \right)b_k\end{align*}
to get $$\|R_I \dico_I^\star \atom_k\|_2\leq \|B_I M_I\|_{2,2}\left( 1-\|B_I\|_{2,2}\|\dico_I^\dagger\|_{2,2}\right)^{-1}\|b_k\|_2.$$
Substituting this estimate together with the bound $\| M_I\|_{2,2} \leq \left( 1-\|B_I\|_{2,2}\|\dico_I^\dagger\|_{2,2}\right)^{-1}$ into the above bound for $\|\eta_{I,k}\|_2$, resolving the sums and fractions and noting that $\|b_k\|_2=\frac{\omega_k}{\alpha_k}$ leads to,
\begin{align*}
\|\eta_{I,k}\|_2&\leq  \left(\frac{2 \|B_I\|_{2,2}}{\|\dico_I^\dagger\|_{2,2}^{-1}-2\|B_I\|_{2,2}} + \omega_k \right) \cdot \|b_k\|_2 .
\end{align*}
To get to the final statement we use the bounds $ \|B_I\|^2_{2,2}\leq\|B_I\|^2_F\leq S \eps^2/(1-\eps^2/2)$ and $\|\dico_I^\dagger\|_{2,2}^{-1}\geq\sqrt{1-\delta(\dico_I)}\geq \sqrt{1-\delta_0}$.
\end{proof}

\begin{lemma}\label{lemma_rescale}
If for two vectors $\patom, \atom$, where $\|\atom\|_2=1$, and two scalars $0<t<s$ we have,
$\|\patom - s \atom\|^2_2\leq t^2$ then 
\begin{align}
\left\| \frac{\patom}{\|\patom\|_2} - \atom \right\|^2_2 \leq 2-2\sqrt{1-\frac{t^2}{s^2}}.
\end{align}
\end{lemma}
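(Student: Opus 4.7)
The plan is to convert the hypothesis into a lower bound on the inner product $\ip{\patom}{\atom}$ and then compare that to $\|\patom\|_2$, since the target quantity expands as
\begin{align}
\left\|\frac{\patom}{\|\patom\|_2}-\atom\right\|_2^2 = 2 - 2\,\frac{\ip{\patom}{\atom}}{\|\patom\|_2}.
\end{align}
Expanding the hypothesis and using $\|\atom\|_2=1$ gives $\|\patom\|_2^2 - 2s\,\ip{\patom}{\atom}+s^2 \leq t^2$, which rearranges to
\begin{align}
\frac{\ip{\patom}{\atom}}{\|\patom\|_2} \geq \frac{\|\patom\|_2}{2s} + \frac{s^2-t^2}{2s\|\patom\|_2}.
\end{align}
Since $t<s$ both summands on the right are positive, so the AM--GM inequality applied with $a=\|\patom\|_2/(2s)$ and $b=(s^2-t^2)/(2s\|\patom\|_2)$ yields $a+b\geq 2\sqrt{ab}=\sqrt{1-t^2/s^2}$. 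Substituting back into the expansion of $\|\patom/\|\patom\|_2 -\atom\|_2^2$ gives the claimed bound.

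The only point that needs a small sanity check is that the AM--GM step is tight enough: the bound is achieved precisely when $\|\patom\|_2^2 = s^2-t^2$, which corresponds to the worst case where $\patom - s\atom$ is orthogonal to $\patom$, matching the geometric intuition that a perturbation perpendicular to the unnormalised vector leads to the largest angle after normalisation. No other obstacle is expected; this is a short two-line computation once the expansion of the squared distance and the AM--GM trick are in place.
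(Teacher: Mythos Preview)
Your proof is correct. The paper takes a slightly different route: it writes $\patom=\alpha\atom+\omega z$ with $\ip{z}{\atom}=0$, so the constraint becomes $(\alpha-s)^2+\omega^2\leq t^2$ and the target becomes $2-2\alpha/\sqrt{\alpha^2+\omega^2}$; it then solves the resulting constrained maximisation problem and reports the maximiser $\alpha=(s^2-t^2)/s$, $\omega=(t/s)\sqrt{s^2-t^2}$. Your argument bypasses the optimisation entirely by rearranging the hypothesis into a lower bound on $\ip{\patom}{\atom}/\|\patom\|_2$ and applying AM--GM, which is more elementary and also makes the tightness transparent: equality in AM--GM occurs exactly at $\|\patom\|_2^2=s^2-t^2$, which coincides with the paper's extremiser. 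The only implicit step you rely on is $\|\patom\|_2>0$, but that follows immediately from $\|\patom-s\atom\|_2\leq t<s=\|s\atom\|_2$ via the reverse triangle inequality, so there is no gap.
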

\begin{proof}
Writing $\patom = \alpha \atom + \omega z$ for some unit norm vector $z$ with $\ip{z}{\atom}=0$ we can reformulate
the initial constraint $\|\patom - s \atom\|^2_2\leq t^2$ to $(\alpha - s)^2 + \omega^2\leq t^2$, while the quantity whose maximal size we have to estimate becomes
\begin{align}
\left\| \frac{\patom}{\|\patom\|_2} - \atom \right\|_2^2=2-2\frac{\alpha}{\sqrt{\alpha^2 + \omega^2}}.
\end{align}
Solving the resulting maximisation problem we get that the maximum is attained at $\alpha= \frac{s^2-t^2}{s}$ and $\omega=\frac{t}{s}\sqrt{s^2-t^2}$ and that therefore
\begin{align}
\left\| \frac{\patom}{\|\patom\|_2} - \atom \right\|_2^2\leq 2-2\sqrt{1-\frac{t^2}{s^2}}.
\end{align}
\end{proof}

\bibliography{/Users/karin/Desktop/latexnotes/karinbibtex}

\begin{thebibliography}{10}

\bibitem{aganjaneta13}
A.~Agarwal, A.~Anandkumar, P.~Jain, P.~Netrapalli, and R.~Tandon.
\newblock Learning sparsely used overcomplete dictionaries via alternating
  minimization.
\newblock In {\em COLT 2014 (arXiv:1310.7991)}, 2014.

\bibitem{aganne13}
A.~Agarwal, A.~Anandkumar, and P.~Netrapalli.
\newblock Exact recovery of sparsely used overcomplete dictionaries.
\newblock In {\em COLT 2014 (arXiv:1309.1952)}, 2014.

\bibitem{ahelbr06}
M.~Aharon, M.~Elad, and A.M. Bruckstein.
\newblock {K}-{S}{V}{D}: An algorithm for designing overcomplete dictionaries
  for sparse representation.
\newblock {\em IEEE Transactions on Signal Processing.}, 54(11):4311--4322,
  November 2006.

\bibitem{arbhgema14}
S.~Arora, A.~Bhaskara, R.~Ge, and T.~Ma.
\newblock More algorithms for provable dictionary learning.
\newblock {\em arXiv:1401.0579}, 2014.

\bibitem{argemamo15}
S.~Arora, R.~Ge, T.~Ma, and A.~Moitra.
\newblock Simple, efficient, and neural algorithms for sparse coding.
\newblock In {\em COLT 2015 (arXiv:1503.00778)}, 2015.

\bibitem{argemo13}
S.~Arora, R.~Ge, and A.~Moitra.
\newblock New algorithms for learning incoherent and overcomplete dictionaries.
\newblock In {\em COLT 2014 (arXiv:1308.6273)}, 2014.

\bibitem{bakest14}
B.~Barak, J.A. Kelner, and D.~Steurer.
\newblock Dictionary learning and tensor decomposition via the sum-of-squares
  method.
\newblock In {\em STOC 2015 (arXiv:1407.1543)}, 2015.

\bibitem{bennett62}
G.~Bennett.
\newblock Probability inequalities for the sum of independent random variables.
\newblock {\em Journal of the American Statistical Association},
  57(297):33--45, March 1962.

\bibitem{blda08}
T.~Blumensath and M.E. Davies.
\newblock Iterative thresholding for sparse approximations.
\newblock {\em Journal of Fourier Analysis and Applications}, 14(5-6):629--654,
  2008.

\bibitem{blda09}
T.~Blumensath and M.E. Davies.
\newblock {I}terative {H}ard {T}hresholding for compressed sensing.
\newblock {\em Applied Computational Harmonic Analysis}, 27(3):265--274, 2009.

\bibitem{carota06}
{E}. {C}and{\`e}s, {J}. {R}omberg, and {T}. {T}ao.
\newblock {R}obust uncertainty principles: exact signal reconstruction from
  highly incomplete frequency information.
\newblock {\em {IEEE} {T}ransactions on {I}nformation {T}heory},
  52(2):489--509, 2006.

\bibitem{ch03}
O.~Christensen.
\newblock {\em An Introduction to Frames and Riesz Bases}.
\newblock {B}irkh{\"a}user, 2003.

\bibitem{doelte06}
D.L. Donoho, M.~Elad, and V.N. Temlyakov.
\newblock Stable recovery of sparse overcomplete representations in the
  presence of noise.
\newblock {\em {IEEE} {T}ransactions on {I}nformation {T}heory}, 52(1):6--18,
  January 2006.

\bibitem{enaahu99}
K.~Engan, S.O. Aase, and J.H. Husoy.
\newblock Method of optimal directions for frame design.
\newblock In {\em {ICASSP}99}, volume~5, pages 2443--2446, 1999.

\bibitem{olsfield96}
D.J. Field and B.A. Olshausen.
\newblock Emergence of simple-cell receptive field properties by learning a
  sparse code for natural images.
\newblock {\em Nature}, 381:607--609, 1996.

\bibitem{fo11}
{S.} {F}oucart.
\newblock Hard thresholding pursuit: An algorithm for compressive sensing.
\newblock {\em SIAM Journal on Numerical Analysis}, 49(6):2543--2563, 2011.

\bibitem{gewawrXX}
Q.~Geng, H.~Wang, and J.~Wright.
\newblock On the local correctness of $\ell^1$-minimization for dictionary
  learning.
\newblock {\em arXiv:1101.5672}, 2011.

\bibitem{gethci05}
P.~Georgiev, F.J. Theis, and A.~Cichocki.
\newblock Sparse component analysis and blind source separation of
  underdetermined mixtures.
\newblock {\em {IEEE} {T}ransactions on Neural Networks}, 16(4):992--996, 2005.

\bibitem{bagrje14}
R.~Gribonval, R.~Jenatton, and F.~Bach.
\newblock Sparse and spurious: dictionary learning with noise and outliers.
\newblock {\em {I}{E}{E}{E} {T}ransactions on {I}nformation {T}heory},
  61(11):6298--6319, 2015.

\bibitem{grjebaklse13}
R.~Gribonval, R.~Jenatton, F.~Bach, M.~Kleinsteuber, and M.~Seibert.
\newblock Sample complexity of dictionary learning and other matrix
  factorizations.
\newblock {\em {I}{E}{E}{E} {T}ransactions on {I}nformation {T}heory},
  61(6):3469--3486, 2015.

\bibitem{grsc10}
R.~Gribonval and K.~Schnass.
\newblock Dictionary identifiability - sparse matrix-factorisation via
  $l_1$-minimisation.
\newblock {\em {IEEE} {T}ransactions on {I}nformation {T}heory},
  56(7):3523--3539, July 2010.

\bibitem{gr11}
D.~Gross.
\newblock Recovering low-rank matrices from few coefficients in any basis
  recovering low-rank matrices from few coefficients in any basis recovering
  low-rank matrices from few coefficients in any basis.
\newblock {\em {IEEE} {T}ransactions on {I}nformation {T}heory},
  57(3):1548--1566, 2011.

\bibitem{ho16}
E.~H\"ock.
\newblock Hard thresholding pursuit for sparse approximation.
\newblock {BSc} thesis, University of Innsbruck, 2016.

\bibitem{jateka14}
P.~Jain, A.~Tewari, and P.~Kar.
\newblock On iterative hard thresholding methods for high-dimensional
  m-estimation.
\newblock In {\em NIPS14 (arXiv:14105137)}, 2014.

\bibitem{juelgo14}
A.~Jung, Y.~Eldar, and N.~G\"{o}rtz.
\newblock Performance limits of dictionary learning for sparse coding.
\newblock In {\em EUSIPCO14 (arXiv:1402.4078)}, pages 765 -- 769, 2014.

\bibitem{kreutz03}
K.~Kreutz-Delgado, J.F. Murray, B.D. Rao, K.~Engan, T.~Lee, and T.J. Sejnowski.
\newblock Dictionary learning algorithms for sparse representation.
\newblock {\em Neural Computations}, 15(2):349--396, 2003.

\bibitem{krra00}
K.~Kreutz-Delgado and B.D. Rao.
\newblock {FOCUSS}-based dictionary learning algorithms.
\newblock In {\em SPIE 4119}, 2000.

\bibitem{kugr14}
R.~Kueng and D.~Gross.
\newblock Ripless compressed sensing from anisotropic measurements.
\newblock {\em Linear Algebra and its Applications}, 441:110--123, 2014.

\bibitem{leta91}
M.~Ledoux and M.~Talagrand.
\newblock {\em {P}robability in {B}anach spaces. {I}soperimetry and processes.}
\newblock {S}pringer-{V}erlag, {B}erlin, {H}eidelberg, {N}ew{Y}ork, 1991.

\bibitem{lese00}
M.~S. Lewicki and T.~J. Sejnowski.
\newblock Learning overcomplete representations.
\newblock {\em Neural Computations}, 12(2):337--365, 2000.

\bibitem{mabapo12}
J.~Mairal, F.~Bach, and J.~Ponce.
\newblock Task-driven dictionary learning.
\newblock {\em IEEE Transactions on Pattern Analysis and Machine Intelligence},
  34(4):791--804, 2012.

\bibitem{mabaposa10}
J.~Mairal, F.~Bach, J.~Ponce, and G.~Sapiro.
\newblock Online learning for matrix factorization and sparse coding.
\newblock {\em Journal of Machine Learning Research}, 11:19--60, 2010.

\bibitem{masazi08}
J.~Mairal, F.~Bach, J.~Ponce, G.~Sapiro, and A.~Zisserman.
\newblock Discriminative learned dictionaries for local image analysis.
\newblock IMA Preprint Series 2212, University of Minnesota, 2008.

\bibitem{mapo10}
A.~Maurer and M.~Pontil.
\newblock {K}-dimensional coding schemes in {H}ilbert spaces.
\newblock {\em {IEEE} {T}ransactions on {I}nformation {T}heory},
  56(11):5839--5846, 2010.

\bibitem{megr12}
N.A. Mehta and A.G. Gray.
\newblock On the sample complexity of predictive sparse coding.
\newblock {\em arXiv:1202.4050}, 2012.

\bibitem{nasc16}
V.~Naumova and K.~Schnass.
\newblock Dictionary learning from incomplete data, {P}art~{I} algorithms.
\newblock {\em in preparation}, 2016.

\bibitem{pl07}
M.D. Plumbley.
\newblock Dictionary learning for $\ell_1$-exact sparse coding.
\newblock In M.E. Davies, C.J. James, and S.A. Abdallah, editors, {\em
  International Conference on Independent Component Analysis and Signal
  Separation}, volume 4666, pages 406--413. Springer, 2007.

\bibitem{rubrel10}
R.~Rubinstein, A.~Bruckstein, and M.~Elad.
\newblock Dictionaries for sparse representation modeling.
\newblock {\em Proceedings of the IEEE}, 98(6):1045--1057, 2010.

\bibitem{sc14}
K.~Schnass.
\newblock On the identifiability of overcomplete dictionaries via the
  minimisation principle underlying {K-SVD}.
\newblock {\em Applied Computational Harmonic Analysis}, 37(3):464--491, 2014.

\bibitem{sc14b}
K.~Schnass.
\newblock Local identification of overcomplete dictionaries.
\newblock {\em Journal of Machine Learning Research (arXiv:1401.6354)},
  16(Jun):1211--1242, 2015.

\bibitem{sc16}
K.~Schnass.
\newblock Sequential dictionary learning with model selection.
\newblock {\em in preparation}, 2016.

\bibitem{scva07}
K.~Schnass and P.~Vandergheynst.
\newblock Average performance analysis for thresholding.
\newblock {\em IEEE Signal Processing Letters}, 14(11):828--831, 2007.

\bibitem{sken10}
K.~Skretting and K.~Engan.
\newblock Recursive least squares dictionary learning algorithm.
\newblock {\em {IEEE} {T}ransactions on {S}ignal {P}rocessing},
  58(4):2121--2130, April 2010.

\bibitem{spwawr12}
D.~Spielman, H.~Wang, and J.~Wright.
\newblock Exact recovery of sparsely-used dictionaries.
\newblock In {\em COLT 2012 (arXiv:1206.5882)}, 2012.

\bibitem{suquwr15}
J.~Sun, Q.~Qu, and J.~Wright.
\newblock Complete dictionary recovery over the sphere.
\newblock In {\em ICML 2015 (arXiv:1504.06785)}, 2015.

\bibitem{tr08}
{J}.{A}. {T}ropp.
\newblock On the conditioning of random subdictionaries.
\newblock {\em Applied Computational Harmonic Analysis}, 25(1-24), 2008.

\bibitem{vamabr11}
D.~Vainsencher, S.~Mannor, and A.M. Bruckstein.
\newblock The sample complexity of dictionary learning.
\newblock {\em Journal of Machine Learning Research}, 12(3259-3281), 2011.

\bibitem{yablda09}
M.~Yaghoobi, T.~Blumensath, and M.E. Davies.
\newblock Dictionary learning for sparse approximations with the majorization
  method.
\newblock {\em {IEEE} {T}ransactions on {S}ignal {P}rocessing},
  57(6):2178--2191, June 2009.

\bibitem{zipe01}
M.~Zibulevsky and B.A. Pearlmutter.
\newblock Blind source separation by sparse decomposition in a signal
  dictionary.
\newblock {\em Neural Computations}, 13(4):863--882, 2001.

\end{thebibliography}
\bibliographystyle{plain}
\end{document}